\documentclass{article}

\usepackage[preprint]{neurips_2026}

\usepackage[utf8]{inputenc} %
\usepackage[T1]{fontenc}    %
\usepackage{url}            %
\usepackage{booktabs}       %
\usepackage{amsfonts}       %
\usepackage{nicefrac}       %
\usepackage{microtype}      %
\usepackage{xcolor}         %

\usepackage{graphicx}
\usepackage{amsmath, amsthm}
\usepackage{thm-restate}
\usepackage[most]{tcolorbox}
\usepackage{marginnote}
\usepackage{krish}

\usetikzlibrary{decorations.pathreplacing,calligraphy}
\allowdisplaybreaks

\DeclareMathOperator{\objo}{\mathsf{MDV}}
\DeclareMathOperator{\objt}{\mathsf{ADV}}
\DeclareMathOperator{\objtx}{\objt_{\calX}}
\DeclareMathOperator{\objmix}{\mathsf{MixDV}}
\DeclareMathOperator{\objmixx}{\mathsf{MixDV}_{\calX}}
\newcommand{\objos}{\objo^s}
\DeclareMathOperator{\objth}{\mathsf{WDDV}}
\DeclareMathOperator{\objf}{\mathsf{ADDV}}
\DeclareMathOperator{\DASQ}{\smash \calD^{\mathsf{SSQ}}}
\DeclareMathOperator{\VarASQ}{\mathsf{VarSSQ}}
\DeclareMathOperator{\MSE}{\mathsf{MSE}}
\newcommand{\ardo}{\scriptscriptstyle \downarrow}
\newcommand{\arup}{\scriptscriptstyle \uparrow}
\newcommand{\wiup}{w^{\arup}_i}
\newcommand{\widown}{w_i^{\ardo}}
\newcommand{\wup}[1]{w_{#1}^{\arup}}
\newcommand{\wdown}[1]{w_{#1}^{\ardo}}
\newcommand{\varasq}{\VarASQ}
\newcommand{\dasq}{\DASQ}

\newcommand{\var}{\Var}
\newcommand{\bwh}{\widehat{\bw}}
\newcommand{\buh}{\widehat{\bu}}

\algdef{SE}[SUBALG]{Indent}{EndIndent}{}{\algorithmicend\ }
\algtext*{Indent}
\algtext*{EndIndent}

\newcommand{\algcomment}[1]{\textcolor{gray}{$\triangleright$ #1}}

\title{Inner Product Aware Quantization: Provably Fast, Accurate, and Adaptive Algorithms}

\author{%
  Nathan White \hypersetup{hidelinks}\thanks{Equal Contribution} \\
  University of Pennsylvania\\
  \texttt{nathanlw@cis.upenn.edu} \\
  \And
  Krish Singal \hypersetup{hidelinks}\footnotemark[1] \\
  University of Pennsylvania \\
  \texttt{ksingal@seas.upenn.edu} \\
}

\begin{document}

\maketitle

\addtocontents{toc}{\protect\setcounter{tocdepth}{-1}} %

\begin{abstract}
    Quantization is a fundamental tool used to compress datasets, neural network weights, and memory usage in a range of computational tasks. Many downstream applications of vector quantization perform inner products with arbitrary inputs. This motivates the study of \emph{inner product aware} quantization schemes that approximately preserve inner products with unseen vectors -- in contrast to simply minimizing the mean-squared error. 
    
    In this work, we formulate objectives that capture natural desiderata and develop \emph{adaptive} and \emph{unbiased} quantization methods that approximately preserve inner products with worst-case and average-case inputs. An analysis of these objectives shows a tight connection with the well-studied notion of Adaptive Stochastic Quantization (ASQ).
    
    We develop provably fast exact and approximate algorithms for our objectives. Our theoretical results inspire efficient practical algorithms that perform well across a variety of workload distributions. They also lead to practical algorithms for standard ASQ which are 2-10$\times$ faster than prior state-of-the-art methods while maintaining quality. These theoretical and empirical results contribute towards making adaptive quantization techniques more efficient and tractable in practical settings. 
    
\end{abstract}

\section{Introduction}
Vector quantization is an incredibly important tool that is central to space and runtime optimization in a wide variety of computational and machine learning applications.
For example, quantization is used in dataset compression for vector search \cite{ge2013optimized, gao2025practical}, compression of large model weights and key-value caches \cite{sheng2023flexgen}, quantization-aware training \cite{micikevicius2017mixed}, and post-training quantization \cite{frantar2023optq, jeon2023frustratingly}.

Formally, we define vector quantization as follows.
Consider $w \in \R^d$ and let $Q \subset \R$ be a set of \emph{quantization values} such that $|Q| = s \ll d$.
Then, the vector $w$ can be \emph{quantized} via some \emph{rounding distribution} $\calD$ with $\text{Supp}(\calD) \subseteq Q^d$.
In this work, we consider \textit{unbiased} and \textit{adaptive} quantization schemes. A quantization scheme is unbiased if $\E_{\bwh \sim \calD}[\bwh] = w$,\footnote{We use boldface to denote random variables.} while an adaptive scheme is one where both $Q$ and $\calD$ may depend on $w$. In full generality, we wish to jointly optimize over the tuple $(Q, \calD)$ according to some suitable objective function. 

Much prior work on quantization considers schemes which are either non-adaptive/weakly adaptive or biased; for example, schemes such as QSGD \cite{alistarh2017qsgd} and NUQSGD \cite{ramezani2021nuqsgd} only use superficial properties of the vectors such as their norm or aspect ratio.
Other works \cite{fu2020don, faghri2020adaptive} are tailored to vectors which come from common, pre-specified distributions, but do not adapt to the specific input vector. For example, there is evidence that important and relevant data in ML applications follow LogNormal \cite{chmiel2020neural}, Normal \cite{banner2019post}, or sub-Weibull distributions \cite{vladimirova2018bayesian}. On the other hand, techniques such as Round-to-Nearest (RTN), which are biased and non-adaptive, are known to be outperformed by adaptive methods on post-training quantization \cite{nagel2020up}.

A line of work on \emph{adaptive stochastic quantization} (ASQ) in \cite{ben2024optimal, ZLKALZ17, faghri2020adaptive} studies adaptive and unbiased quantization schemes, and notes substantial improvements over other types of quantization (see Figure 1 of \cite{ben2024optimal}).
In ASQ, the goal is to construct a quantization set which minimizes the mean-squared error
\[\MSE(\bwh,w) \triangleq \E \left[~\norm{\bwh - w}_2^2 \right].\]
The rounding distribution from which $\bwh$ is sampled is simple and natural: round each $w_i$ independently to either the closest quantization point larger than $w_i$ or smaller than $w_i$ in the unique unbiased manner. We refer to this rounding distribution as \textit{standard stochastic quantization}, and denote the resulting distribution $\dasq(w,Q)$;\footnote{When clear from context, we simply write $\dasq$} we give a formal definition in Section \ref{sec: prelims}.
Standard stochastic quantization is a natural rounding distribution choice, as for each coordinate $i$, it minimizes $\var[\bwh_i]$ across all unbiased distributions.\footnote{See Theorem \ref{thm:objective-three-optimal-rounding} for a proof of this fact.}

While MSE is a natural objective to optimize, many downstream applications of quantization rely on more than simply low $\ell_2$ distance.
For instance, many modern applications take inner products of quantized vectors with other, potentially arbitrary vectors (as is the case in both neural network weight quantization and vector search).
Thus, the success of quantization in these applications heavily depends on its ability to preserve inner products well.

In this work, we introduce and study notions of \textit{inner product aware} quantization objectives which aim to approximately preserve inner products.
Because we consider unbiased quantization schemes where $\E[\bwh]=w$, linearity of expectation gives that for any vector $x\in \mathbb{R}^{d}$, $\E[\langle \bwh, x \rangle] = \langle w, x \rangle$.
So, minimizing the expected (squared) error of $\langle \bwh, x \rangle$ reduces to minimizing $\var[\langle \bwh,x \rangle]$. In our first objective, we minimize the variance over the worst-case choice of $x$.
\begin{definition}[Maximum Directional Variance ($\objo$)]
    For a vector $w\in \mathbb{R}^{d}$ and target quantization set size $s \in \N$, we define
    \[\objo(w,s) \triangleq \min_{Q \subset \R: |Q| \leq s} \max_{x\in \mathbb{R}^{d}: \|x\|_2 \leq 1} \var_{\bwh\sim \dasq(w,Q)}[\langle \bwh,x \rangle]\]
\end{definition}

In practice, input vectors $x$ are often not worst-case, and instead come from some known (or estimated) distribution.
As such, we also consider optimizing to minimize the average variance over some (known) distribution of vectors $x$.
\begin{definition}[Average Directional Variance ($\objt$)]
    For a vector $w\in \mathbb{R}^{d}$, target quantization set size $s \in \N$, and input distribution $\calX$ over $\mathbb{R}^{d}$, define
    \begin{align}
        \objt_{\calX}(w, s) \triangleq \min_{Q \subset \R: |Q| \leq s} \E_{\bx \sim \calX} \left[\var_{\bwh\sim \dasq(w,Q)} \left[ \la \bwh, \bx \ra \right] \right]
    \end{align}
\end{definition}
We note that these objectives fix standard stochastic quantization as the rounding distribution and optimize over quantization sets.
While this is a natural and standard distribution choice, one may wonder if a better choice of rounding distribution could lead to lower inner product variance.
We show that this is unlikely to be a fruitful endeavor: for $\objo$, we prove that standard stochastic quantization is in fact the optimal rounding distribution (Theorem \ref{thm:objective-three-optimal-rounding}) and for $\objt$ we show that it is NP-Hard to compute the optimal distribution (Theorem \ref{thm:addv-np-hard}).

For both of these objectives, we design algorithms which are provably fast and obtain a solution within a $(1+\varepsilon)$ factor of the optimal objective cost, for a user-specified $\varepsilon>0$. See Section \ref{sec: theoretical-results} for a discussion of our algorithmic results.

We also implement and evaluate practical versions of these algorithms, and show they outperform previous algorithms for adaptive quantization. In particular, due to a close connection between $\objt$ and traditional ASQ (see Section \ref{sec: prelims}), we employ our techniques to develop algorithms that significantly outperform the current state-of-the-art \texttt{QUIVER} algorithm \cite{ben2024optimal}; see Figure \ref{fig:intro-plot}. In all plots and tables, we denote our novel algorithms with an asterisk$^{\ast}$. 

\begin{figure}[ht]
    \centering
    \includegraphics[width=0.49\linewidth]{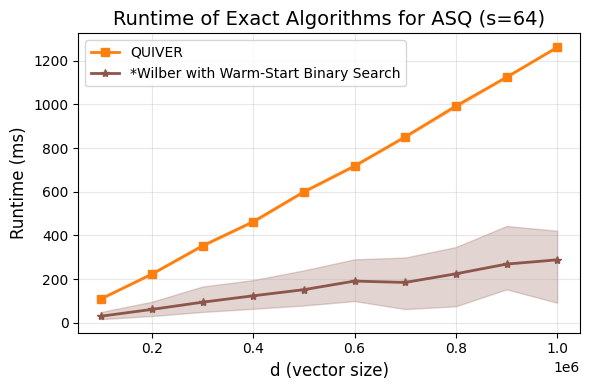}
    \includegraphics[width=0.49\linewidth]{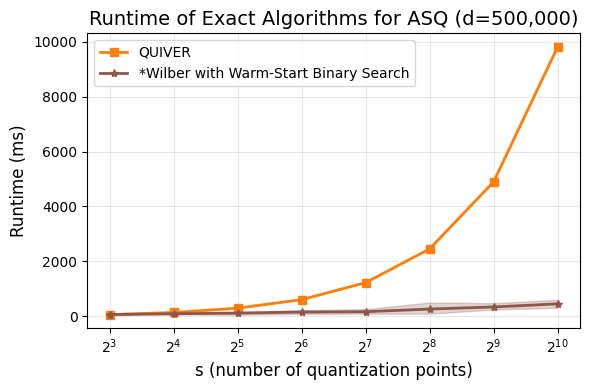}
    \caption{Figure comparing the runtime of the previous state-of-the-art algorithm for traditional ASQ with our faster algorithm (\texttt{Wilber} with Warm-Start Binary Search), on vectors drawn from $\text{LogNormal}(0,1)$.
        On the left, we fix $s=64$ and vary the vector size $d$, while on the right we fix $d=500,000$ and vary $s$. The shaded regions show the 10-90\% range of runtimes across sampled vectors.
    Figure \ref{fig:exact_runtime} is similar, with additional methods compared as well.}
    \label{fig:intro-plot}
\end{figure}

Since the exact algorithm is a core subroutine of the approximate \texttt{QUIVER} algorithm of \cite{ben2024optimal}, we immediately obtain speedups for this as well. More generally, any downstream task using \texttt{QUIVER} as a subroutine (e.g.,~\cite{ben2025better}) can be sped up using our implementation. See Section \ref{sec: empirical-results} for the details of our empirically fast algorithms.

Because adaptive methods come with an inherent trade-off (higher quality quantization at the cost of slower pre-processing time), one of the main impacts of our work is the use of theoretical insights to advance adaptive techniques towards efficiency in practical settings.

\subsection{Preliminaries}
\label{sec: prelims}

\paragraph{Standard Stochastic Quantization.}
For a given vector $w\in \mathbb{R}^{d}$ and quantization set $Q$, define the \textit{standard stochastic quantization} rounding distribution $\dasq(w,Q)$ to be the distribution which independently rounds each $w_i$ such that for $\bwh\sim \dasq(w,Q)$,
\[
\bwh_i =
\begin{cases}
    \wiup & \text{w.p.~} (\wiup - w_i)/(\wiup - \widown) \\
    \widown & \text{w.p.~} (w_i - \widown)/(\wiup - \widown)
\end{cases}
\]
where $\wiup := \min \{q\in Q : q\geq w_i\}$ and $\widown(Q) := \max \{q\in Q : q\leq w_i\}$ are the values of $Q$ closest to $w_i$ from above and below, respectively.
Note that $\E[\bwh] = w$ and $\var[\bwh_i] = (\wiup - w_i)(w_i - \widown)$ for all $i\in [d]$.
For notational ease, we often write $\smash \VarASQ(w_i, Q) := \Var[\bwh_i] = (\wiup - w_i)(w_i - \widown)$.

\noindent\paragraph{Properties of $\objo$ and $\objt$.} Here, we make some simple but useful observations about the structure of the $\objo$ and $\objt$ objectives. We first introduce notational shorthands for the costs of each objective for a fixed quantization set. We write 
\begin{align*}
    \objo(w, Q) \triangleq \max_{x\in \mathbb{R}^{d}: \|x\|_2 \leq 1} \var_{\bwh\sim \dasq(w,Q)}[\langle \bwh,x \rangle] \quad \text{ and } \quad \objt_{\calX}(w,Q) \triangleq \E_{\bx\sim \calX} \left[\var_{\bwh\sim \dasq(w,Q)}[\langle \bwh,\bx \rangle] \right]
\end{align*}
$\objo$ has a simple combinatorial structure; namely,
\begin{align*}
    \objo(w, Q) &= \max_{x\in \mathbb{R}^{d}: \|x\|_2 \leq 1} \left[ \var_{\bwh\sim \dasq(w,Q)}[\langle \bwh,x \rangle] \right] \\
    &= \max_{x\in \mathbb{R}^{d}: \|x\|_2 \leq 1} \left[ \sum_{i=1}^d x_i^2 \var_{\bwh\sim \dasq(w,Q)}[\bwh_i] \right] \tag{Independence of $\dasq$} \\
    &= \max_{j\in [d]} \varasq(w_j, Q).
\end{align*}
So, the worst-case input vector $x$ is the standard unit basis vector $e_i$ where $i$ is such that $\smash \VarASQ[w_i]$ is maximized. For $\objt$, consider a distribution $\calX$ supported on $\smash \R^{d}$ and let $\lambda_i\triangleq \smash \E_{\bx\sim \calX} [\bx_i^2]$.
Then,
\begin{align*}
    \objt_{\calX}(w, Q) = \min_{Q \subset \R: |Q| \leq s} \E_{\bx \sim \calX} \left[\Var_{\bwh\sim \dasq(w,Q)} \left[ \la \bwh, \bx \ra \right] \right] &= \min_{Q \subset \R: |Q| \leq s} \E_{\bx \sim \calX} \left[ \sum_{i \in [d]} \bx_i^2 \cdot \VarASQ[w_i] \right] \\
    &= \min_{Q \subset \R: |Q| \leq s} \sum_{i \in [d]} \lambda_i \cdot \VarASQ[w_i]
\end{align*}
which is exactly the weighted MSE.
This connection between quantization for average-case inner product preservation and that of the (weighted) $\MSE$ objective also provides one possible theoretical explanation for the empirical success of Adaptive Stochastic Quantization for downstream applications in ML and vector search.

\section{Background and Motivation} \label{sec: motivation}

\begin{figure}[ht]
     \centering
     \includegraphics[width=0.49\linewidth]{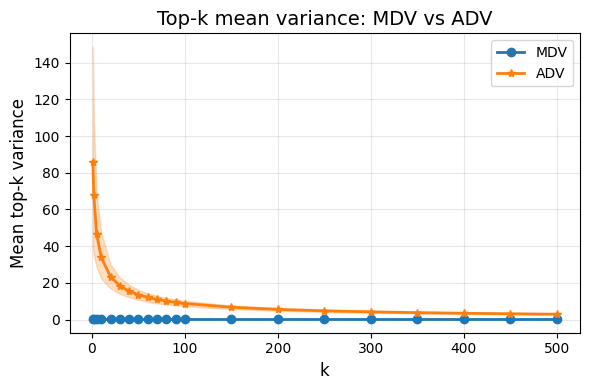} \includegraphics[width=0.49\linewidth]{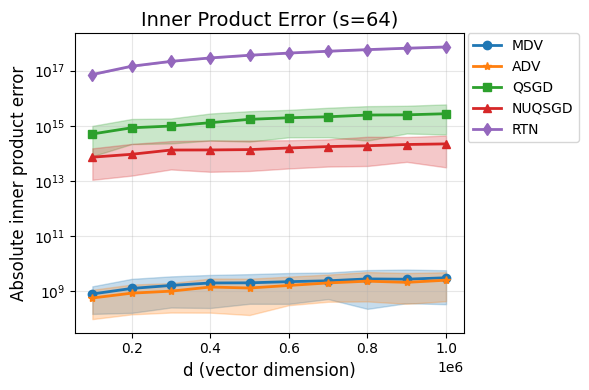}
     \caption{(a) Average variance $\VarASQ[w_i]$ of the worst $k$ coordinates on vectors $w$ drawn from $\text{LogNormal(0,1)}$  over 100 trials.
         (b) Average absolute inner product error on vectors $w$ drawn from a mixture of $16$ Gaussians with variance 10 and means separated by $10^5$.
     In both plots, the shaded regions show the 10--90\% range of average variances}

     \label{fig:motivation-plot}
\end{figure}

We now briefly discuss the merits of unbiased and adaptive quantization methods.
Methods such as NUQSGD and QSGD \cite{ramezani2021nuqsgd, alistarh2017qsgd} are unbiased but only very weakly adaptive; they choose the quantization set $Q$ only using information about the range of the entries in $w$.
Round-to-Nearest is both non-adaptive and biased, and there is evidence \cite{nagel2020up} that post-training quantization using RTN is significantly outperformed by adaptive techniques. We note that \cite{zandieh2025turboquant} study quantization under the objective of worst-case inner product preservation. However, they work in the non-adaptive setting, where the quantization set cannot depend on the vector being quantized. 

Figure 1 in \cite{ben2024optimal} illustrates the superiority of unbiased and adaptive techniques when minimizing the mean-squared error. As noted in the introduction, the worst-case variance $\smash \var[\la \bwh, x \ra]$ is realized by the standard basis vector $e_i$ where $i$ is such that $\VarASQ[w_i]$ is largest. Figure \ref{fig:motivation-plot}(a) shows that, empirically, optimizing $\objo$ actually minimizes the worst variances beyond the single worst coordinate, when compared against a solution to $\objt$. Indeed, on at least the 500 worst coordinates of a 1,000,000-dimensional vector (i.e.~the worst $0.05\%$), the optimal quantization set for $\objo$ has smaller variance than that for $\objt$. Figure \ref{fig:motivation-plot}(b) further gives evidence for the benefits of using unbiased and adaptive methods for worst-case inner product preservation.

In Table \ref{table: vector-search} (see Appendix \ref{sec: missing-proofs}), we present preliminary evidence that our methods have improved tail performance over the popular technique of product quantization (PQ) for certain types of vector search (namely maximum inner product search).
While $\objt$ obtains recall comparable to PQ across both the average and tail events for $\ell_2$ search, it achieves significantly higher recall on the worst 0.1\% of queries tested for maximum inner product search. 
These results, while limited in scope, suggest that explicitly optimizing for inner product preservation results in better quantization performance, as PQ is not tailored to inner product preservation.

\section{Theoretical Results} \label{sec: theoretical-results}
\subsection{Optimizing the Rounding Distribution}
Our main objectives $\objo$ and $\objt$ use a fixed rounding distribution (namely, standard stochastic quantization).
In this section, we outline the theoretical justification for the use of this distribution. We define new objectives for the problem of optimizing the rounding distribution given a fixed quantization set. 
In particular, we state Theorem \ref{thm:objective-three-optimal-rounding}, which shows that standard stochastic quantization is optimal for minimizing worst-case inner product variance, and Theorem \ref{thm:addv-np-hard}, which shows it is NP-Hard to find the optimal distribution for minimizing average-case inner product variance.

Given a vector $w \in \R^d$ and quantization set $Q \subseteq \R$, we let $\Omega$ denote the set of rounding distributions $\calD$ such that $\text{Supp}(\calD) \subseteq Q^d$ and $\E_{\bwh \sim \calD}[\bwh] = w$.
As inner products are linear and $\calD \in \Omega$ is unbiased, it is easy to see that for any vector $x \in \R^d$ and $\calD\in \Omega$, $\E_{\bwh \sim \calD}[\la \bwh, x \ra] = \la w, x \ra$. We now introduce the new objectives.

\begin{definition}[$\objth$, Worst-Case Distributional Directional Variance] \label{def: wddv}
    For a vector $w\in \mathbb{R}^{d}$ and a set $Q\subset \mathbb{R}$, we define
    \[\objth(w,Q)=\argmin_{\calD\in \Omega}\max_{x\in \mathbb{R}^{d}}\var_{\bwh\sim \calD}[\langle \bwh,x \rangle].\]
\end{definition}

\begin{definition}[$\objf$, Average-Case Distributional Directional Variance] \label{def: addv}
    For a vector $w\in \mathbb{R}^{d}$, a set $Q\subset \mathbb{R}$, and an input distribution $\calX$ over $\mathbb{R}^{d}$, we define
    \[
        \objf_{\calX}(w,Q)=\argmin_{\calD\in \Omega}\E_{\bx\sim \calX} \left[\var_{\bwh\sim \calD}[\langle \bwh,\bx \rangle] \right]
    \]
\end{definition}

In Appendix \ref{appendix: wddv}, we prove Theorem \ref{thm:objective-three-optimal-rounding} -- that standard stochastic quantization is optimal for $\objth$. In contrast, in Appendix \ref{appendix: addv}, we prove Theorem \ref{thm:addv-np-hard} -- that finding the optimal rounding distribution for $\objf$ is NP-Hard.
\begin{restatable*}{theorem}{ASQoptimalthm} \label{thm:objective-three-optimal-rounding}
    For every vector $w \in \R^d$ and quantization set $Q \subset \R$, the rounding distribution $\DASQ(w, Q)$ minimizes $\objth(w, Q)$.
\end{restatable*}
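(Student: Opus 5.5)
We interpret the inner maximum in Definition~\ref{def: wddv} over unit vectors $\|x\|_2 \le 1$, as in $\objo$; otherwise every objective value is infinite or zero. We also assume $\Omega \neq \emptyset$, i.e.\ every $w_i$ lies in $[\min Q, \max Q]$, so that $\wiup$ and $\widown$ are defined. The plan has three steps: a lower bound valid for every $\calD \in \Omega$, a coordinatewise variance inequality, and a check that $\DASQ(w,Q)$ attains the bound.

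\textbf{Step 1: lower bound by diagonal variances.} Fix $\calD \in \Omega$ and let $\Sigma$ be the covariance matrix of $\bwh \sim \calD$. Then $\var[\la \bwh, x\ra] = x^\top \Sigma x$, so the worst-case value equals $\lambda_{\max}(\Sigma)$. Taking $x = e_i$ gives
\[
\max_{\|x\|_2\le 1} \var_{\bwh\sim\calD}[\la \bwh, x\ra] \;\ge\; \max_{i\in[d]} \var_{\bwh\sim\calD}[\bwh_i].
\]
It therefore suffices to lower-bound each marginal variance separately.

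\textbf{Step 2: coordinatewise inequality (the key step).} Fix $i$ and let $Y = \bwh_i$. Then $Y$ is supported on $Q$ and $\E[Y] = w_i$. Write $u = \wiup$ and $l = \widown$. By definition of $u$ and $l$, no point of $Q$ lies strictly between $l$ and $u$, so pointwise $(Y-l)(Y-u) \ge 0$. Taking expectations gives
\[
\E[Y^2] - (l+u)w_i + lu \ge 0 .
\]
Hence
\[
\var[Y] = \E[Y^2] - w_i^2 \ge (l+u)w_i - lu - w_i^2 = (u - w_i)(w_i - l) = \VarASQ(w_i, Q).
\]
If $w_i \in Q$, then $u = l = w_i$ and the bound is trivially $0$. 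This step is the crux; the only thing to be careful about is the gap property of $Q$ between $l$ and $u$, which holds by the definition of the nearest neighbors.

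\textbf{Step 3: $\DASQ$ attains the bound.} Under $\DASQ(w,Q)$ the coordinates are rounded independently, so $\Sigma$ is diagonal with entries $\VarASQ(w_i,Q)$. As computed in Section~\ref{sec: prelims}, its worst-case directional variance is therefore exactly $\max_j \VarASQ(w_j,Q)$. Combining Steps 1 and 2, every $\calD\in\Omega$ has worst-case directional variance at least $\max_i \VarASQ(w_i,Q)$. Since $\DASQ(w,Q) \in \Omega$ achieves this value, it is a minimizer of $\objth(w,Q)$.

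As a byproduct, Step 2 also establishes the earlier claim in the introduction that standard stochastic quantization minimizes each $\var[\bwh_i]$ among all unbiased distributions.
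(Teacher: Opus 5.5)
Your proof is correct. Its outer structure is the same as the paper's: lower-bound the worst-case directional variance of an arbitrary $\calD\in\Omega$ by the variance in a basis direction $e_i$, show coordinatewise that no unbiased distribution on $Q$ beats two-point rounding, and note that the product distribution $\DASQ(w,Q)$ attains $\max_i \VarASQ(w_i,Q)$.

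Where you differ is the key one-dimensional inequality, which is the paper's Lemma~\ref{lem:min-variance-independent-rounding}. The paper proves it in two stages:
\begin{itemize}
\item It collapses the mass of $\calD$ lying below and above $w_i$ onto the two centers of mass, and uses Cauchy--Schwarz to show this does not increase the variance.
\item It then observes that these centers lie outside $[\widown,\wiup]$, so the resulting two-point unbiased distribution has variance at least $(\wiup-w_i)(w_i-\widown)$.
\end{itemize}
You instead use the gap property: $Q$ has no points strictly between $\widown$ and $\wiup$, so $(Y-\widown)(Y-\wiup)\ge 0$ pointwise, and taking expectations finishes in one line.

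Your route is shorter. It needs no finiteness of $Q$ or of the support, only a finite second moment. It also avoids the small edge cases in the paper's construction. An atom at $w_i$ itself lies in both the lower and upper halves, so $\calD'$ need not have total mass $1$. One half can be empty, making its center of mass undefined. Both cases occur only when $w_i\in Q$, where the bound is trivially $0$, but the paper does not address them.

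The paper's argument is more constructive. It gives the geometric picture (merge mass onto two points, then spreading those points apart only increases variance) that explains why the nearest neighbours are the right rounding targets. Logically, nothing is lost with your version.

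Your remark that the inner maximum should be taken over $\|x\|_2\le 1$ is also apt. The definition of $\objth$ omits this constraint, but the paper's own proof uses it.
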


\begin{restatable*}{theorem}{ADDVNPHard} \label{thm:addv-np-hard}
    There exists a vector $w\in \mathbb{R}^{d}$, quantization set $Q \subset \mathbb{R}$, and Gaussian input distribution $\calX$ such that computing $\objf_{\calX}(w, Q)$ is NP-Hard.
\end{restatable*}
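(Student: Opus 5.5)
The plan is to reduce from \textsc{Max-Cut}, reading the statement as saying that the problem is NP-Hard over a family of instances. The first step is to rewrite the $\objf$ objective as a linear function of the second-moment matrix of $\bwh$. Take $\bx\sim\calX=\mathcal N(0,M)$ with $M\succeq 0$. Conditioning on $\bx$ and using independence of $\bx$ and $\bwh$,
\[
\E_{\bx}\bigl[\var_{\bwh\sim\calD}[\la \bwh,\bx\ra]\bigr]=\E_{\bx}\bigl[\bx^\top \Sigma_{\calD}\bx\bigr]=\la M,\Sigma_{\calD}\ra ,
\]
where $\Sigma_{\calD}$ is the covariance of $\bwh$. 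The problem therefore becomes minimizing a linear functional over the set $\{\Sigma_{\calD}:\calD\in\Omega\}$ of achievable covariance matrices.

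Next I would fix the instance family $w=0\in\R^d$ and $Q=\{-1,1\}$. Then $\Omega$ is the set of distributions on $\{\pm1\}^d$ with mean zero, and $\Sigma_{\calD}=\E[\bwh\bwh^\top]$. The key claim is that the achievable set is exactly the cut polytope $\mathrm{conv}\{vv^\top: v\in\{\pm1\}^d\}$. One inclusion is immediate. For the other, take any distribution $\mu$ on $\{\pm1\}^d$ and form its symmetrization: sample $\bv\sim\mu$ and output $\pm\bv$ with a fair sign. This distribution has mean $0$, so it lies in $\Omega$, and it has the same second-moment matrix as $\mu$. Consequently
\[
\min_{\calD\in\Omega}\la M,\Sigma_{\calD}\ra=\min_{v\in\{\pm1\}^d} v^\top M v ,
\]
since a linear functional on a polytope is minimized at a vertex.

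For the reduction itself, let $G=([d],E)$ be a graph with adjacency matrix $A$, and set $M=A+d\,I$. This is PSD because $\lambda_{\min}(A)\ge -d$, so $\calX=\mathcal N(0,M)$ is a valid Gaussian input distribution. For $v\in\{\pm1\}^d$,
\[
v^\top M v=d^2+2\sum_{\{i,j\}\in E}v_iv_j=d^2+2|E|-4\,\mathrm{cut}(v).
\]
Hence the optimal $\objf$ value equals $d^2+2|E|-4\,\mathrm{MaxCut}(G)$, and the value computed from an optimal distribution immediately yields the size of a maximum cut. If the output is required to be the distribution itself, I would add a second argument. Any optimal $\calD$ is supported only on minimizers of $v^\top M v$, because the objective is an average of $v^\top M v$ over the support. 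Therefore any sample, or any support element, of an optimal distribution is a maximum cut.

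I expect the main obstacle to be formalizing what ``computing $\objf_{\calX}(w,Q)$'' means, since it is an $\argmin$ over distributions that may have exponential support. I would handle this by proving hardness of the associated optimal value, which any reasonable representation of an optimal distribution lets one evaluate, together with the support observation above. The symmetrization step is the only place the unbiasedness constraint enters, and I would state it carefully as a lemma.
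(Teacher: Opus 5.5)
Your proposal is correct and follows essentially the same route as the paper. Both use the instance family $w=0$, $Q=\{-1,1\}$, $\mathcal{X}=\mathcal{N}(0,M)$ with $M$ a diagonally shifted adjacency matrix, the symmetrized optimal distribution (equal mass on $\pm v$ for a minimizer $v$ of $v^\top M v$), and the observation that every optimal distribution is supported on such minimizers; the only differences, namely the cut-polytope framing via $\langle M,\Sigma_{\mathcal{D}}\rangle$ and the shift by $dI$ instead of the paper's Gershgorin-based $-yI$, are cosmetic.
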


The proof of Theorem  \ref{thm:objective-three-optimal-rounding} proceeds by transforming an arbitrary distribution into $\dasq$ and arguing that this transformation cannot have increased the worst-case variance.
Theorem \ref{thm:addv-np-hard} follows by a reduction from Maximum Cut on unweighted graphs.
These results further motivate the use of standard stochastic quantization as the fixed rounding distribution in $\objo$ and $\objt$.

\subsection{Algorithms for $\objt$}

\subsubsection{Exact Algorithms}

In the recent literature, \cite{ZLKALZ17} first defines the standard ASQ problem and presents a simple dynamic programming algorithm with runtime $O(d^2 s)$ and space $O(d^2)$. \cite{ben2024optimal} then presented an algorithm named \texttt{QUIVER}, which couples the dynamic programming solution with efficient matrix-searching techniques to solve the standard ASQ problem. This improved algorithm has runtime $O(d \log d + d s)$ and uses $O(ds)$ space to output the optimal quantization set.

In a different direction, the survey paper of \cite{GKMNSS17} outlines many results from the $1$-dimensional $k$-means clustering literature.
Perhaps intuitively, as the $k$-means cost is a lower bound on the MSE of any quantization scheme,\footnote{In fact, if one allows biased quantization, the optimal scheme is to compute the $k$-means clustering and assign each element of $w$ to the nearest cluster.} $k$-means has many similarities with minimizing the MSE of an (unbiased) quantization scheme.
Indeed, the algorithms mentioned in \cite{GKMNSS17} can be easily modified to give algorithms for the standard ASQ problem. 
For example, in the context of 1-dimensional $k$-means, the dynamic programming and matrix-searching algorithm (Corollary \ref{cor: exact-weighted-mse} of \cite{ben2024optimal}) first appeared in \cite{Wu91}.

More generally, \cite{GKMNSS17} survey a spectrum of theoretical results for $k$-means that complete the state-of-the-art optimality profile in differing regimes of $k$ and $d$.
Importantly for our work, these algorithms can all be applied to any problem whose structure satisfies the Concave Monge property (Definition \ref{def: concave-monge-property}), including but not limited to $\objt$ and $1$-dimensional $k$-median clustering in Euclidean space.

A key primitive in these algorithms is finding the shortest path with length (i.e.~the number of edges) $k$ in a directed acyclic graph (DAG) whose weights satisfy the Concave Monge property.
For this problem, \cite{AST93} gives (i) an algorithm with runtime $O(d \sqrt{k \log d})$, and (ii) an algorithm with runtime $O(d \log \Delta)$ where $\Delta$ is the difference between the min and max weight.
When $k = \Omega(\log d)$, \cite{Schieber98} gives an algorithm with runtime $d \smash 2^{O(\sqrt{\log \log d \log k})}$. At a high level, these algorithms work by solving the following \emph{regularized} version of the problem:
\begin{align*}
    \min_{\{ i_1, \ldots, i_{\ell} \} \subseteq [d]} \sum_{j=1}^{\ell-1} C[i_j, i_{j+1}] + \tau \ell
\end{align*}
where $C[i,j]$ is the weight of edge $(i,j)$.
That is, we remove the hard constraint of length $k$ edges, and instead add a penalty term to the cost function of $\tau$ per edge.
By iterating through values of $\tau$ appropriately, we can then solve the unregularized (i.e. length constrained) variant.
In solving this regularized problem, \cite{AST93,Schieber98} use ideas and results from \cite{Wilber98}.

These algorithms can be directly applied to $\objt$ since its structure is Concave Monge (see Lemma \ref{lem: C-totally-monotone}).
As we show in Section \ref{sec: empirical-results}, an out-of-the-box implementation by \cite{GKMNSS17} of algorithm (ii) from above outperforms \texttt{QUIVER} from \cite{ben2024optimal}.

\subsubsection{Approximation Algorithms}

An additional result of \cite{ben2024optimal} is a bi-criteria approximation algorithm for the \emph{unweighted} MSE objective. In particular, their algorithm returns a quantization set of size $2s-2$ whose quality is an additive approximation to that of the optimal. It is easy to see, following their analysis, that the same algorithm, analyzed under the \emph{weighted} MSE objective results in the following weak additive approximation guarantee.

\begin{corollary}[Lemma 6.1 of \cite{ben2024optimal}]\label{lem: additive-approx-weighted-mse-main}
    There exists an algorithm such that for any input distribution $\calX$ and $m, s \in \N$, it has runtime $O(d + ms)$ and returns a quantization set $Q$ such that $|Q| = 2s-2$ and $\objt_{\calX}(w, Q) \leq \objt_{\calX}(w, s) + \smash \sum_{i=1}^{d} \lambda_i \cdot \Delta^2/m^2$. Here, $\Delta := \max_{i, j \in [d]} w_i - w_j$.
\end{corollary}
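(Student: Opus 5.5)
We follow the discretization argument behind Lemma~6.1 of \cite{ben2024optimal}, carrying the weights $\lambda_i$ through. Write $w_{\min} = \min_i w_i$ and $w_{\max} = \max_i w_i$, so that $\Delta = w_{\max}-w_{\min}$. Let $G$ be the uniform grid of $m+1$ points $w_{\min} + j\Delta/m$ for $j=0,\dots,m$. The algorithm has three steps.

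\textbf{Step 1 (preprocessing).} In $O(d)$ time, bucket each coordinate $w_i$ into the grid cell containing it. For each cell we accumulate the sums $\sum \lambda_i$, $\sum \lambda_i w_i$ and $\sum \lambda_i w_i^2$ over the coordinates in that cell. For any $a\le b$, the weighted cost
\[
\sum_{i:\, a\le w_i\le b}\lambda_i (b-w_i)(w_i-a)
\]
expands as a quadratic in $w_i$. Hence it can be evaluated in $O(1)$ from prefix sums of these three quantities whenever $a$ and $b$ are grid points.

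\textbf{Step 2 (optimization over the grid).} Solve the weighted problem with quantization values restricted to $G$: choose $s$ points of $G$, including both endpoints, to minimize the weighted $\varasq$ cost. This is the same concave-Monge shortest path / dynamic program as Corollary~\ref{cor: exact-weighted-mse}, except that it has $m$ nodes instead of $d$. With SMAWK-style matrix searching it runs in $O(ms)$ time, because the cost matrix between grid points still satisfies the Concave Monge property (as in Lemma~\ref{lem: C-totally-monotone}). Call the resulting grid set $Q_G$.

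\textbf{Step 3 (the $2s-2$ output set and the error bound).} The algorithm outputs $Q_G$ augmented so that it dominates the rounding of an optimal set. We compare against an optimal set $Q^*=\{q_1<\dots<q_s\}$, where necessarily $q_1 = w_{\min}$ and $q_s = w_{\max}$. Each interior point $q_j$ is replaced by the two grid points $\lfloor q_j\rfloor_G$ and $\lceil q_j\rceil_G$ bracketing it. This gives a set $Q'\subseteq G$ with $|Q'|\le 2(s-2)+2 = 2s-2$.

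We claim $\varasq(w_i,Q')\le \varasq(w_i,Q^*) + \Delta^2/m^2$ for every $i$. There are two cases.
\begin{itemize}
\item If $w_i$ lies in a grid cell that contains some $q_j$, then both cell endpoints are in $Q'$. Hence $\varasq(w_i,Q')\le (\Delta/m)^2/4\le \Delta^2/m^2$.
\item Otherwise, the neighbours of $w_i$ in $Q'$ are at least as tight as its neighbours in $Q^*$: moving $q_j$ to its grid neighbour on the side facing $w_i$ only brings it closer. Since $\varasq$ is monotone in each neighbour's distance, $\varasq(w_i,Q')\le\varasq(w_i,Q^*)$.
\end{itemize}
Summing with weights $\lambda_i$ gives
\[
\objt_{\calX}(w,Q')\le \objt_{\calX}(w,s)+\sum_{i=1}^d\lambda_i\,\Delta^2/m^2 .
\]

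To make this algorithmic, the optimization in Step 2 is run with budget $2s-2$ over $G$. The optimum over grid sets of size $2s-2$ is at most the cost of $Q'$, so the output inherits the bound. The runtime is $O(m(2s-2)) = O(ms)$, for a total of $O(d+ms)$.

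\textbf{Main obstacle.} The delicate step is the case analysis in Step 3. The worry is whether the bracketing points of two different $q_j$ could interleave and loosen some coordinate's neighbours. Monotonicity settles it: adding points to $Q'$ never increases any $\varasq(w_i,\cdot)$. So it suffices to check, for each $w_i$, the two nearest points of $Q^*$ together with their grid brackets, which is exactly the two-case argument above.
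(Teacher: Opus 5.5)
Your proposal is correct and takes the same route as the paper. The paper gives no argument beyond noting that the grid-discretization analysis of Lemma~6.1 of \cite{ben2024optimal} goes through with the weights $\lambda_i$ carried along. That is what you do: bracket each interior $q_j$ by grid points, compare variances coordinatewise, then solve exactly over the $m+1$ grid points with budget $2s-2$ via the concave-Monge DP in $O(d+ms)$ time.

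One small fix: your first case should require $q_j$ to lie in the \emph{interior} of $w_i$'s grid cell. A $q_j$ that is itself a grid point (e.g.\ $q_1=w_{\min}$) contributes only one point to $Q'$, so both cell endpoints need not be present. Such $w_i$ are already handled by your second case.
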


We develop approximation algorithms for optimizing $\objt_{\calX}$ (weighted MSE) with substantially stronger guaranteed approximation ratio than that of Lemma \ref{lem: additive-approx-weighted-mse-main}. Furthermore, our algorithms no longer provide bi-criteria approximation. We first give an $s$-approximation,\footnote{This first approximation algorithm underpins our empirically fast algorithms (which have similar provable approximation guarantees; see Section \ref{subsec: asq-approx-algs})} then improve it to a $(1+\epsilon)$-approximation. 

\begin{restatable*}{theorem}{ADVApproxAlg} \label{thm: adv-approx-alg}
    There exists an algorithm that for any given vector $w \in \R^d$, target quantization set size $s \in \N$, input distribution $\calX$, and $\epsilon > 0$, returns a quantization set $Q$ such that $|Q| = s$ and $\objt_{\calX}(w, Q) \leq (1+\epsilon) \cdot \objt_{\calX}(w, s)$. The runtime of the algorithm is $O(d \log( d/\epsilon) )$.
\end{restatable*}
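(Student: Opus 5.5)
We reduce $\objt_{\calX}$ to a weighted one-dimensional partition problem and then bootstrap a crude approximation into a $(1+\epsilon)$-approximation by a geometric search over a penalty parameter.

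\emph{Reduction.} By the preliminaries, $\objt_{\calX}(w,Q)=\sum_i \lambda_i \VarASQ(w_i,Q)$. We sort $w$ in $O(d\log d)$ time and compute prefix sums of $\lambda_i$, $\lambda_i w_i$ and $\lambda_i w_i^2$. We may assume the optimal $Q\subseteq\{w_1,\dots,w_d\}$, since moving a quantization point to an adjacent $w_j$ does not increase the weighted cost: the cost is concave in each point between consecutive data values. The cost then becomes a shortest path of exactly $s-1$ edges in the DAG on sorted indices, with edge weight
\[
C[a,b]=\sum_{a<i<b}\lambda_i (w_b-w_i)(w_i-w_a).
\]
Each $C[a,b]$ is computable in $O(1)$ from the prefix sums, and $C$ is Concave Monge (Lemma \ref{lem: C-totally-monotone}). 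The path must start at $w_1$ and end at $w_d$.

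\emph{Regularized solver.} For a penalty $\tau\ge 0$, the problem $\min \sum C[i_j,i_{j+1}]+\tau\ell$ is solved in $O(d)$ time by Wilber's algorithm. Because the cost is Concave Monge, the optimal $k$-edge cost $F(k)$ is convex in $k$, so the path length returned by the regularized solver is monotone in $\tau$. We then binary search over $\tau$ for a value at which the solver returns a path with exactly $s-1$ edges, or two breakpoint paths that can be combined into one (the standard AST93 splicing step, $O(d)$). This search costs $O(d)$ per iteration.

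\emph{Bounding the number of iterations.} Exact search would need $\log\Delta$ iterations, where $\Delta$ depends on the magnitudes of the data. Instead we first compute an $s$-approximation $A$ with $\mathrm{OPT}\le A\le s\cdot \mathrm{OPT}$ in $O(d\log d)$ time. A candidate is a greedy or quantile-style choice of points whose error is bounded by comparing each optimal cell's contribution; this is the "first approximation algorithm" the paper mentions.

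Convexity of $F$ bounds the relevant $\tau$ range: $\tau^*\in[F(s-1)-F(s),\,F(s-2)-F(s-1)]$, and all marginal gains are at most $F(1)\le d\cdot\mathrm{OPT}$-scale quantities. We therefore restrict $\tau$ to the range $[\epsilon A/(ds),\,A]$ and search it on a geometric grid of ratio $(1+\epsilon)$. This takes $O(\log(d s/\epsilon))=O(\log(d/\epsilon))$ binary-search steps, each $O(d)$, for $O(d\log(d/\epsilon))$ total time.

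\emph{Error analysis (main obstacle).} Stopping at a $\tau$ within factor $(1+\epsilon)$ of the true breakpoint $\tau^*$ leaves a residual error. The regularized optimum with $\tau'$ gives a path of length $k'$ near $s-1$. If $k'\ne s-1$, we either truncate it or use the splice, and the loss must be bounded by $|\tau'-\tau^*|\cdot s\le \epsilon\tau^* s$. The key inequality to establish is $\tau^* s\lesssim \mathrm{OPT}$, or alternatively handle small $\tau^*$ additively against the lower end $\epsilon A/(ds)$ of the grid. Convexity of $F$ gives $(s-1)\tau^*\le F(0)-F(s-1)$, which is not directly bounded by $\mathrm{OPT}$.

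The first fallback I would try is convexity across the range $k\in[s/2,s]$, which gives $\tau^*\le 2F(s/2)/s$. This is still not $O(\mathrm{OPT})$, so I expect this step to require care. The likely remedy is to make the grid relative to $A/s$ rather than absolute, and to use exactly-$k$ splicing to absorb the rounding. Combining these would give total cost $\le(1+O(\epsilon))\mathrm{OPT}$; rescaling $\epsilon$ then yields the stated bound.
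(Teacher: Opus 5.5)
\textbf{Gap: the $(1+\epsilon)$ guarantee is never proved.} This is the step you flag yourself, and the route you sketch toward it is a dead end.

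The inequality $s\tau^*\lesssim\mathrm{OPT}$ is false in general. Take $d=s+1$: then $F(s)=0$, so any penalty at which an $(s-1)$-edge path is optimal satisfies $\tau\ge F(s-1)-F(s)=\mathrm{OPT}$. ``Truncating'' a longer path is not available either, since dropping quantization points can raise the cost arbitrarily. So everything rests on splicing two paths that are optimal at \emph{different} penalties, and you never analyze that.

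Your $s$-approximation is also left unspecified. The paper's is $v=\objmixx(w,s)\in[\mathrm{OPT}/s,\mathrm{OPT}]$, computed in $O(d\log d)$ by binary search over the sorted matrix $C$, using linear-time sorted-matrix selection and a greedy feasibility check.

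The paper then avoids the Lagrangian loss altogether by discretizing costs rather than $\tau$. It rounds every $C[i,j]$ up to a multiple of $\epsilon v/s$. This adds at most $(s-1)\epsilon v/s\le\epsilon\,\mathrm{OPT}$ to any $s$-point solution. It also makes every rounded optimum, and hence every breakpoint, a multiple of $\epsilon v/s$ in $[0,(1+\epsilon)sv]$. An \emph{exact} binary search over these $O(s^2/\epsilon)$ values, i.e.\ $O(\log(s/\epsilon))$ calls to the $O(d)$ solver, then solves the rounded instance exactly.

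\textbf{How your route could be completed.} The missing estimate is that the loss scales with the number of \emph{surplus} links, not with $s$. Suppose the search ends at adjacent grid points $\tau_{\mathrm{lo}}<\tau_{\mathrm{hi}}$, neither of which admits an optimal $(s-1)$-link path. Let $P_1$ be the minimum-length optimal path at $\tau_{\mathrm{lo}}$ ($k_1\ge s$ links) and $P_2$ the maximum-length optimal path at $\tau_{\mathrm{hi}}$ ($k_2\le s-2$ links). By convexity, every slope $\sigma_j=F(j-1)-F(j)$ with $k_2<j\le k_1$ lies in $(\tau_{\mathrm{lo}},\tau_{\mathrm{hi}})$.

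The quadrangle-inequality exchange lemma holds for \emph{any} two paths. It gives an $(s-1)$-link path $R_1$ and an $m$-link path, $m=k_1+k_2-s+1$, of total cost at most $F(k_1)+F(k_2)$. Therefore
\[
c(R_1)-F(s-1)\le\sum_{j=k_2+1}^{m}\sigma_j-\sum_{j=s}^{k_1}\sigma_j<(k_1-s+1)(\tau_{\mathrm{hi}}-\tau_{\mathrm{lo}}),
\]
while $(k_1-s+1)\tau_{\mathrm{lo}}<\sum_{j=s}^{k_1}\sigma_j\le F(s-1)=\mathrm{OPT}$.

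Hence your geometric grid of ratio $1+\epsilon$ already gives loss below $\epsilon\,\mathrm{OPT}$. The bottom cell $[0,\epsilon A/(ds)]$ gives loss below $d\cdot\epsilon A/(ds)\le\epsilon\,\mathrm{OPT}$, since $A\le s\,\mathrm{OPT}$.

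Completed this way, your route never alters $C$, which is a real advantage. The paper's rounded $\tilde C$ need not remain Concave Monge. Yet both the $O(d)$ solver and the convexity of the rounded optima, which the paper's $\tau$-search relies on, require that property.
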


The first $s$-approximation algorithm works by exactly solving a suitably defined objective $\objmix$ whose optimal value is an $s$-approximation to that of $\objt$. Crucially, the algorithm uses the fact that the objective has a sortedness (Definition \ref{def: sorted-matrix}) property, which allows for fast $k$-selection algorithms to be used (see Appendix \ref{sec: technical-prelims}).
The $(1+\epsilon)$-approximation algorithm solves a \emph{rounded} version of the regularized shortest path on DAG problem described earlier.
Importantly, it is \emph{warm-started} using the solution from the $s$-approximation algorithm, which allows for a much stronger bound on the search time for the Lagrangian multiplier $\tau$. See Appendix \ref{sec: adv} for details.

Along the way, we observe that the main ideas from solutions to $\objo$ and $\objt_{\calX}$ (in particular, building data-dependent coresets) can be used to improve the bi-criteria approximation algorithm for quantization under the \emph{unweighted} MSE objective. Our algorithm reports a quantization set of size $2s-2$ with objective value at most $(1+\epsilon)$ times optimal  in time $\smash O(d \log s+ s \sqrt{d/\epsilon} \log (d/\epsilon))$; see Theorem \ref{thm: obj2-2s-approx} in Appendix \ref{sec: approx-alg-MSE} for details.

\subsection{Algorithms for $\objo$}
An ideal algorithm for $\objo$ would efficiently return an optimal quantization set. Unfortunately, we prove in Lemma \ref{lem: mdv-irrational-solutions} that the optimal quantization set may contain irrational values (even when the input vector is integer), making such an algorithm impossible under standard finite bit representation (i.e.~floating point) of any precision. One must then settle for algorithms which are optimal up to some precision $\varepsilon$. 
\begin{restatable*}{theorem}{MDVApproxAlg} \label{thm:obj1-full-algo}
    There exists a randomized algorithm that for any given vector $w \in \R^d$, target quantization set size $s \in \N$, and $\epsilon > 0$, returns a quantization set $Q$ such that $|Q| = s$ and $\objo(w, Q) \leq (1+\epsilon) \cdot \objt(w, s)$. The runtime of the algorithm is $O(d \log( s/\epsilon) )$ with probability $0.99$.
\end{restatable*}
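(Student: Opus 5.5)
The plan is to compare the statement's right-hand side $\objt(w,s)$ with $\objo(w,s)$ and then approximate $\objo$ directly. I read $\objt$ with no subscript as the unweighted average directional variance, i.e.\ $\calX$ with $\lambda_i=\E[\bx_i^2]=1$ for all $i$ (for instance, the standard Gaussian), which is exactly the MSE. The goal is the chain
\[
\objo(w,Q)\le(1+\epsilon)\,\objo(w,s)\le(1+\epsilon)\,\objt(w,s).
\]
The second inequality is immediate from the formulas in Section~\ref{sec: prelims}. For any $Q$,
\[
\objo(w,Q)=\max_j \varasq(w_j,Q)\le \sum_j \varasq(w_j,Q)=\objt(w,Q),
\]
since every term is nonnegative. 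Taking $Q^\ast$ to be the minimizer for $\objt(w,s)$ gives $\objo(w,s)\le \objo(w,Q^\ast)\le\objt(w,s)$. The same argument works for any $\calX$ with all $\lambda_i\ge 1$. The remaining work is an algorithm achieving $\objo(w,Q)\le(1+\epsilon)\,\objo(w,s)$.

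\textbf{Feasibility oracle.} For a threshold $T$, I decide whether some $Q$ with $|Q|\le s$ has $\max_j \varasq(w_j,Q)\le T$ using a greedy sweep.
\begin{itemize}
\item Put the first point at $\min_i w_i$, since the extreme values must themselves lie in $Q$.
\item From the current point $q$, place the next point at the largest $q'$ such that every $w_i\in[q,q']$ satisfies $(q'-w_i)(w_i-q)\le T$.
\item Answer yes exactly when at most $s$ points are needed to reach $\max_i w_i$.
\end{itemize}
For a single $w_i$, the constraint is monotone in $q'$ and solvable in closed form. A standard exchange argument shows the greedy uses the minimum number of points. If the data are grouped into buckets ordered by value, each step only needs to scan forward, so the check runs in $O(d)$ time.

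\textbf{Search and the main obstacle.} The main obstacle is the runtime: sorting costs $d\log d$, but the target is $O(d\log(s/\epsilon))$.
\begin{itemize}
\item First I use randomized selection (quickselect-style median splitting to depth $\log s$) to partition $w$ into $O(s)$ value-ordered buckets in expected $O(d\log s)$ time, without sorting inside buckets.
\item The bucket boundaries give a crude quantization set. I would argue its $\objo$ value is within a $\mathrm{poly}(s)$ factor of the optimum. The lower bound comes from the fact that some gap between consecutive optimal points must cover enough of the spread, and the $T=\Delta^2/4$ scaling gives a matching crude estimate.
\item This brackets $\objo(w,s)$ in an interval $[L,U]$ with $U/L=\mathrm{poly}(s)$.
\item A geometric binary search over $T\in[L,U]$ with ratio $1+\epsilon$ then takes $O(\log(\log s/\epsilon))=O(\log(s/\epsilon))$ oracle calls.
\item Inside buckets, each oracle call only needs bucket minima and maxima plus a linear pass, so each call is $O(d)$.
\end{itemize}
Markov's inequality on the expected selection time gives the stated runtime with probability $0.99$. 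The smallest feasible $T$ found is at most $(1+\epsilon)\,\objo(w,s)$. Combined with the first paragraph, this yields $\objo(w,Q)\le(1+\epsilon)\,\objt(w,s)$.
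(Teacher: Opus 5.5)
Your reduction is fine. The paper's proof establishes the stronger bound $\objo(w,Q)\le(1+\epsilon)\,\objo(w,s)$, and the $\objt$ in the statement is a slip. Your inequality $\objo(w,s)\le\objt(w,s)$ for the unweighted objective follows from $\max\le\sum$, as you say.

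The gap is in the search phase, specifically the claim that the quantile-bucket boundaries and $\Delta^2/4$ bracket $\objo(w,s)$ within a $\mathrm{poly}(s)$ factor. Both quantities depend only on ranks or on the total spread, whereas $\objo(w,s)$ depends on local geometry, so the ratio is unbounded. Take $s=3$ and let $w$ consist of $d-3$ copies of $0$ together with $1$, $1+\delta$, and $M$:
\begin{itemize}
\item $Q=\{0,1+\delta,M\}$ shows $\objo(w,3)\le\delta$.
\item Every quantile pivot equals $0$, so your crude set is $\{0,M\}$, with value at least $M-1$.
\item $\Delta^2/4=M^2/4$.
\end{itemize}
Letting $M/\delta\to\infty$ breaks any $\mathrm{poly}(s)$ bracket. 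If $w$ has at most $s$ distinct values, then $\objo(w,s)=0$ and no positive lower bound $L$ exists at all.

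The argument that ``some gap must cover enough of the spread'' fails for a concrete reason. A long gap between consecutive quantization points that contains no data, or data only near its ends, contributes essentially no variance. Without a valid bracket, the geometric binary search needs roughly $\log\log(U/L)+\log(1/\epsilon)$ oracle calls. That count depends on the numerical values in $w$, not only on $d,s,\epsilon$, so the $O(d\log(s/\epsilon))$ bound is not established.

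The missing ingredient is a scale-free constant-factor estimate of $v^*=\objo(w,s)$. The paper obtains it as follows:
\begin{itemize}
\item Every $w_i$ lies within $\sqrt{v^*}$ of an optimal quantization point. So a $2$-approximate $s$-center clustering, computed in $O(d\log s)$ time, yields clusters of width at most $4\sqrt{v^*}$.
\item The at most $2s$ cluster endpoints form a coreset $E$ whose additive error is $O(v^*)$, by Lemma~\ref{lem:approx-by-endpoints}.
\item On the sorted coreset, restricting to $Q\subseteq E$ costs only a factor $4$ (Lemma~\ref{lem:approx-by-input}). The optimal value is then one of the entries $C_E(i,j)$ of a sorted matrix, which Algorithm~\ref{alg:4-approx} locates by randomized sampling, staircase pruning, and the greedy oracle.
\item This gives $v'\in[v^*,8v^*]$. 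A finer width-controlled coreset $E_{\mathsf{sub}}$, with cells of width at most $\sqrt{\epsilon v^*}$, then finishes with $O(\log(1/\epsilon))$ geometric search steps.
\end{itemize}

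Width control is what quantile buckets lack, and this also affects your oracle. If the oracle uses only bucket minima and maxima, Lemma~\ref{lem:approx-by-endpoints} gives additive error $D^2/4$ with $D$ the bucket width, which is uncontrolled. An exact $O(d)$-per-call oracle on unsorted balanced buckets is in fact possible. The next greedy point is $\min_{w_i>q}\bigl(w_i+T/(w_i-q)\bigr)$, attained at the predecessor or successor of $q+\sqrt{T}$. Each of at most $s+1$ greedy steps therefore scans one bucket of size $O(d/s)$. However, ``scanning forward'' inside an unsorted bucket is not meaningful as written, so you would need to spell this argument out. Even then, the bracket problem remains and needs a genuinely new idea.
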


Note that this algorithm does \textit{not} require sorting, and for moderate values of $s,\varepsilon$ is faster than the $O(d\log d)$ time required to sort the input vector (as in \cite{ben2024optimal}).
Moreover, the dependence on $1/\varepsilon$ is logarithmic, i.e.~linear in the number of bits of precision one desires.

This randomized algorithm works by constructing a small \textit{coreset} of values, on which we can solve while still obtaining a close-to-optimal solution for the entire vector.
In particular, we show how to construct a subset of at most $O(s/\sqrt{\varepsilon})$ elements of the input vector $y\subseteq w$ such that $\objo(y,s)$ is within a $(1+\varepsilon)$ factor of $\objo(w,s)$.
Thus, we construct such a subset and then run a slower but still near-linear time algorithm on this coreset.

To construct this coreset, we prove several structural lemmas about $\objo$ and employ faster approximation algorithms for $k$-center clustering (see Appendix \ref{sec: technical-prelims}).
To solve on the coreset, we first find a coarse approximation by restricting the quantization points to be values from the input vector, and then use this estimate to ``warm-start'' a binary search for the optimal maximal variance (again, over the coreset).
The most technically difficult step is finding this coarse approximation: it bears some similarities to the fast matrix search algorithms employed to optimize for $\objt$, but there are a number of added difficulties specific to the max objective of $\objo$.
See Appendix \ref{sec: mdv} for the full details of the algorithm and the proof of Theorem \ref{thm:obj1-full-algo}.

\section{Empirical Results} \label{sec: empirical-results}

\subsection{Empirical Setup} \label{sec: experimental-setup}
We implement a number of algorithms in optimized \texttt{Cython} code\footnote{Our code is availiable here: \url{https://github.com/nathanllww/Inner-Product-Aware-Quantization}}.
We also modify and extend the code of \cite{GKMNSS17} in our exact algorithms for ADV/weighted ASQ.
Our experiments were run on a Mac Mini consumer desktop computer with an Apple M4 processor and 16GB of RAM.

\subsection{Faster Exact Algorithms for ASQ}
Our first contribution is significantly faster algorithms for ASQ\footnote{Our algorithms, like \texttt{QUIVER}, support weights, as needed to solve $\objt$.} which, in some regimes, obtain a $10\times$ speedup over the previous state-of-the-art accelerated \texttt{QUIVER} from \cite{ben2024optimal}.
Our implementations can be used as a drop-in replacement for \texttt{QUIVER}, and thus speedup any algorithm using \texttt{QUIVER}; in particular, we give faster approximation algorithms (see Section \ref{sec:prac-algs}).

Our first insight to make these speedups is that the fast exact 1D $k$-means algorithms presented in \cite{GKMNSS17} can be modified to solve ASQ; see Section \ref{sec: ADV-exact-algorithms} for the full details.
This strategy already gives algorithms which are faster than \texttt{QUIVER}; namely, the \texttt{Wilber} algorithm from \cite{GKMNSS17}, when run with interpolation search and modified to support the ASQ objective, is much faster than \texttt{QUIVER} in many regimes (see Figure \ref{fig:exact_runtime}).

To improve the performance further, we exploit the inner workings of \texttt{Wilber} to \textit{warm-start} the algorithm with an approximation.
This allows the binary search of \texttt{Wilber} to converge much faster, leading to additional speedups.
In particular, \texttt{Wilber} works by solving a shortest path on a directed acyclic graph (DAG), whose weights are determined in part by a Lagrangian multiplier $\tau$.
Unfortunately, \cite{GKMNSS17} show that the desired value $\tau$ is the difference between the optimal ASQ cost with $s$ quantization points and the cost with $s+1$ quantization points, so it must be found via binary search.

To speed up this search, we use a fast yet robust approximation algorithm (\texttt{MixApprox}, see Section \ref{sec:prac-algs}) to estimate the value $\tau$, and only search in the range around this estimate.
Where $\tau'$ is the difference in costs returned by \texttt{MixApprox} when run with $s$ and $s+1$ quantization points, we only search for $\tau \in [\tau'/2, 2\tau']$ (reverting to searching all possible values if nothing is found).
Since \texttt{MixApprox} is an accurate approximation algorithm across a range of distributions, $\tau$ is found very quickly, leading to a large speedup. We plot the runtime of our method against \texttt{QUIVER} and vanilla \texttt{Wilber} in Figure \ref{fig:exact_runtime}.

\begin{figure}[ht]
    \centering
    \includegraphics[width=0.49\linewidth]{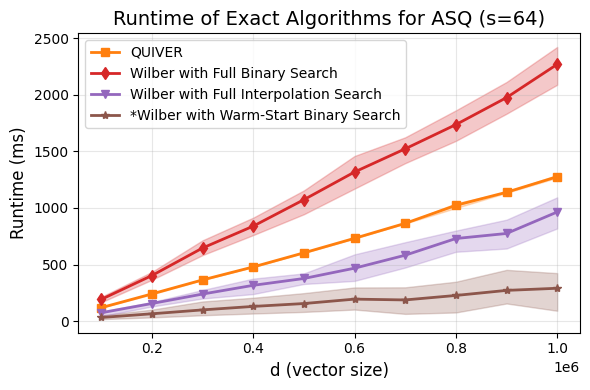} \includegraphics[width=0.49\linewidth]{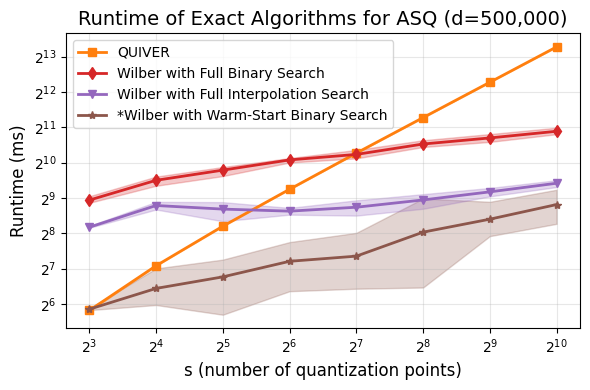}
    \caption{Runtime of exact algorithms for ASQ on vectors $w$ of dimension $d$ drawn from $\text{LogNormal}(0,1)$. The shaded regions show the 10--90\% range of runtimes across draws of $w$. Orange denotes accelerated \texttt{QUIVER}; red denotes \texttt{Wilber} with binary search over the full range of possible $\lambda$; purple denotes \texttt{Wilber} with the interpolation search method of \cite{GKMNSS17}; and brown denotes our accelerated search method.}
    \label{fig:exact_runtime}
\end{figure}

\subsection{Faster Approximation Algorithms for ASQ} \label{subsec: asq-approx-algs}
The approximate \texttt{QUIVER} algorithm of \cite{ben2024optimal} operates by first placing $m$ evenly spaced markers between the minimum and maximum entries of the vector $w$, and then finds the optimal $s$ of these $m$ discretized points to include as quantization values.
Unsurprisingly, then, using our faster exact algorithms to find the optimal $s$ quantization values, we are able to improve the performance\footnote{Due to the special structure of the calls approximate \texttt{QUIVER} makes to the exact algorithm, we use a slight variant of our fast exact algorithm; see Section \ref{sec:prac-algs} for details.  The overall structure of the algorithm is the same.} of the approximate \texttt{QUIVER} algorithm as well; see Figure \ref{fig:improved-quiver-approx}.

\begin{figure}[ht]
    \includegraphics[width=0.49\linewidth]{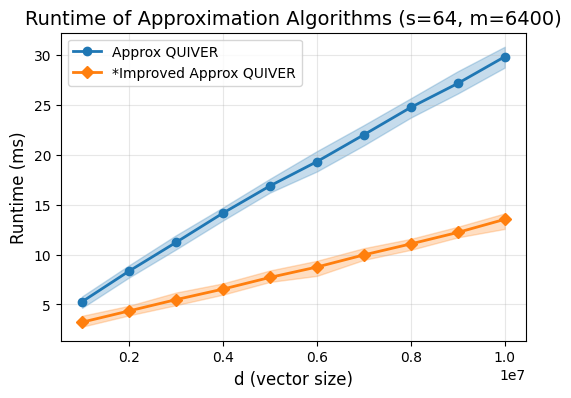} \includegraphics[width=0.49\linewidth]{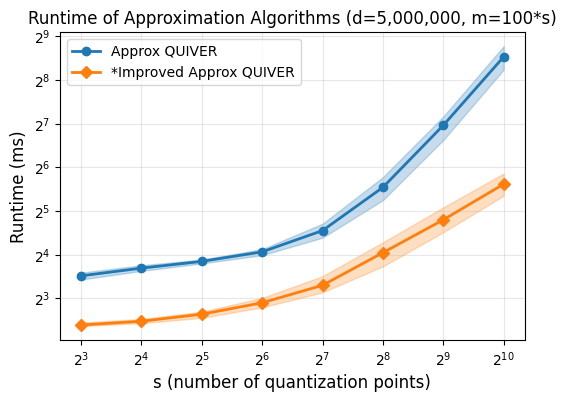}
    \caption{Runtime comparison of approximate \texttt{QUIVER} and our implementation using faster exact subroutines on vectors $w$ drawn from $\text{LogNormal}(0,1)$, with $m=100s$ uniformly spaced discretization points. The shaded regions show the 10--90\% range of runtimes across draws of $w$.}
    \label{fig:improved-quiver-approx}

\end{figure}

\subsection{Algorithms for \texorpdfstring{$\objo$}{MDV}}
We also give a fast practical algorithm for $\objo$, inspired by our theoretical algorithms; see Section \ref{sec:mdv-practical} for full details.
In the first step when run on vector $w$, the algorithm constructs a coreset in a manner similar to approximate \texttt{QUIVER}: it divides the range $[\min_i w_i, \max_i w_i]$ into $10s$ equally sized buckets, and constructs the coreset by taking the minimum and maximum value of $w$ from each bucket.
The algorithm then uses binary search to find the worst-case variance, by ``guessing and checking''. 
To do the check, we utilize an algorithm which (quickly) computes the minimum quantization set size required to obtain a target worst-case variance of $v$.

However, if any bucket is larger than $\sqrt{\varepsilon v_{\max}}$, where $v_{\max}$ is the current maximum of the search range, the algorithm \textit{subdivides} the buckets into smaller sizes until all have length at most $\sqrt{\varepsilon v_{\max}}$.
This subdivision step is the crucial modification that allows for a guaranteed $(1+\varepsilon)$-approximation, even on heavily skewed data, but the adaptive nature also keeps the algorithm very fast.
In a certain sense, the algorithm is adaptively finding the \textit{instance ideal} bucket size, keeping it as fast as possible for a given desired accuracy. The proof of the $(1+\varepsilon)$-approximation can be found in Lemma \ref{lem:mdv-practical-qual}.

\begin{figure}[ht]
    \centering
    \includegraphics[width=0.49\linewidth]{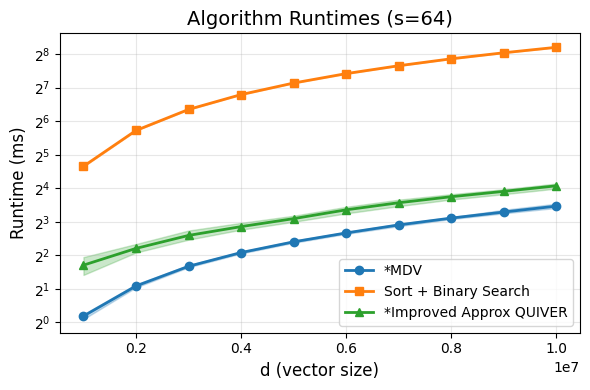}
    \includegraphics[width=0.49\linewidth]{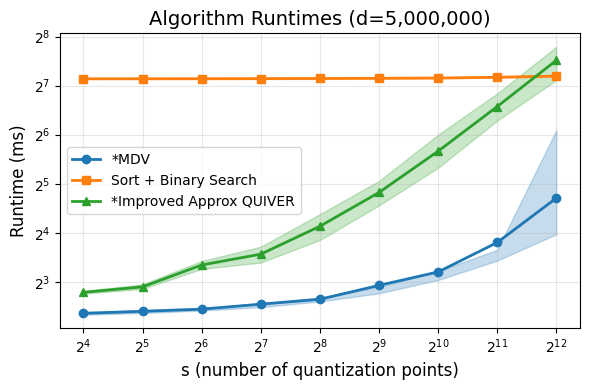}
    \caption{Runtime of algorithms for $\objo$ compared against Improved Approx \texttt{QUIVER} as a baseline (although it does not provide a good approximation to $\objo$), on vectors drawn from $\text{LogNormal}(0,1)$. The shaded regions show the 10--90\% range of runtimes across draws of the input vector.}
    \label{fig:mdv-runtime}
\end{figure}

In Figure \ref{fig:mdv-runtime}, the algorithm labeled ``$\objo$'' is this fast algorithm, run with $\varepsilon=0.01$, while the ``Sort + Binary Search" method sorts and then uses binary search to find the smallest $v$ for which $s$ quantization points can achieve worst-case variance $v$. Improved Approx \texttt{QUIVER} is always run with quality parameter $m=100s$.
Note that this implementation of Improved Approx \texttt{QUIVER}, in contrast to the implementation of Approx \texttt{QUIVER} provided by \cite{ben2024optimal} and our implementation shown in Figure \ref{fig:improved-quiver-approx}, does not require the input vector to be sorted.

\section{Discussion} \label{sec: discussion}

\paragraph{Limitations.} We first discuss a few important limitations of our work.

\begin{itemize}
    \item Although $\objo$ outperforms $\objt$ on worst-case variance, our experimentation revealed that it is outperformed by $\objt$ on worst $1\%$ tail events for many distributions. This perhaps motivates future work studying a \emph{hybrid} objective that optimizes for the worst $1\%$ performance over a given input distribution $\calX$. 
    \item  Our largest speed improvements over the prior state-of-the-art \texttt{QUIVER} algorithm come when the number of quantization points $s$ is large.  It is natural to consider whether we can adapt our algorithms to achieve similar speedup over \texttt{QUIVER} for small values of $s$ as well.
\end{itemize}

\paragraph{Future Work.} 
We conclude with some interesting open questions that are natural extensions of this work:
\begin{itemize}
    \item Theorem \ref{thm:addv-np-hard} shows that computing the optimal rounding distribution for the average-case problem is NP-Hard. Do there exist efficient algorithms to find approximately optimal rounding distributions? 
    \item Applying this notion of adaptive stochastic quantization to vector search requires the storage of a quantization set for every vector $w$ in the dataset.
        Can our ideas and algorithms be adapted to (potentially dynamic) \emph{multi-vector} quantization settings?
\end{itemize}

\bibliographystyle{alpha}
\bibliography{krish}

\addtocontents{toc}{\protect\setcounter{tocdepth}{1}}  %

\appendix
\tableofcontents
\section{Technical Preliminaries} \label{sec: technical-prelims}

\subsection{Matrix Properties and Algorithms}

\begin{definition}[Sorted Matrix] \label{def: sorted-matrix}
    We call a matrix $A \in \R^{d \times d}$ \emph{sorted} if all its rows are sorted in non-decreasing order while all columns are in non-increasing order.
\end{definition}

\begin{lemma}[\texttt{Sorted-Selection} Algorithm, Theorem 1 of \cite{FJ84} and Corollary 6.2 of \cite{mirzaian1985selection}] \label{lem: k-selection}
    Let $A \in \R^{d \times d}$ be a sorted matrix. Then, there exists an algorithm that returns the $k$th smallest element (i.e., performs $k$-selection) in $O(d)$ time. 
\end{lemma}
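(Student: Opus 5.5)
The plan is a prune-and-search argument in the style of Frederickson--Johnson, driven by linear-time median finding and the monotone structure of a sorted matrix. Write $A[i,j]$ for the entry in row $i$ and column $j$. Each row is non-decreasing in $j$, and each column is non-increasing in $i$, so the matrix is monotone along both axes. For a threshold $t$, let $c_i(t)$ be the number of entries in row $i$ that are $\le t$. These entries form a prefix of row $i$, and $c_i(t)$ is non-decreasing in $i$. The rank of $t$, $r(t)=\sum_i c_i(t)$, can therefore be computed in $O(d)$ time by a staircase walk. We start at row $1$, advance the column pointer while the entry is $\le t$, then move to row $2$, and so on. Since the column pointer never moves backward, the walk takes at most $2d$ steps. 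This staircase is the basic counting primitive.

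To avoid $\log d$ rounds of $O(d)$ work, we follow Frederickson--Johnson and use a hierarchy of submatrices. Take $d$ to be a power of two by padding with $+\infty$/$-\infty$ in a way that respects sortedness. Each $2^\ell\times 2^\ell$ block of a sorted matrix is itself sorted, with minimum at its bottom-left corner and maximum at its top-right corner. We maintain a set of candidate cells together with a lower bound $\lambda$ and an upper bound $\rho$, such that the $k$th smallest element lies in $[\lambda,\rho]$.

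Each phase does the following:
\begin{enumerate}
\item Split every surviving block into four quadrants.
\item Form the multiset of quadrant extremes (corner values).
\item Select from these extremes, by linear-time weighted median, two thresholds such that the rank counts $r(\cdot)$, computed via the staircase and restricted to surviving cells, bracket $k$.
\item Discard every quadrant whose range lies entirely outside the new bracket.
\end{enumerate}

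The key counting step is to show that only $O(2^{j})$-ish blocks survive at the level of granularity $d/2^{j}$, with the bound scaled so that the total work sums geometrically. A quadrant survives only if its $[\min,\max]$ interval meets $[\lambda,\rho]$. By the monotone structure, the quadrants that straddle a single threshold lie along one staircase path, so there are $O(d/\text{size})$ of them. The quadrants lying strictly between the two thresholds number at most a constant fraction of what the rank bracket allows.

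Once blocks reach size $1$, $O(d)$ candidate entries remain. A final linear-time selection, with rank offset $k-r(\lambda^-)$, returns the answer.

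The main obstacle is the charging argument that bounds the number of surviving blocks per phase. The phase work is $O(\#\text{blocks})$ for the selection plus $O(d)$ for the staircase counts. If we naively call the staircase in every phase, the total is $O(d\log d)$. To reach $O(d)$ overall, we would do the rank computations on the compressed representation: the staircase only touches the $O(d/2^{\ell})$ boundary blocks, each counted in $O(1)$ using block corners. We would also show that the number of surviving blocks satisfies a recurrence like $N_{\ell+1}\le \tfrac{3}{4}\cdot 4N_\ell$ offset by $O(d/2^\ell)$, giving $\sum_\ell O(d/2^\ell)=O(d)$. If this bookkeeping becomes delicate, the fallback is to cite Theorem 1 of \cite{FJ84} and Corollary 6.2 of \cite{mirzaian1985selection} directly, as the statement does, since they give exactly this $O(d)$ bound for $d\times d$ sorted matrices.
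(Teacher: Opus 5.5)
The paper does not prove this lemma. It imports the result from \cite{FJ84} and \cite{mirzaian1985selection} and only paraphrases the Frederickson--Johnson quadrant-splitting scheme, so your fallback citation is exactly the paper's justification and is sufficient. Your self-contained sketch, however, has a genuine gap at the step you flag, and the repairs you propose do not work. A quadrant of side $m$ whose range straddles $t$ can contain anywhere from $1$ to $m^2-1$ entries that are $\le t$, and its corners only certify that it straddles. So $r(t)$ cannot be obtained ``in $O(1)$ per boundary block''; an exact rank needs a full-resolution staircase walk, which costs $\Theta(d)$ per evaluation and brings you back to $O(d\log d)$. The counting step also fails as stated. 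A recurrence of the form $N'\le 3N+(\text{staircase term})$ grows geometrically across phases, so it cannot give $\sum_\ell O(d/2^\ell)$. You also never specify which order statistics of the corner values to take, or why that choice leaves only $O(d/m)$ blocks.

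The missing idea in \cite{FJ84} is that no rank is ever evaluated during the phases. Block sizes alone certify the bracket, and the survivor bound is proved from scratch at each level rather than by recurrence. Make entries distinct by a sortedness-preserving tie-break; without it even the staircase count fails, e.g.\ for a constant matrix. With all current blocks of side $m$, $N$ surviving entries, and current target rank $k$, let $x$ be the $\lceil k/m^2\rceil$-th smallest block maximum and $y$ the $\lceil (N-k+1)/m^2\rceil$-th largest block minimum. The blocks realizing these order statistics show that at least $k$ entries are $\le x$ and at least $N-k+1$ are $\ge y$, so the target $a^\ast$ satisfies $y\le a^\ast\le x$. Discard blocks with minimum $>x$ or maximum $<y$, lowering $k$ for the latter.

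A survivor either straddles $x$, straddles $y$, or lies in $U\cap L$. Here $U$ is the set of blocks with maximum $<x$ and $L$ the set with minimum $\ge y$. By your staircase observation on the block grid, at most $2d/m$ blocks straddle each threshold. Since $|U|+|L|\le N/m^2+2$ and every block outside $U\cup L$ straddles $y$, you get $|U\cap L|\le 2d/m+2$. Hence $O(d/m)$ blocks survive regardless of the previous count. Each phase then costs $O(d/m)$ via linear-time selection, and summing over $m=d/2,d/4,\dots,1$ gives $O(d)$.

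The route of \cite{mirzaian1985selection} avoids this bookkeeping altogether. It recurses once on the half-size sorted matrix formed by every other row and column to obtain two bracketing elements. It then does a single staircase walk and a linear-time selection on the $O(d)$ entries between them, so $T(d)=T(d/2)+O(d)$.
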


Importantly, the algorithms of Lemma \ref{lem: k-selection} operate in the query access model, where they are allowed $O(1)$ time access to any entry of $A$. The algorithms from \cite{FJ84, mirzaian1985selection} achieve $O(d)$ time for $k$-selection in row-and-column sorted $d \times d$ matrices by utilizing a divide-and-conquer strategy that systematically restricts the search space to a narrow ``staircase'' contour of $O(d)$ candidate elements. \cite{FJ84} accomplishes this through recursive matrix bisection into $2 \times 2$ submatrices, extracting representatives to compute bounds that strictly contain the $k$-th element, and then applying a linear-time selection subroutine to the remaining candidates along the monotonic frontier. \cite{mirzaian1985selection} streamlines this recursive procedure and greatly simplifies the algorithm and analysis. 
\begin{definition}[Monotone Matrix] \label{def: monotone-matrix}
    Let matrix $A \in \R^{d \times d}$ and define $c(i) := \min \{ j \in [d] : A[i,j] = \min_{k \in [d]} A[i,k] \}$. Matrix $A$ is monotone if for all $i \in [d]$, $c(i) \leq c(i+1)$. 
\end{definition}

\begin{definition}[Totally Monotone Matrix] \label{def: totally-monotone-matrix}
    Matrix $A \in \R^{d \times d}$ is \emph{totally} monotone if every submatrix of $A$ is monotone. Equivalently, $A$ is totally monotone if for $i < i'$ and $j < j'$, $A[i,j] > A[i, j'] \implies A[i', j] > A[i', j']$. 
\end{definition}

\begin{definition}[Concave Monge Property] \label{def: concave-monge-property}
    Matrix $A \in \R^{d \times d}$ satisfies the concave Monge property if for any $ 1 \leq j < k < d$, $A[j, k] + A[j+1, k+1] \leq A[j, k+1] + A[j+1, k]$.
\end{definition}

\begin{definition}[Quadrangle Inequality] \label{def: quadrangle-inequality}
    Matrix $A \in \R^{d \times d}$ satisfies the quadrangle inequality if for any $a \leq b \leq c \leq f$, $A[a, c] + A[b, f] \leq A[a, f] + A[b, c]$.
\end{definition}

It is clear that any matrix satisfying the Quadrangle Inequality (i) satisfies the Concave Monge property, (ii) is totally monotone, and (iii) is monotone. 

\begin{lemma}[\texttt{SMAWK} Algorithm, Theorem 4.3 of \cite{aggarwal1986geometric}] \label{lem: SMAWK}
    Let $A \in \R^{d \times d}$ be a totally monotone matrix. Then, there exists an algorithm that in $O(d)$ time returns $j(i) := \min \{ j \in [d] : A[i,j] = \max_{k \in [d]} A[i,k] \}$ for all rows $i \in [d]$.
\end{lemma}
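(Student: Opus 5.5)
The plan is the classical \texttt{SMAWK} argument: a column-elimination step (\textsc{Reduce}) alternates with a row-halving recursion. I will present it for a general $n \times m$ totally monotone matrix and show a running time of $O(n+m)$; setting $n=m=d$ gives the statement.

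\textbf{Orientation.} Definition~\ref{def: totally-monotone-matrix} is phrased for leftmost row minima. For row maxima we use the corresponding orientation of total monotonicity: for $i<i'$ and $j<j'$, $A[i,j] < A[i,j'] \implies A[i',j] < A[i',j']$. This is the condition under which the leftmost maximizer $j(i)$ is nondecreasing in $i$. The argument is symmetric for minima, so I only track maxima. The key consequence is that the positions $j(i)$ are nondecreasing in $i$ in every submatrix, since every submatrix is monotone.

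\textbf{Step 1: \textsc{Reduce}.} Given $n$ rows and $m > n$ columns, we discard $m-n$ columns that contain no row maximum. We scan the columns left to right and keep a stack of surviving columns, where the $k$-th column on the stack is associated with row $k$. When the next column $c$ arrives, let $c'$ be the top column, at depth $k$.
\begin{itemize}
\item If $A[k,c'] \ge A[k,c]$, then by total monotonicity $A[i,c'] \ge A[i,c]$ for all rows $i \le k$. Moreover, every row below $k$ is already dominated by a deeper stack entry or is handled later. We push $c$, provided the stack has fewer than $n$ entries; otherwise we discard $c$.
\item If $A[k,c'] < A[k,c]$, then column $c'$ loses to $c$ in row $k$ and, by total monotonicity, in every row $i \ge k$. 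Column $c'$ also lost in rows $<k$ when it was pushed. So $c'$ contains no leftmost maximum and is popped, and we repeat the comparison with the new top.
\end{itemize}
Each column is pushed and popped at most once, so \textsc{Reduce} uses $O(n+m)$ time. It leaves at most $n$ columns that still contain all leftmost row maxima.

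\textbf{Step 2: recursion and interpolation.} After \textsc{Reduce} the matrix has $n$ rows and at most $n$ columns. We recurse on the submatrix of even-indexed rows, which is again totally monotone, to get $j(i)$ for every even $i$. For each odd row $i$, monotonicity gives $j(i-1) \le j(i) \le j(i+1)$. We therefore find $j(i)$ by scanning only that interval, resolving ties toward the leftmost column. These intervals overlap only at their endpoints, so all odd rows together cost $O(n)$.

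The total cost satisfies
\[
T(n,m) \le T\!\left(\lceil n/2 \rceil, n\right) + O(n+m).
\]
Unrolling gives $T(n,m) = O(n+m)$, since the terms after the first level form a geometric series in $n$. Throughout, entries are accessed in $O(1)$ time in the query model, as in Lemma~\ref{lem: k-selection}.

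\textbf{Main obstacle.} The delicate step is the correctness of \textsc{Reduce}. One must maintain the invariant that the stack column at depth $k$ is dominated, in all rows before $k$, by a column to its right, and that popping never removes the leftmost maximizer of any row. Ties need care: they must be broken consistently toward the leftmost column, and the inequality direction in the total monotonicity condition must match the max/leftmost convention. I would verify this invariant first by induction on the number of columns scanned, after which the recursion is routine.
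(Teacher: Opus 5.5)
Your proposal is correct and follows the argument the paper sketches for this cited result: a \textsc{Reduce} pass discards columns that contain no leftmost row maximum, you recurse on the even-indexed rows, and you scan $[j(i-1), j(i+1)]$ for each odd row. Your switch to the max-oriented condition $A[i,j] < A[i,j'] \implies A[i',j] < A[i',j']$ is also the right reading of the statement: under the paper's min-oriented Definition~\ref{def: totally-monotone-matrix}, every matrix with nondecreasing rows qualifies, and leftmost row maxima of such matrices need $\Omega(d \log d)$ probes.

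One slip to fix. In your last paragraph, the invariant should say that the stack column at depth $k$ is weakly dominated in rows $<k$ by the column beneath it on the stack, which lies to its \emph{left}; your Step~1 bullet already uses this correctly. Weak domination by a column to its right would not rule it out as a leftmost maximum.
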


The \texttt{SMAWK} algorithm leverages the total monotonicity property to systematically eliminate redundant columns, thereby shrinking the matrix width. Following this, the algorithm recurses on the even-indexed rows of the reduced matrix to find their respective minima. In the final step, it uses the computed minima of the even rows to tightly constrain the search space for the odd-indexed rows, locating the remaining minima in strictly linear time. 

\subsection{Clustering Algorithms}

Many algorithms in Appendices \ref{sec: approx-alg-MSE} and \ref{sec: mdv} make use of exact and/or approximate clustering algorithms as subroutines. These clustering algorithms are often used to give \emph{data-dependent coresets} of the original instance, upon which further processing is done. 

In particular, we utilize an algorithm for $s$-center clustering in Euclidean space. Here, an input consists of $d$ vectors $v_1, \ldots, v_d \in \R^n$ and the task is to return an $s$-clustering $\calC = (C_1, \ldots, C_s)$ such that $C_1 \sqcup \ldots \sqcup C_s = \{ v_1, \ldots, v_d \}$ that minimizes the maximum radius of any cluster.
\begin{align*}
    \text{Radius}(\calC) := \frac{1}{2} \cdot \max_{i \in [s]} \max_{v_j, v_k \in C_i} \norm{v_j - v_k}_2
\end{align*}
 The following $2$-approximate $s$-center clustering algorithm of \cite{FG88} is used often. 

\begin{lemma}[\texttt{s-Center-Clustering} Approximation Algorithm, Theorem 4.1 of \cite{FG88}] \label{lem: s-center-clustering-alg}
    Given $d$ vectors $v_1, \ldots, v_d \in \R^n$ and a target number of clusters $s \in \N$, there exists an algorithm with runtime $O(d \log s)$ that returns an $s$-clustering $\calC$ such that $$\mathrm{Radius}(\calC) \leq 2 \cdot \min_{\substack{ \calA = (A_1, \ldots, A_s) \\ A_1 \sqcup \ldots \sqcup A_s = \{ v_1, \ldots, v_d \}}} \mathrm{Radius}(\calA)$$
\end{lemma}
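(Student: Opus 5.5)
The proof splits into two independent parts: the factor-$2$ guarantee, which follows from a farthest-first (Gonzalez-type) certificate, and the $O(d \log s)$ running time, which needs the geometric bucketing of \cite{FG88}. Only the second part is hard. Throughout, $\mathrm{OPT}$ is the optimal radius. Because the paper defines $\mathrm{Radius}$ as half the maximum intra-cluster diameter, every optimal cluster has diameter at most $2\,\mathrm{OPT}$.

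\textbf{Approximation guarantee.} The plan is to show that any procedure producing centers $c_1,\ldots,c_s$ (a subset of the input points) together with a value $r$ has approximation factor $2$, provided two properties hold. First, every input point lies within distance $r$ of some center. Second, there are $s+1$ input points that are pairwise at distance at least $r$; in farthest-first these are the $s$ centers together with the farthest remaining point.
\begin{itemize}
    \item \emph{Lower bound.} By pigeonhole, two of these $s+1$ points lie in the same cluster of any $s$-clustering. That cluster then has diameter at least $r$, so $\mathrm{OPT} \ge r/2$.
    \item \emph{Upper bound.} Assign each point to its nearest center. Any two points in one cluster are within $r$ of a common center, so the cluster has diameter at most $2r$ and our radius is at most $r$.
\end{itemize}
Together these give $\mathrm{Radius}(\calC) \le r \le 2\,\mathrm{OPT}$.

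It then suffices that the algorithm's output satisfies the two properties up to constants. The guarantee also tolerates approximate farthest-point choices, i.e.\ a point within a $(1\pm\delta)$ factor of the true farthest distance: the argument goes through with constant $2(1+\delta)/(1-\delta)$. I would run the argument with this slack and then tighten it, which is exactly what \cite{FG88} does.

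\textbf{Running time.} Naive farthest-first costs $O(ds)$, since each new center triggers a full pass updating every point's distance to its nearest center. To reach $O(d\log s)$ I would follow the Feder--Greene strategy.
\begin{itemize}
    \item Proceed in $O(\log s)$ phases. In phase $t$ the number of centers doubles, and the current covering radius $r_t$ gives a scale.
    \item Bucket points into grid cells of side proportional to $r_t$, using hashing or floor operations in $O(1)$ per point. Points in the same cell are interchangeable up to a $(1+\delta)$ distortion, so each cell keeps one representative.
    \item Within a phase, a new center only affects points in the $O(1)$ neighboring cells (the dimension $n$ is fixed; in our use $n=1$). Updates are therefore charged locally, and each phase costs $O(d)$ amortized.
\end{itemize}
Summing over phases gives $O(d\log s)$. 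The farthest-point choice in each phase is approximate, which is why the slack in the approximation argument is needed.

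\textbf{Main obstacle.} The hard step is the amortized bound: each point must be touched only $O(1)$ times per doubling phase while the approximate farthest-point order still yields the exact constant $2$. My first attempt would be to prove the one-dimensional case directly, since that is all the later appendices use. There the cells are intervals and neighbors are the two adjacent intervals, and I would aim to verify the per-phase charging cleanly. For general fixed $n$ I would defer to Theorem 4.1 of \cite{FG88}.
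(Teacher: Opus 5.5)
Your certificate argument is correct. With the paper's half-diameter definition of $\mathrm{Radius}$, $s+1$ input points pairwise at distance at least $r$ force $\mathrm{OPT}\ge r/2$, and nearest-center assignment to centers covering at distance $r$ gives $\mathrm{Radius}\le r$. But the proposal only shows that exact farthest-first produces such a certificate, and exact farthest-first costs $O(ds)$. Both ingredients you add to reach $O(d\log s)$ leave genuine gaps.

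\emph{The factor.} With approximately farthest choices your own certificate yields only $2(1+\delta)/(1-\delta)$, while the lemma asserts exactly $2$. ``Run with slack and then tighten'' is not a step of the argument. A constant-factor solution only brackets $\mathrm{OPT}$. To get exactly $2$ you still need two things: an exact certificate at the right threshold, and a search over candidate pairwise distances, which is where the cost lies. For example, the certificate could be a candidate distance $r$ at which some maximal set of points pairwise farther than $r$ apart has at most $s$ elements, while at the next smaller candidate it has more than $s$. Your claim that approximate farthest-first followed by tightening is ``exactly what \cite{FG88} does'' is also unsupported. The paper gives no proof of its own: it imports Theorem 4.1 of \cite{FG88} as a black box and summarizes it as an independent-set computation in a graph defined over boxes of points.

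\emph{The running-time scheme.} It fails as described. Your $O(d)$-per-phase accounting requires the grid to be fixed for the whole phase, but the covering radius is not controlled by the number of centers. Take $\{0,1\}\cup\{2^{-j}: 1\le j\le m\}$ and run farthest-first starting from $0$: every step halves the covering radius. So a phase that adds $2^{t-1}$ centers shrinks it by a factor $2^{2^{t-1}}$, far below the cell side $\Theta(\delta r_t)$. From that point on, points sharing a cell are not interchangeable up to a $(1\pm\delta)$ factor, because the error is additive $\Theta(\delta r_t)$ rather than relative. With one representative per cell, the approximate-farthest guarantee on which your factor rests breaks down inside the phase. Re-gridding all points whenever the scale drops by a constant factor would, on this instance, happen $\Theta(s)$ times, which is $O(ds)$ again. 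The genuinely hard part is refining only the active region so that each point is charged $O(1)$ times per phase; \cite{FG88} handles this with their box-based construction. Your one-dimensional plan, with intervals and two neighbours, does not supply it.

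As it stands, the proposal establishes only the factor-$2$ guarantee of exact farthest-first. The $O(d\log s)$ bound with factor exactly $2$ has to come from \cite{FG88}, as in the paper.
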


The $\texttt{s-Center-Clustering}$ algorithm of Lemma \ref{lem: s-center-clustering-alg} works similarly to Gonzalez's algorithm \cite{gonzalez1985clustering} by finding an independent set in a suitable implicitly defined geometric graph. This graph, however, is defined over \emph{boxes} of points rather than the points themselves. This ultimately allows for a more efficient data structure that can be used to construct the independent set. The work is technical and involved, so for further details, consult the original paper \cite{FG88}.

Hidden in the lemma statement is that the runtime of the above algorithm has exponential dependence on the dimension of the space $n$. In this work, however, we only utilize this algorithm on 1-dimensional instances. 

\begin{remark}
    The result of Lemma \ref{lem: s-center-clustering-alg} is largely theoretical due to large constant factors and overall complexity of the data structures involved in the algorithm. As such, any algorithm using it as a subroutine is most-probably impractical to implement. However, empirically implemented algorithms are heavily inspired by the theoretical results and are outlined in each relevant appendix. 
\end{remark}

\newcommand{\dwasq}{\calD^{\text{asq}}}
\newcommand{\uqd}{Q^{\scriptscriptstyle \downarrow}}
\newcommand{\oqd}{Q^{\scriptscriptstyle \uparrow}}
\newcommand{\uqdcom}{\uqd_{\text{COM}}}
\newcommand{\oqdcom}{\oqd_{\text{COM}}}
\newcommand{\uup}{u^{\scriptscriptstyle \uparrow}}
\newcommand{\udown}{u^{\scriptscriptstyle \downarrow}}

\section{\texorpdfstring{Standard Stochastic Quantization is Optimal for $\objth$}{Adaptive Stochastic Quantization is Optimal for WDDV}} \label{appendix: wddv}

In this section, we prove that the standard stochastic quantization rounding distribution is optimal for $\objth$ (Definition \ref{def: wddv}). Recall that $\dasq(w,Q)$ is the distribution that rounds each point $w_i$ to either $\wiup(Q)$ or $\widown(Q)$ independently, with probabilities chosen to ensure that the rounding is unbiased.

\ASQoptimalthm

To prove Theorem \ref{thm:objective-three-optimal-rounding}, we first show the following key lemma.

\begin{lemma} \label{lem:min-variance-independent-rounding}
    Consider $u \in \mathbb{R}$ and quantization set $Q\subseteq \mathbb{R}$.
    Let $\udown := \max \{q \in Q : q \leq u\}$, $\uup := \min \{ q \in Q: q \geq u \}$, and $\calD^{\ast}$ denote the unique distribution that rounds $u$ to $\{ \udown, \uup\}$ such that $\E_{\buh \sim \calD^{\ast}} [\buh] = u$.
    Then, for any distribution $\calD$ such that $\supp(\calD)\subseteq Q$ and $\mathbb{E}_{\buh\sim \calD}[\buh]= u$, we have that $\Var_{\buh\sim \calD}[\buh] \geq \Var_{\buh \sim \calD^{\ast}} [\buh]$.
\end{lemma}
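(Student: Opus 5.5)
The plan is to compare second moments, since both distributions have mean $u$: $\Var[\buh] = \E[\buh^2] - u^2$. It therefore suffices to show $\E_{\calD}[\buh^2] \geq \E_{\calD^\ast}[\buh^2]$. Two degenerate cases come first. If $u \in Q$, then $\udown = \uup = u$, so $\calD^\ast$ is the point mass at $u$ and its variance is $0$; the claim is then trivial. We may also assume $u$ lies within the range of $Q$, since otherwise no unbiased distribution supported on $Q$ exists. So assume $\udown < u < \uup$.

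The key step is a pointwise quadratic bound. Define
\[
f(q) := (q - \udown)(q - \uup).
\]
Because no element of $Q$ lies strictly between $\udown$ and $\uup$, every $q \in Q$ satisfies $q \leq \udown$ or $q \geq \uup$. In either case the two factors of $f(q)$ have the same sign (or one of them is zero), so $f(q) \geq 0$ for all $q \in \supp(\calD) \subseteq Q$. Taking expectations under $\calD$ and using $\E[\buh] = u$ gives
\[
\E_{\calD}[\buh^2] - (\udown + \uup)\,u + \udown\uup \;\geq\; 0 .
\]
This rearranges to
\[
\Var_{\calD}[\buh] \;\geq\; (\udown + \uup)u - \udown\uup - u^2 \;=\; (\uup - u)(u - \udown).
\]

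To finish, I identify the right-hand side with $\Var_{\calD^\ast}[\buh]$. The preliminaries already note that the variance of the two-point unbiased rounding is $(\uup - u)(u - \udown)$. This is also immediate from the same identity, because $f$ vanishes on $\{\udown, \uup\} = \supp(\calD^\ast)$, so equality holds throughout for $\calD^\ast$.

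No step here is a real obstacle. The only point needing care is the observation that $Q$ has no points strictly inside $(\udown, \uup)$, which is exactly what makes $f \geq 0$ on $Q$, together with the handling of the degenerate cases above.
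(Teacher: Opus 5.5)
Your proof is correct, but it takes a different route from the paper. The paper uses a two-step mass-transport argument. It splits $\supp(\calD)$ into the parts below and above $u$ and collapses each part to its center of mass $\uqdcom$, $\oqdcom$. This preserves the mean and, by Cauchy--Schwarz, can only lower the variance. It then notes that the resulting two-point unbiased distribution has variance $(u-\uqdcom)(\oqdcom-u)\ge(u-\udown)(\uup-u)$, since $\uqdcom\le\udown$ and $\oqdcom\ge\uup$.

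You instead give a single certificate: $f(q)=(q-\udown)(q-\uup)$ is nonnegative on $Q$ and vanishes on $\supp(\calD^\ast)$, so taking expectations yields the sharp bound in one line.

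Your version is shorter and also characterizes equality. Equality holds iff $\E_{\calD}[f(\buh)]=0$, i.e.\ iff $\calD$ is supported on $\{\udown,\uup\}$, which together with unbiasedness forces $\calD=\calD^\ast$. It also avoids bookkeeping that the center-of-mass argument leaves implicit. The centers of mass need both sides to carry positive mass, and when $u\in Q$ the two sides overlap at $u$, so the masses of $\calD'$ can sum to more than $1$; the claim is trivial in that case, but the paper does not separate it out.

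The paper's argument is more geometric in exchange: it shows explicitly that moving mass outward from $\{\udown,\uup\}$ increases variance, which is the intuition for why standard stochastic quantization minimizes per-coordinate variance among unbiased roundings.
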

\begin{proof}
     Let $\uqd := \supp(\calD) \cap \{ q \in Q : q \leq u\}$ and analogously $\oqd := \supp(\calD) \cap \{q \in Q: q \geq u\}$. The argument proceeds by first constructing an unbiased distribution $\calD'$ supported only on the centers of mass of $\uqd$ and $\oqd$ and showing that $\Var_{\buh \sim \calD'} [\buh] \leq \Var_{\buh \sim \calD} [\buh]$.
    We then show that $\Var_{\buh \sim \calD^{\ast}}[\buh] \leq \Var_{\buh \sim \calD'}[\buh]$ to conclude the claim. First, define the centers of mass
    \begin{align*}
        \uqdcom := \sum_{q \in \uqd} q \cdot \frac{\calD(q)}{\calD(\uqd)}, \quad \quad \quad \oqdcom := \sum_{q \in \oqd} q \cdot \frac{\calD(q)}{\calD(\oqd)}
    \end{align*}
    where $\calD(S) \triangleq \sum_{v \in S} \calD(v)$ for $S \subseteq Q$.
    Define distribution $\calD'$ by placing mass $\calD(\uqd)$ at $\uqdcom$ and mass $\calD(\oqd)$ at $\oqdcom$ as follows\footnote{Note that because $\calD'$ is an intermediate distribution only used for the purposes of the analysis, the fact that $\{\uqdcom, \oqdcom\}$ may \emph{not} be a subset of $Q$ does not affect correctness}
    \begin{align*}
        \calD'(x) = \begin{cases}
           \calD(\uqd) \text{ if } x = \uqdcom \\
           \calD(\oqd) \text{ if } x = \oqdcom \\
           0 \text{ otherwise }
        \end{cases}
    \end{align*}

    \begin{figure}[ht]
        \centering
        \begin{tikzpicture}[
            dot/.style={circle, inner sep=1.5pt, outer sep=0pt},
            open dot/.style={dot, draw=black, fill=white, thick},
            solid dot/.style={dot, fill=black},
            cm dot/.style={dot, fill=red},
            scale=0.99
        ]
            \draw[stealth-stealth, thick] (-7,0) -- (7,0);

            \node[solid dot, label=below:{$\bu$}] (u) at (0,0) {};

            \node[cm dot, label={[red, yshift=5pt]above:{$\uqdcom$}}] (CMminus) at (-3, 0) {};

            \foreach \x/\bdir in {-5/left, -4/left, -2/right, -1/right} {
                \node[open dot] (q\x) at (\x, 0) {};
                \draw[-stealth, black, shorten >=2pt, shorten <=2pt] (q\x) to[bend \bdir=35] (CMminus);
            }

            \node[cm dot, label={[red, yshift=5pt]above:{$\oqdcom$}}] (CMplus) at (3, 0) {};

            \foreach \x/\bdir in {1/left, 2/left, 6/right} {
                \node[open dot] (q\x) at (\x, 0) {};
                \draw[-stealth, black, shorten >=2pt, shorten <=2pt] (q\x) to[bend \bdir=35] (CMplus);
            }

            \draw[decorate, decoration={calligraphic brace, amplitude=5pt}, thick]
                (-1,-0.2) -- (-5,-0.2) node[midway, below=3pt] {$\uqd$};

            \draw[decorate, decoration={calligraphic brace, amplitude=5pt, mirror}, thick]
                (1,-0.2) -- (6,-0.2) node[midway, below=3pt] {$\oqd$};
        \end{tikzpicture}
        \caption{Illustration of distribution $\calD'$, showing the reduction of support of $\calD$ to just two elements}
        \label{fig: com-figure}
    \end{figure}
    See Figure \ref{fig: com-figure} for an illustration of the transformation from distribution $\calD$ to $\calD'$. Notice that
    \begin{align*}
        \E_{\buh \sim \calD'}[\buh] = \uqdcom \cdot \calD(\uqd) + \oqdcom \cdot \calD(\oqd) &= \sum_{q \in \uqd} q \cdot \calD(q) + \sum_{q \in \oqd} q \cdot \calD(q) \\
        &= \sum_{q \in Q} q \cdot \calD(q) = \E_{\buh \sim \calD}[\buh] = u
    \end{align*}
    Bounding the variance,
    \begin{align*}
        \Var_{\buh \sim \calD'}[\buh] &= \E_{\buh \sim \calD'}[\buh^2] - \E_{\buh \sim \calD'}[\buh]^2 \\
                                      &= [\calD(\uqd) \cdot {\uqdcom}^2 + \calD(\oqd) \cdot {\oqdcom}^2 ] - u^2 \\
        &= \left[ \frac{\left(\sum_{q \in \uqd} q \cdot \calD(q) \right)^2}{\calD(\uqd)} + \frac{\left(\sum_{q \in \oqd} q \cdot \calD(q) \right)^2}{\calD(\oqd)} \right] - u^2 \\
        &\leq \left[ \sum_{q \in \uqd} q^2 \calD(q) + \sum_{q \in \oqd} q^2 \calD(q) \right] - u^2 \tag{Cauchy-Schwarz Inequality} \\
        &= \left[ \sum_{q \in Q} q^2 \calD(q) \right] - u^2 = \Var_{\buh \sim \calD}[\buh]
    \end{align*}
    and thus $\var_{\buh\sim \calD'}[\buh] \leq \var_{\buh \sim \calD}[\buh]$. Because $\E_{\buh \sim \calD'}[\buh] = u$, it follows that $\calD'$ performs unbiased rounding of $u$ to the set $\{\uqdcom, \oqdcom\}$.
    Moreover, as $\uqdcom \leq \udown$ and $\oqdcom \geq \uup$ by definition of the center of mass, we have
    \begin{align*}
        \Var_{\buh \sim \calD'}[\buh] &= (u - \uqdcom)(\oqdcom - u) \geq (u - \udown)(\uup - u) = \Var_{\buh \sim \calD^{\ast}} [\buh]
    \end{align*}
    and so
    \[
        \var_{\buh\sim \calD^{*}}[\buh] \leq \var_{\buh\sim \calD'}[\buh] \leq \var_{\buh\sim \calD}[\buh].\qedhere
    \]
\end{proof}

We are now ready to prove Theorem \ref{thm:objective-three-optimal-rounding}. Here, for a given $w \in \R^d$ and $Q \subseteq \R$, we use the notation $\objth(w,\calD) := \max_{x : \norm{x}_2 \leq 1} \Var_{\buh \sim \calD} [\la \buh, x\ra]$ to denote the cost of choosing rounding distribution $\calD$. 

\begin{proof}[Proof of Theorem \ref{thm:objective-three-optimal-rounding}]
    Observe that 
    \begin{align*}
        \objth(w,\DASQ(w, Q)) &= \max_{x : \, \norm{x}_2 \leq 1} \Var_{\buh \sim \DASQ}[\la \buh, x \ra] \\
        &= \max_{x : \, \norm{x}_2  \leq 1} \sum_{i=1}^{d} x_i^2 \Var_{\buh \sim \DASQ} [\buh_i ] \tag{$\DASQ$ is a product distribution} \\
        &= \max_{i \in [d]} \Var_{\buh \sim \DASQ} [\buh_i ] = \Var_{\buh \sim \DASQ} [\buh_{i^\ast} ]
    \end{align*}
    where $i^\ast := \argmax_{i \in [d]} \Var_{\buh \sim \DASQ} [\buh_i ]$.
    Define $\Sigma := \Var_{\buh \sim \DASQ} [\buh_{i^\ast}]$.
    Now, consider any arbitrary rounding $\calD \in \Omega$. It follows that
    \begin{align*}
        \objth(w, \calD) \geq \Var_{\buh \sim \calD}[\la \buh, e_{i^{\ast}} \ra] = \Var_{\buh \sim \calD} [\buh_{i^{\ast}}] &\geq \Var_{\buh \sim \DASQ} [\buh_{i^\ast} ] \tag{Lemma \ref{lem:min-variance-independent-rounding}} \\
        &= \objth(w, \DASQ(w, Q))
    \end{align*}
    as desired.
\end{proof}

\begin{remark}
    Theorem \ref{thm:objective-three-optimal-rounding} shows that $\DASQ$ is a minimizer of $\objth(w, Q)$. Importantly, however, it may not be the \emph{unique} minimizer. It is feasible that there exist non-adaptive and/or dependent rounding schemes that simultaneously minimize $\objth(w, Q)$.
\end{remark}

\newcommand{\hwast}{\widehat{w}^{\ast}}
\newcommand{\what}{\widehat{w}}

\section{\texorpdfstring{NP-Hardness of Optimizing $\objf$}{NP-Hardness of Optimizing ADDV}} \label{appendix: addv}

\begin{lemma}[Gershgorin Circle Theorem, Theorem 6.1.1 in \cite{horn2012matrix}] \label{lem:gershgorin}
    Let $A \in \mathbb{C}^{d \times d}$ and let $R_i(A) := \sum_{j \neq i} |A_{ij}|$ for all $i \in [d]$. Then,
    the eigenvalues of $A$ are contained in the union of discs
    \begin{align*}
        \bigcup_{i \in [d]} \{ z \in \mathbb{C} : |z - A_{ii} | \leq R_i(A) \}
    \end{align*}
\end{lemma}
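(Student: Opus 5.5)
We prove the Gershgorin Circle Theorem (Lemma \ref{lem:gershgorin}) by the classical eigenvector argument: take an arbitrary eigenpair, look at the coordinate of the eigenvector with the largest modulus, and read off the corresponding row of the eigenvalue equation. This places the eigenvalue in the disc centered at the diagonal entry of that row.

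Concretely, let $\lambda \in \mathbb{C}$ be an eigenvalue of $A$ with eigenvector $v \in \mathbb{C}^d$, $v \neq 0$. Choose an index $i \in [d]$ with $|v_i| = \max_{j \in [d]} |v_j|$. Since $v \neq 0$, we have $|v_i| > 0$. The $i$th row of $Av = \lambda v$ gives
\begin{align*}
    \sum_{j \in [d]} A_{ij} v_j = \lambda v_i ,
\end{align*}
which we rearrange as
\begin{align*}
    (\lambda - A_{ii}) v_i = \sum_{j \neq i} A_{ij} v_j .
\end{align*}

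Next we take absolute values, apply the triangle inequality, and use $|v_j| \leq |v_i|$ for every $j$:
\begin{align*}
    |\lambda - A_{ii}| \, |v_i| \leq \sum_{j \neq i} |A_{ij}| \, |v_j| \leq |v_i| \sum_{j \neq i} |A_{ij}| = |v_i| \, R_i(A) .
\end{align*}
Dividing by $|v_i| > 0$ yields $|\lambda - A_{ii}| \leq R_i(A)$. Hence $\lambda$ lies in the $i$th disc, and therefore in the union of discs.

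The index $i$ depends on $\lambda$, so different eigenvalues may land in different discs. This is exactly why the conclusion is stated as a union rather than a single disc. The only step needing care is choosing $i$ to maximize $|v_i|$: that choice is what guarantees $v_i \neq 0$ (so the division is valid) and what bounds each $|v_j|/|v_i|$ by $1$. Everything else is routine. In the paper this lemma is presumably used afterwards with $A$ real symmetric, where the eigenvalues are real and the discs become intervals, to control the spectrum of a matrix built in the reduction from Maximum Cut.
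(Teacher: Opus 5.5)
Your proof is correct and complete. It is the standard argument (take the eigenvector coordinate of largest modulus and read off that row of $Av = \lambda v$), which is also how the cited reference, Horn and Johnson's Theorem 6.1.1, proves the result; the paper gives no proof of its own and cites that source before using the lemma, through the subsequent corollary on real symmetric matrices, to show the shifted adjacency matrix $M$ is positive semi-definite in the Max-Cut reduction, as you anticipated.
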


\begin{corollary}[Lower Bound on Minimum Eigenvalue of Real Symmetric Matrices] \label{cor: lower-bound-real-symmetric-eigenvalue}
    Let $A \in \R^{d \times d}$ be a symmetric matrix with eigenvalues $\lambda_1 \geq \ldots \geq \lambda_d$. Then,
    \begin{align*}
        \lambda_d \geq \min_{i \in [d]} \left( A_{ii} - \sum_{j \neq i} |A_{ij}| \right)
    \end{align*}
\end{corollary}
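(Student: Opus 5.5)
The plan is to prove the corollary directly from the Gershgorin Circle Theorem (Lemma \ref{lem:gershgorin}), specialized to real symmetric matrices. Since $A \in \R^{d \times d}$ is symmetric, the spectral theorem gives that all its eigenvalues $\lambda_1 \geq \ldots \geq \lambda_d$ are real. We may also regard $A$ as a complex matrix, so Lemma \ref{lem:gershgorin} applies verbatim and places every eigenvalue in the union of the discs $\{ z \in \mathbb{C} : |z - A_{ii}| \leq R_i(A) \}$, where $R_i(A) = \sum_{j \neq i} |A_{ij}|$.

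The key step is to apply this containment to the smallest eigenvalue $\lambda_d$. By Gershgorin, there exists some index $i \in [d]$ with $|\lambda_d - A_{ii}| \leq R_i(A)$. Because both $\lambda_d$ and $A_{ii}$ are real, this modulus is an ordinary absolute value, so the disc condition reduces to the real interval $\lambda_d \in [A_{ii} - R_i(A), A_{ii} + R_i(A)]$. In particular,
\begin{align*}
    \lambda_d \geq A_{ii} - \sum_{j \neq i} |A_{ij}| \geq \min_{k \in [d]} \left( A_{kk} - \sum_{j \neq k} |A_{kj}| \right),
\end{align*}
which is exactly the claimed bound.

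No step is a real obstacle here. The only point that needs care is justifying that $\lambda_d$ is real. Without that, an ordering $\lambda_1 \geq \ldots \geq \lambda_d$ would make no sense, and the disc condition could not be turned into a one-sided real inequality. Symmetry of $A$ settles this through the spectral theorem, and everything after that is immediate.
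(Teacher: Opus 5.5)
Your proposal is correct and follows essentially the same route as the paper. You apply Gershgorin (Lemma \ref{lem:gershgorin}) to $\lambda_d$, use realness of the spectrum to reduce the disc to the interval $[A_{ii} - R_i(A), A_{ii} + R_i(A)]$, and then take the minimum over $i$; your explicit appeal to the spectral theorem for the realness of $\lambda_d$ states a step the paper leaves implicit.
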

\begin{proof}
    By Lemma \ref{lem:gershgorin}, every eigenvalue of $A$ lies inside the union of discs
    \begin{align*}
        \bigcup_{i \in [d]} \{ z \in \mathbb{C} : A_{ii} - R_i(A) \leq z \leq A_{ii} + R_i(A) \}
    \end{align*}
    It follows that
    \begin{align*}
        \lambda_d \geq \min_{i \in [d]} \left( A_{ii} - R_i(A) \right) = \min_{i \in [d]} \left( A_{ii} - \sum_{j \neq i} |A_{ij}| \right)
    \end{align*}
\end{proof}

\begin{lemma}\label{lem:psd-cov}
    Let $M \in \R^{d \times d}$ be a symmetric positive semi-definite matrix. Then, there exists a random variable $\bx$ such that
    \begin{align*}
        \Cov(\bx) = \E[\bx \bx^{\t}] = M.
    \end{align*}
\end{lemma}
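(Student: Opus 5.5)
The plan is to take $\bx$ to be a centered multivariate Gaussian with covariance $M$, built explicitly from a matrix square root of $M$. Since $M$ is real symmetric, the spectral theorem gives an orthogonal $U$ and a diagonal $\Lambda = \mathrm{diag}(\mu_1,\ldots,\mu_d)$ with $M = U\Lambda U^{\t}$. Positive semi-definiteness gives $\mu_i \ge 0$ for all $i$, so $\Lambda^{1/2} = \mathrm{diag}(\sqrt{\mu_1},\ldots,\sqrt{\mu_d})$ is a well-defined real matrix. Set $B := U\Lambda^{1/2}$, so that $BB^{\t} = U\Lambda U^{\t} = M$. (A Cholesky-type factorization $M = LL^{\t}$ would serve equally well.)

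Next, let $\bz \sim \calN(0, I_d)$ be a vector of i.i.d.\ standard normals, and define $\bx := B\bz$. Linearity of expectation gives $\E[\bx] = B\,\E[\bz] = 0$. Because $\bx$ is centered, its covariance equals its second-moment matrix, which justifies the identification $\Cov(\bx) = \E[\bx\bx^{\t}]$ in the statement. We then compute
\[
\E[\bx\bx^{\t}] = B\,\E[\bz\bz^{\t}]\,B^{\t} = B I_d B^{\t} = M,
\]
and this finishes the proof.

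As a byproduct, $\bx$ is Gaussian, since it is a linear image of a Gaussian vector. This is presumably what the later NP-hardness reduction requires, because Theorem~\ref{thm:addv-np-hard} asks for a Gaussian input distribution. I expect no real obstacle here. The only points needing care are that the square root exists because the eigenvalues are nonnegative, and that centering is what makes covariance and second moment coincide.
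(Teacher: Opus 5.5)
Your proposal is correct and matches the paper's proof: the paper also takes $\bz \sim \calN(0,I)$, sets $\bx := M^{1/2}\bz$, and computes $\E[\bx\bx^{\t}] = M^{1/2} I M^{1/2} = M$ using $\E[\bx] = 0$. Your use of the factor $B = U\Lambda^{1/2}$ with $BB^{\t} = M$, instead of the symmetric square root, is an inessential variation.
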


\begin{proof}
Define $\bz \sim \calN(0,I)$ and $\bx := M^{1/2} \bz$.
$\bx$ is well-defined as $M$ is symmetric and positive semi-definite, and thus has a real square root $M^{1/2}$.
Therefore,
\begin{align*}
    \Cov(\bx) &= \E[(\bx - \E[\bx]) (\bx- \E[\bx])^{\t}] \\
    &= \E[\bx \bx^{\t}] \tag{$\E[\bx] = \E[M^{1/2} \bz] = 0$} \\
    &= M^{1/2} \E[\bz \bz^{\t}] M^{1/2} \\
    &= M^{1/2} I M^{1/2} = M.\qedhere
\end{align*}
\end{proof}

\ADDVNPHard
\begin{proof}
    We show a reduction from the decision version of unweighted Max-Cut.
    Consider an instance given by an unweighted (simple) graph $G$. The algorithm is as follows.

    \begin{algorithm}[H]
        \caption{Max-Cut via $\objf$}
        \label{alg: np-hardness-reduction}
        \KwIn{Graph $G$, Oracle access to $\objf_{\calX}(w, Q)$}
        \KwOut{Max-Cut$(G)$}
        \BlankLine
        Construct adjacency matrix $A$ of $G$ \\
        Let $y := \min_{i \in [d]} \left( A_{ii} - \sum_{j \neq i} |A_{ij}| \right)$ and $M:= A - yI$ \\
        Let $\calD^{\text{opt}} := \objf_{\calN(0,M)}(0, \{-1,1\})$ \\
        Let $\hat{w}^{\text{opt}}$ be an arbitrary vector from the support of $\calD^{\text{opt}}$ \\
        Return $\frac{1}{4} \sum_{i \neq j} M_{ij} - \frac{1}{4} \sum_{i\neq j}\hat{w}^{\text{opt}}_i \hat{w}^{\text{opt}}_j M_{ij}$
    \end{algorithm}
    The adjacency matrix $A \in \R^{d \times d}$ of $G$ is
    \[
        A_{ij} := \begin{cases} 1 \text{ if } (i,j) \in G \\
                                0 \text{ otherwise} \end{cases}
    \]
    Then, define $y := \min_{i \in [d]} \left( A_{ii} - \sum_{j \neq i} |A_{ij}| \right)$ and let $M :=A - yI$. Observe that $M$ is positive semi-definite and symmetric. To see why this is the case, let $(v, \lambda)$ be an eigenvector-eigenvalue pair of $A$. Then,
     \begin{align*}
         Mv = (A - y I ) v = Av - yIv = \lambda v - yv = (\lambda - y) v,
     \end{align*}
     so the eigenvalues of $M$ are $\lambda_1 - y \geq \ldots \geq \lambda_d - y$.
     By Corollary \ref{cor: lower-bound-real-symmetric-eigenvalue}, $\lambda_d \geq y$, and all eigenvalues of $M$ are real and non-negative; hence $M$ is positive semi-definite. Symmetry of $M$ follows immediately from symmetry of $A$.

    We now construct a corresponding instance of $\objf$ and show that its optimum yields a solution to Max-Cut on $G$. In particular, consider the instance given by $w= 0 \in \R^d$, $Q = \{-1, 1\}$, and $\calX = \calN(0, M)$. Note that $\calX$ is a valid distribution due to Lemma \ref{lem:psd-cov}, and recall that $\Omega$ denotes the set of unbiased rounding distributions with support $\subseteq Q^d$. Then, the $\objf$ objective cost is given by

    \begin{align*}
        \min_{\calD \in \Omega} \E_{\bx \sim \calN(0, M)} \left[ \Var_{\bwh \sim \calD}[\la \bwh, \bx \ra] \right] &= \min_{\calD\in \Omega} \E_{\substack{\bx \sim \calN(0, M) \\ \bwh \sim \calD}} [\la \bwh, \bx \ra^2] \tag{$\E_{\bwh \sim \calD}[\la \bwh, \bx \ra] = \la w, x \ra = 0$}\\
        &= \min_{\calD\in \Omega} \E_{\bwh \sim \calD} \left[\sum_{i, j = 1}^{d} \bwh_i \bwh_j \cdot \E_{\bx \sim \calN(0, M)}[\bx_i \bx_j]\right] \\
        &= \min_{\calD\in \Omega} \E_{\bwh \sim \calD} \left[\sum_{i=1}^{d} \bwh_i^2 \cdot \E_{\bx \sim \calN(0, M)}[\bx_i^2] + \sum_{i \neq j} \bwh_i \bwh_j \cdot \E_{\bx \sim \calN(0, M)}[\bx_i \bx_j]\right] \\
        &= \min_{\calD\in \Omega} \E_{\bwh \sim \calD} \left[\sum_{i=1}^{d} \E_{\bx \sim \calN(0, M)}[\bx_i^2] + \sum_{i \neq j} \bwh_i \bwh_j \cdot \E_{\bx \sim \calN(0, M)}[\bx_i \bx_j]\right] \tag{$Q = \{-1,1\}$ } \\
        &= \min_{\calD\in \Omega} \E_{\bwh \sim \calD} \left[\sum_{i=1}^{d} M_{ii} + \sum_{i \neq j} \bwh_i \bwh_j \cdot M _{ij} \right] \tag{$\Cov(\bx) = M$}
    \end{align*}
    We now construct an optimal distribution $\calD^{\ast}$ for $\objf_{\calN(0,M)}(0, \{-1,1\})$. Define
    \begin{align*}
        \hwast := \argmin_{\what \in \{-1, 1\}^{d}} \left[ \sum_{i=1}^{d} M_{ii} + \sum_{i \neq j} \what_i \what_j \cdot M _{ij} \right] = \argmin_{\what \in \{-1, 1\}^{d}} \left[ \sum_{i \neq j} \what_i \what_j \cdot M _{ij} \right]
    \end{align*}
    Since the objective is even, let $\calD^{\ast}$ place equal weight on $\hwast$ and $-\hwast$ and observe that
    \begin{align*}
        \E_{\bwh \sim \calD^{\ast}} \left[\sum_{i=1}^{d} M_{ii} + \sum_{i \neq j} \bwh_i \bwh_j M_{ij} \right] &= \sum_{i=1}^{d} M_{ii} + \sum_{i \neq j} \hwast_i \hwast_j M_{ij}\\
        &= \min_{\calD \in \Omega} \E_{\bwh \sim \calD} \left[\sum_{i=1}^{d} M_{ii} + \sum_{i \neq j} \bwh_i \bwh_j \cdot M _{ij} \right]
    \end{align*}
    by definition of $\hwast$. Additionally, for any $x \in \text{supp}(\calX)$,
    \begin{align*}
        \E_{\bwh \sim \calD^{\ast}}[\la \bwh, x \ra] = \la \hwast, x \ra + \la -\hwast, x \ra = 0 = \la w, x \ra
    \end{align*}
    Thus, $\calD^{\ast}$ is an optimal distribution giving objective value
    \begin{align*}
         \sum_{i=1}^{d} M_{ii} + \min_{\what \in \{-1, 1\}^d} \sum_{i \neq j} \what_i \what_j M_{ij}
    \end{align*}
    $\calD^{\ast}$ may not be the unique optimal distribution, but any optimal distribution must be supported only on vectors $\hat{w}$ that minimize the quantity $\sum_{i \neq j} \what_i \what_j M_{ij}$ (since its objective value is the same as $\calD^{\ast}$). Therefore, given any optimal distribution $\calD^{\text{opt}}$, we can extract an arbitrary vector $\hat{w}^{\text{opt}}$ from its support.   

    Finally, observe that the max-cut objective on graph $G$ can be written as
     \begin{align*}
         \text{Max-Cut}(G) &:= \max_{\what \in \{-1,1\}^d} \frac{1}{2} \sum_{i \neq j} \frac{(\what_i - \what_j)^2}{4} \cdot M_{ij} = \frac{1}{4} \sum_{i \neq j} M_{ij} - \min_{\what \in \{-1,1\}^d} \frac{1}{4} \sum_{i \neq j} \what_i \what_j \cdot M_{ij}
     \end{align*}
     or equivalently
     \begin{align*}
        \text{Max-Cut}(G) &:=\frac{1}{4} \sum_{i \neq j} M_{ij} - \frac{1}{4} \sum_{i\neq j}\hat{w}^{\text{opt}}_i \hat{w}^{\text{opt}}_j M_{ij}
     \end{align*}

    \paragraph{Runtime.} It is clear that constructing $A$, $y$, and $M$ take $O(d^2)$ time. Because computing and reporting $\objf_{\calN(0, M)}(0, \{-1,1\})$ is assumed to take polynomial time, extracting $w^{\text{opt}}$ from the support of $\calD^{\text{opt}}$ also takes polynomial time. Therefore, Algorithm \ref{alg: np-hardness-reduction} is a polynomial-time reduction from unweighted Max-Cut to $\objf$. Because Max-Cut on unweighted, simple graphs is NP-Hard \cite{garey1974some}, this proves the claim. 
\end{proof}

\newcommand{\qopt}{Q^{\text{opt}}}
\newcommand{\Qstar}{Q^{\ast}}

\newcommand{\qiup}{q_i^{\scriptscriptstyle \uparrow}}
\newcommand{\qidown}{q_i^{\scriptscriptstyle \downarrow}}

\newcommand{\qup}{q^{\scriptscriptstyle \uparrow}}
\newcommand{\qdown}{q^{\scriptscriptstyle \downarrow}}

\newcommand{\qapprox}{Q_{\text{approx}}}

\section{Average Directional Variance} \label{sec: adv}

We begin by making important observations about the $\objt$ objective and defining quantities that will be useful for the algorithms in the remainder of the section. 
Oftentimes, we also think of $w$ as a multiset. 

Furthermore, we assume that the input distribution $\calX$ has $\supp(\calX) \subseteq \R^d$ and has finite marginal second moments (i.e. $\E_{\bx \sim \calX}[\bx_i^2]$ is finite for all $i \in [d]$). The problems we consider become trivial otherwise. Our algorithms also assume full-precision access to the marginal second moments. 

\begin{lemma} \label{lem: endpoints-in-quant-set}
    Let $\calX$ be an arbitrary input distribution and $w\in \mathbb{R}^d$ be sorted such that $w_1\leq \cdots \leq w_d$.
    Let $\Qstar$ be such that $|\Qstar| = s$ and $\objt_{\calX}(w, \Qstar) = \objt_{\calX}(w,s)$. Then, $w_1, w_d \in \Qstar$. 
\end{lemma}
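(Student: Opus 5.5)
The key observation is that standard stochastic quantization is only well defined when every coordinate has a quantization point on each side. For $\bwh\sim\dasq(w,Q)$ to exist, both $\wup{1}=\min\{q\in Q: q\ge w_1\}$ and $\wdown{1}=\max\{q\in Q:q\le w_1\}$ must exist, and likewise for $w_d$. This is implicit in the definition of $\dasq$, and I would make it explicit as the convention that $\objt_{\calX}(w,Q)$ is only defined (or is $+\infty$) for sets $Q$ with $\min Q\le w_1$ and $\max Q\ge w_d$.

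So any optimal $\Qstar$ satisfies $\min\Qstar\le w_1$ and $\max \Qstar\ge w_d$. The plan is an exchange argument. Suppose $q_{\min}:=\min \Qstar<w_1$, and let $Q':=(\Qstar\setminus\{q_{\min}\})\cup\{w_1\}$. If $w_1$ already lies in $\Qstar$, then $Q'$ is smaller, which is still allowed since the constraint is $|Q|\le s$; we can pad it back to size $s$ at no cost, as adding points never increases any $\VarASQ$. Each term of $\objt_{\calX}(w,Q)=\sum_i\lambda_i\VarASQ(w_i,Q)$ changes as follows.

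\begin{itemize}
\item Coordinates whose lower neighbor $\widown$ is not $q_{\min}$ keep both neighbors, because $w_1\le w_i$ and $q_{\min}$ was not adjacent to them.
\item Coordinates with $\widown=q_{\min}$ lie in $[w_1,q_2)$, where $q_2$ is the second smallest element of $\Qstar$. Their new lower neighbor is $w_1$, which is at least $q_{\min}$, so $(w_i-\widown)$ weakly decreases while $\wiup$ is unchanged.
\end{itemize}

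Hence no term increases, and the coordinate $w_1$ itself goes from $(\wup{1}-w_1)(w_1-q_{\min})>0$ to $0$. The strict drop requires $\wup{1}>w_1$. That holds because $w_1\notin\Qstar$: if $w_1$ were in $\Qstar$, then $\min\Qstar\le w_1$ would force $q_{\min}=w_1$ or $q_{\min}<w_1\in\Qstar$, and the latter is exactly the padding case. The symmetric argument applies to $w_d$.

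The main obstacle is the strictness needed to contradict optimality. The decrease is strict only when $\lambda_1>0$, or when some coordinate equal to $w_1$ has positive weight. If $\lambda_1=0$, an optimal set could place its minimum anywhere below $w_1$, and the lemma as literally stated fails. I would handle this in one of two ways. The first is to interpret the lemma as ``there exists an optimal $\Qstar$ containing $w_1,w_d$'', which the exchange argument gives directly since the objective never increases. The second is to assume $\lambda_i>0$ for the extreme coordinates, which the paper may intend through its remark that problems become trivial otherwise. I would present the weak-inequality exchange argument and state the strict version under the positivity assumption.

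A second subtle point is the case $w_1\in\Qstar$ but $q_{\min}<w_1$. There, removing $q_{\min}$ changes nothing, because no coordinate uses $q_{\min}$ as a neighbor. So $\Qstar$ trivially contains $w_1$, and that case is already done.
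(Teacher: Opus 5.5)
Your exchange argument is essentially the paper's proof, which replaces $\wdown{1}(\Qstar)$ by $w_1$ and checks that no term of $\sum_i \lambda_i \VarASQ(w_i,Q)$ increases. Your strictness caveat is correct and exposes a gap the paper glosses over: its chain only gives $\objt_{\calX}(w,Q')\le\objt_{\calX}(w,Q)$, and for $w=(0,1,2)$, $\lambda=(0,1,1)$, $s=3$ the set $\{-5,1,2\}$ is optimal with cost $0$. So the statement needs a positivity assumption such as $\lambda_1,\lambda_d>0$ or should be read existentially, and strictness in fact holds whenever some coordinate in $[w_1,\wup{1}(\Qstar))$ has positive weight, a bit more often than you state.

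One small repair: remove $\wdown{1}(\Qstar)$, the largest point of $\Qstar$ below $w_1$, rather than $\min\Qstar$. When $\Qstar$ has several points below $w_1$, your first bullet is false (coordinates in $[w_1,\wup{1}(\Qstar))$ have their lower neighbor raised to $w_1$), and the old variance of $w_1$ is $(\wup{1}-w_1)(w_1-\wdown{1})$ rather than your expression in $q_{\min}$. The conclusion survives, because raising a lower neighbor only decreases variance.
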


\begin{lemma} \label{lem: quant-set-inside-w}
    For every input distribution $\calX$, $w \in \R^d$, and $s \in \N$, there exists a quantization set $Q^{\ast} \subseteq \R$ such that (i) $|Q^{\ast}| = s$, (ii) $Q^{\ast} \subseteq w$, and (iii) $\objt_{\calX}(w, Q^{\ast}) = \objt_{\calX}(w, s)$. 
\end{lemma}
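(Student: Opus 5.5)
We prove Lemma \ref{lem: quant-set-inside-w} by starting from an arbitrary optimal set and moving its points, one at a time, onto entries of $w$ without increasing the cost.

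\emph{Existence of an optimum.} The cost $\objt_{\calX}(w,Q)=\sum_i \lambda_i \VarASQ(w_i,Q)$ is finite only when $Q$ contains a point $\leq w_1$ and a point $\geq w_d$. Any points of $Q$ lying outside $[w_1,w_d]$ can be clamped to $w_1$ or $w_d$. This never increases any $\VarASQ(w_i,Q)$, because each entry's bracketing interval can only shrink. So we may restrict to sets $Q\subseteq[w_1,w_d]$ that contain $w_1,w_d$, in the spirit of Lemma \ref{lem: endpoints-in-quant-set}. The cost is continuous on this compact domain, parametrized by the sorted $s$-tuples in $[w_1,w_d]^s$. (Continuity holds even when points coincide, since a coincident pair just behaves like a single point.) Hence a minimizer $\Qstar$ with $|\Qstar|\leq s$ exists. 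If $|\Qstar|<s$, we pad it with further entries of $w$; adding points never increases cost. If $w$ has fewer than $s$ distinct values, we interpret $Q^\ast\subseteq w$ with $Q^\ast$ equal to the set of all distinct values, which gives cost $0$.

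\emph{Snapping step.} Take $q\in\Qstar$ with $q\notin w$, and let $a<q<b$ be its neighbors in $\Qstar$. The only entries whose variance depends on $q$ are those $w_i\in(a,b)$. For $w_i\in(a,q)$ the contribution is $\lambda_i(q-w_i)(w_i-a)$, and for $w_i\in(q,b)$ it is $\lambda_i(b-w_i)(w_i-q)$. Let $w^-$ be the largest entry of $w$ in $(a,q)$ (or $a$ if there is none) and $w^+$ the smallest entry in $(q,b)$ (or $b$ if there is none). As $q$ moves within $[w^-,w^+]$, no entry changes sides, so the total cost is an \emph{affine} function of $q$ on that interval. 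An affine function attains its minimum at an endpoint, so moving $q$ to $w^-$ or $w^+$ does not increase the cost.

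\emph{Checking the snapped set.} If the endpoint chosen is $a$ or $b$, the set loses a point, and we pad it with an unused entry of $w$, which does not hurt. Each snap strictly reduces the number of points of $Q$ that are not entries of $w$. Iterating therefore terminates with an optimal set $Q^\ast\subseteq w$ of size $s$, establishing (i)--(iii).

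\emph{Main obstacle.} The delicate part is the bookkeeping: verifying affinity when entries coincide with $q$'s neighbors, and handling the degenerate cases where a point collapses onto its neighbor or $w$ has few distinct values. The core idea is simply that the cost is piecewise linear in each quantization point.
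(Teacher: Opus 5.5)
Your proof is correct and follows the paper's route. In both, the cost is affine in a single off-data point $q$ as it moves between the adjacent entries of $w$, so $q$ can be snapped onto one of them without increasing the cost, and one iterates. Your bookkeeping is somewhat more careful than the paper's, which assumes an optimum exists, gets zero slope ($\gamma=0$) from optimality, and makes strict-improvement claims in its preliminary case analysis that can fail when some $\lambda_i=0$. Your weak-inequality endpoint argument with merging and padding, together with the compactness argument for existence, avoids these issues.
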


We now define and prove a property of the optimization problem that is the crux of all the results in this section. We first define a shorthand, with $w$ indexed so $w_1 \leq \cdots \leq w_d$,
\begin{align*}
    C[j, k] &\triangleq \sum_{i= j}^{k} \lambda_i \cdot (w_k - w_i) (w_i - w_j) 
\end{align*}
so that $C \in \R^{d \times d}$. We use the standard convention that $C[j,k] = 0$ if $j > k$. Here, $\lambda_i := \E_{\bx \sim \calX}[\bx_i^2] \in \R^{+}$ denotes a weight associated with the element $w_i$. The quantity $C[j,k]$ represents the (weighted) sum of variances of points in the region $[w_j, w_k]$ assuming $w_j, w_k \in Q$.

\begin{lemma}[Lemma 4.2 of \cite{ben2024optimal}] \label{lem: C-totally-monotone}
    For every input distribution $\calX$, Matrix $C$ satisfies the Concave Monge property (Definition \ref{def: concave-monge-property}). 
\end{lemma}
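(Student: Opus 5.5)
We need to show that for $1 \le j < k < d$,
\[
C[j,k] + C[j+1,k+1] \le C[j,k+1] + C[j+1,k],
\]
where $C[j,k] = \sum_{i=j}^{k} \lambda_i (w_k - w_i)(w_i - w_j)$ and $w_1 \le \cdots \le w_d$. The plan is a direct term-by-term comparison: write both sides as sums over the index $i$ and show the inequality holds for each $i$ separately, using only $\lambda_i \ge 0$ and sortedness. Since every $\lambda_i = \E_{\bx\sim\calX}[\bx_i^2]$ is nonnegative, it suffices to check the coefficient of each $\lambda_i$.

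Fix $i$ and abbreviate $f(a,b;i) := (w_b - w_i)(w_i - w_a)$ when $a \le i \le b$, and $0$ otherwise. The endpoint terms vanish automatically: $f(a,b;a) = f(a,b;b) = 0$. This lets us extend each sum over the whole interval $[j, k+1]$ without changing any value. We then split into cases according to where $i$ lies.

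\emph{Case $i = j$.} Every term involving $i$ equals zero, since either $i$ is the left endpoint or it lies outside the interval.

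\emph{Case $i = k+1$.} Likewise every term vanishes.

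\emph{Case $j+1 \le i \le k$.} Here all four intervals contain $i$, and we must show
\[
(w_k - w_i)(w_i - w_j) + (w_{k+1} - w_i)(w_i - w_{j+1}) \le (w_{k+1} - w_i)(w_i - w_j) + (w_k - w_i)(w_i - w_{j+1}).
\]
The expression is bilinear in $(w_{\text{right}} - w_i,\, w_i - w_{\text{left}})$. Moving all terms to the right, the difference RHS $-$ LHS factors as
\[
\bigl[(w_{k+1} - w_i) - (w_k - w_i)\bigr]\cdot\bigl[(w_i - w_j) - (w_i - w_{j+1})\bigr] = (w_{k+1} - w_k)(w_{j+1} - w_j) \ge 0,
\]
which is nonnegative by sortedness. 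This is just the rectangle inequality for the product $xy$.

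Multiplying each case by $\lambda_i \ge 0$ and summing over $i$ gives the claim. One bookkeeping point remains: when $j+1 > k$, the interval for $C[j+1,k]$ is empty, so it equals $0$ by convention. This only arises when $k = j$, which is excluded since $j < k$. For $j < k$ we have $j+1 \le k$, so all intervals are valid.

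The main obstacle is purely this bookkeeping, namely confirming that the boundary indices $i \in \{j, k+1\}$ contribute zero on both sides. The endpoint-vanishing observation handles it; if ties among the $w_i$ arise, the argument is unchanged because only $\le$ is used.
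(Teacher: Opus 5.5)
Your proof is correct and is essentially the paper's argument. The paper proves the more general quadrangle inequality $C[a,c]+C[b,f]\le C[a,f]+C[b,c]$ for $a\le b\le c\le f$ by the same index-by-index comparison: on the middle range $[b,c]$ it uses the identical rectangle identity, where the difference is $(w_b-w_a)(w_f-w_c)\ge 0$, and on the outer ranges $[a,b]$ and $[c,f]$ it uses monotonicity of the product in the outer endpoint. Specializing to $(a,b,c,f)=(j,j+1,k,k+1)$, those outer ranges collapse to your vanishing endpoint terms, so your version is a clean special case that already suffices for the concave Monge property as defined (and it recovers the general inequality by telescoping).
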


\begin{lemma} \label{lem: C-sorted}
    For every input distribution $\calX$, Matrix $C$ is sorted (Definition \ref{def: sorted-matrix}).
\end{lemma}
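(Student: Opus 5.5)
We need to show that $C$ is sorted in the sense of Definition \ref{def: sorted-matrix}: each row is non-decreasing in the column index $k$, and each column is non-increasing in the row index $j$. The plan is to prove each monotonicity directly from the definition of $C$, using termwise comparison. Throughout, $w$ is sorted so that $w_1 \leq \cdots \leq w_d$, and every $\lambda_i = \E_{\bx \sim \calX}[\bx_i^2] \geq 0$.

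\emph{Rows.} Fix $j$ and compare $C[j,k]$ with $C[j,k+1]$. If $k < j$, then $C[j,k] = 0$. Since every entry of $C$ is a sum of nonnegative terms, $C[j,k+1] \geq 0$, and the comparison is immediate. For $k \geq j$ we write
\begin{align*}
    C[j,k+1] - C[j,k] = \sum_{i=j}^{k} \lambda_i (w_i - w_j)(w_{k+1} - w_k) + \lambda_{k+1}(w_{k+1}-w_{k+1})(w_{k+1}-w_j).
\end{align*}
The last term is zero. Each term in the sum is a product of nonnegative factors by sortedness, so the difference is nonnegative and rows are non-decreasing.

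\emph{Columns.} Fix $k$ and compare $C[j,k]$ with $C[j+1,k]$. If $j+1 > k$, then $C[j+1,k] = 0 \leq C[j,k]$. Otherwise,
\begin{align*}
    C[j,k] - C[j+1,k] = \lambda_j (w_k - w_j)(w_j - w_j) + \sum_{i=j+1}^{k} \lambda_i (w_k - w_i)(w_{j+1} - w_j).
\end{align*}
Again the first term vanishes and every other term is nonnegative. Hence columns are non-increasing.

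The only subtle point is the boundary handling. The convention $C[j,k]=0$ for $j>k$ must be checked against nonnegativity of the remaining entries, which holds because every summand $\lambda_i (w_k-w_i)(w_i-w_j)$ has $i \in [j,k]$ and is therefore nonnegative. We should also confirm that the conventions for "sorted" match: rows non-decreasing and columns non-increasing, as stated in Definition \ref{def: sorted-matrix}. They do, so Lemma \ref{lem: k-selection} applies to $C$.

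Beyond this bookkeeping the argument is purely termwise, and the proof holds for every input distribution $\calX$ with finite marginal second moments.
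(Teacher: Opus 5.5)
Your proof is correct and takes essentially the same approach as the paper's. Both arguments are termwise comparisons: since $\lambda_i \geq 0$ and $w$ is sorted, enlarging the interval $[w_j,w_k]$ on either side only increases each summand and adds nonnegative terms. Your adjacent-difference identities make this exact, and you also check the convention $C[j,k]=0$ for $j>k$ at the boundary, which the paper's proof (restricted to $i<j<k$ and $i<k$) leaves implicit.
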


The proofs of Lemmas \ref{lem: endpoints-in-quant-set}, \ref{lem: quant-set-inside-w}, \ref{lem: C-totally-monotone}, and \ref{lem: C-sorted} are deferred to Appendix \ref{sec: missing-proofs}. Note that Lemma \ref{lem: C-totally-monotone} is the weighted version of Lemma 4.2 in \cite{ben2024optimal}. The proof closely follows that of Lemma 4.2 in \cite{ben2024optimal}.

\begin{lemma} \label{lem: efficiently-compute-C}
    Given (unsorted) $w\in \mathbb{R}^d$, with $O(d \log d)$ pre-processing time, any entry of $C[i,j]$ can be computed in $O(1)$ time.  
\end{lemma}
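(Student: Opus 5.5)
Sort $w$ in $O(d\log d)$ time and reindex so that $w_1 \le \cdots \le w_d$, permuting the weights $\lambda_i$ along with the entries. Then expand the summand of $C[j,k]$ as a polynomial in $w_i$ whose coefficients depend only on the endpoints $j,k$:
\begin{align*}
    \lambda_i (w_k - w_i)(w_i - w_j) = -w_j w_k \cdot \lambda_i + (w_j + w_k)\cdot \lambda_i w_i - \lambda_i w_i^2 .
\end{align*}
Summing over $i \in \{j,\dots,k\}$ gives
\begin{align*}
    C[j,k] = -w_j w_k \sum_{i=j}^{k} \lambda_i + (w_j + w_k)\sum_{i=j}^{k} \lambda_i w_i - \sum_{i=j}^{k} \lambda_i w_i^2 .
\end{align*}
So $C[j,k]$ is determined by three range sums of the sorted sequence together with $w_j$ and $w_k$.

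Next, precompute three prefix-sum arrays in $O(d)$ time after sorting: $P_0[t] := \sum_{i \le t} \lambda_i$, $P_1[t] := \sum_{i \le t} \lambda_i w_i$, and $P_2[t] := \sum_{i\le t} \lambda_i w_i^2$, with $P_\ell[0] = 0$. Each range sum is then $\sum_{i=j}^{k} \lambda_i w_i^\ell = P_\ell[k] - P_\ell[j-1]$. Evaluating $C[j,k]$ therefore costs $O(1)$ arithmetic operations: three subtractions, a few products, and array lookups. The convention $C[j,k]=0$ for $j>k$ is a single comparison. The $\lambda_i$ are given by full-precision access to the marginal second moments, as the paper assumes, so reading them is $O(1)$ each.

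The total preprocessing is $O(d\log d) + O(d) = O(d\log d)$, and each query takes $O(1)$. There is no real obstacle here. The only points that need care are:
\begin{itemize}
    \item the weights must be permuted consistently with the sort, since $\lambda_i$ is attached to the original coordinate;
    \item the indices $j,k$ refer to positions in the sorted order, which is how $C$ is defined.
\end{itemize}
One could also mention numerical stability, for example centering the values before forming $P_2$. This does not affect the exact-arithmetic claim.
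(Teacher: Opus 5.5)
Your proposal is correct and matches the paper's proof: sort $w$, expand the summand into terms in $\lambda_i$, $\lambda_i w_i$, and $\lambda_i w_i^2$, and answer each query from three prefix-sum arrays. The only cosmetic difference is that the paper drops the vanishing $i=j$ term and uses $\alpha[k]-\alpha[j]$, while you keep that term and use $P_\ell[k]-P_\ell[j-1]$, which gives the same value; your note about permuting the $\lambda_i$ along with the sort is a detail the paper leaves implicit.
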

\begin{proof}
    Sort $w$ so that $w_1 \leq \cdots \leq w_d$.
    We define prefix sum vectors
    \begin{align*}
        \alpha[j] = \sum_{i=1}^{j} \lambda_i \quad \quad \quad \quad \beta[j] = \sum_{i = 1}^{j} \lambda_i w_i \quad \quad \quad \quad \gamma[j] = \sum_{i=1}^{j} \lambda_i w_i^2
    \end{align*}
    Then,
    \begin{align*}
        C[j, k] &:= \sum_{i= j}^{k} \lambda_i \cdot (w_k - w_i) (w_i - w_j) = \sum_{i= j+1}^{k} \lambda_i \cdot (w_k - w_i) (w_i - w_j) \\
        &= - w_j w_k \sum_{i=j+1}^{k} \lambda_i + (w_j + w_k) \cdot \sum_{i=j+1}^{k} \lambda_i w_i - \sum_{i = j+1}^{k} \lambda_i w_i^2 \\
        &= -w_j w_k \cdot (\alpha[k] - \alpha[j]) + (w_j + w_k) \cdot (\beta[k] - \beta[j]) - (\gamma[k] - \gamma[j])
    \end{align*}
    \paragraph{Runtime.} Sorting $w$ takes $O(d \log d)$ time. Then, the prefix sums $\alpha, \beta, \gamma$ can be computed in $O(d)$ time once the vector $w$ is sorted. Thus, the total pre-processing time is $O(d \log d)$. Computing a specific entry $C[j,k]$ can be done with $O(1)$ accesses to $w$ and the prefix sums $\alpha, \beta, \gamma$. 
\end{proof} 

\subsection{Exact Algorithms} \label{sec: ADV-exact-algorithms}

In Appendix K of \cite{ben2024optimal}, the authors note that optimizing the weighted MSE objective can be solved using a slight variant of the dynamic programming and matrix-search based algorithm presented in their paper.
The crux of the improvement in their result comes from the total monotonicity of matrix $C$ that allows a more efficient computation of the dynamic program. 

The survey paper of \cite{GKMNSS17} outlines many results from the $1$-dimensional $k$-means clustering literature, which turn out to immediately give algorithms for the problem of vector quantization with error measured by the weighted MSE objective. For example, it turns out that the exact same dynamic programming and matrix-searching solution of Corollary \ref{cor: exact-weighted-mse} of \cite{ben2024optimal} was actually first published by \cite{Wu91}, in the setting of 1D $k$-means.
Formally, this implies the following about objective $\objt$.

\begin{corollary}[\cite{Wu91}, Adapted Algorithm 1 of \cite{ben2024optimal}] \label{cor: exact-weighted-mse}
    There exists an algorithm such that for any $w \in \R^d$, input distribution $\calX$, and target quantization set size $s \in \N$, returns a quantization set $Q$ such that $|Q| = s$ and $\objt_{\calX}(w, Q) = \objt_{\calX}(w,s)$. The algorithm has runtime $O(d \log d + d \cdot s)$ and uses $O(ds)$ space. 
\end{corollary}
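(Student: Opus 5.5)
We will prove Corollary \ref{cor: exact-weighted-mse} by reducing $\objt_{\calX}(w,s)$ to a length-constrained shortest path problem over the sorted entries of $w$, and then solving that problem with a dynamic program accelerated by \texttt{SMAWK}.

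\textbf{Step 1: preprocessing.} First sort $w$ so that $w_1 \leq \cdots \leq w_d$, carrying each weight $\lambda_i = \E_{\bx\sim\calX}[\bx_i^2]$ along with its entry. Then build the prefix sums of Lemma \ref{lem: efficiently-compute-C}, so that every entry $C[j,k]$ can be queried in $O(1)$ time. This costs $O(d\log d)$ overall.

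\textbf{Step 2: reduction to a path problem.} By Lemma \ref{lem: quant-set-inside-w} we may restrict to quantization sets $Q\subseteq w$. By Lemma \ref{lem: endpoints-in-quant-set}, any optimal set contains $w_1$ and $w_d$. For such a set $Q=\{w_{i_1},\dots,w_{i_s}\}$ with $1=i_1<\cdots<i_s=d$, each element $w_i$ lies in exactly one interval $[w_{i_t}, w_{i_{t+1}}]$. Its variance $\VarASQ(w_i,Q)$ is $(w_{i_{t+1}}-w_i)(w_i-w_{i_t})$, and this vanishes at the shared endpoints, so no point is double counted. Hence
\[
\objt_{\calX}(w,Q)=\sum_{t=1}^{s-1} C[i_t,i_{t+1}].
\]
Minimizing over $Q$ is therefore exactly a shortest path from vertex $1$ to vertex $d$ using $s-1$ edges in the DAG with edge weights $C$. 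One edge case needs care: if $w$ has fewer than $s$ distinct values, we take $Q$ to be those values padded with arbitrary extra points, and the cost is $0$.

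\textbf{Step 3: dynamic program.} Define $D_t[k]$ as the minimum cost of covering $w_1,\dots,w_k$ using $t$ quantization points with $w_1$ and $w_k$ among them. Then
\[
D_1[1]=0, \qquad D_{t}[k]=\min_{j<k}\bigl(D_{t-1}[j]+C[j,k]\bigr).
\]
The answer is $D_s[d]$, and the optimal set is recovered by storing argmins, which takes $O(ds)$ space.

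\textbf{Step 4: fast layer computation.} Each layer is a row-minima problem on the matrix $A_t[k,j]=D_{t-1}[j]+C[j,k]$. Adding a function of $j$ alone preserves the concave Monge / quadrangle structure from Lemma \ref{lem: C-totally-monotone}, so $A_t$ is totally monotone. The entries with $j\geq k$ are invalid; we set them to $+\infty$ in the usual way that preserves total monotonicity, e.g.\ via the standard staircase trick or by Wu's argument. With this, \texttt{SMAWK} (Lemma \ref{lem: SMAWK}, applied in its row-minima form) computes each layer in $O(d)$ time from $O(1)$ entry queries. Over $s$ layers this gives $O(ds)$, for a total of $O(d\log d + ds)$ time.

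\textbf{Main obstacle.} The delicate step is verifying total monotonicity of $A_t$. This requires upgrading the adjacent-index concave Monge property of Lemma \ref{lem: C-totally-monotone} to the full quadrangle inequality by summing adjacent inequalities, checking the orientation so that row minima (rather than maxima) are monotone, and handling the $+\infty$ triangular region. I would handle the triangular region with the standard padding argument from \cite{Wu91}/\cite{ben2024optimal}.
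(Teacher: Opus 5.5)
Your proposal is correct and matches the argument the paper relies on by citation to \cite{Wu91} and \texttt{QUIVER}: sort, build the prefix sums of Lemma \ref{lem: efficiently-compute-C}, run the $s$-layer dynamic program over the sorted entries, and evaluate each layer in $O(d)$ with \texttt{SMAWK} via the quadrangle inequality for $C$, giving $O(d\log d + ds)$ time and $O(ds)$ space for the stored argmins. On your ``main obstacle'': the deferred proof of Lemma \ref{lem: C-totally-monotone} already proves the full quadrangle inequality, so no summing of adjacent inequalities is needed, and the $+\infty$ padding on entries with $j \geq k$ preserves total monotonicity directly, because in any $2\times 2$ submatrix either all four entries are finite (so the Monge inequality applies) or the top-right entry is $+\infty$, which makes the hypothesis of the implication false.
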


More generally, there is a spectrum of theoretical results that complete the state-of-the-art optimality profile in differing regimes of $k$ and $d$. In particular, the results outlined in the survey paper of \cite{GKMNSS17} can be applied to any problem whose structure can be shown to satisfy the Concave Monge property (Definition \ref{def: concave-monge-property}) (including but not limited to, $1$-dimensional $k$-means clustering, $1$-dimensional $k$-medians clustering, and $\objt_{\calX}$).

More specifically, the algorithms outlined in \cite{GKMNSS17} solve the following problem. Let $G$ be a complete, directed acyclic graph on $d$ vertices with weights $A \in \R^{d \times d}$ that satisfy the Concave Monge property (Definition \ref{def: concave-monge-property}). Then, given any two vertices $i, j \in [d]$ and integer $k$, we wish to find the minimum weight length-$k$ path between $i$ and $j$ in $G$. 

\cite{AST93} gives (i) an algorithm with runtime $O(d \sqrt{k \log d})$, and (ii) an algorithm with runtime $O(d \log \Delta)$ where $\Delta := \max_{i, j} A[i,j] - \min_{i, j} A[i,j]$. When $k = \Omega(\log d)$, \cite{Schieber98} gives an algorithm with runtime $d \smash 2^{O(\sqrt{\log \log d \log k})}$. At a high level, these algorithms work by solving the following \emph{regularized} version of the problem
\begin{align*}
    \min_{\ell, \{ i_1, \ldots, i_{\ell} \} \subseteq [d]} \sum_{j=1}^{\ell-1} C[i_j, i_{j+1}] + \tau \ell
\end{align*}
where the task is to find an integer $\ell$ and a path of length $\ell$ that minimizes the regularized objective function (i.e. each edge has additional cost $\tau$). In doing so, the results of \cite{AST93,Schieber98} use ideas and results from \cite{Wilber98} that solves the unregularized version (still assuming the weights are Concave Monge).

Notice that by Lemma \ref{lem: C-totally-monotone}, applying these algorithms to weights $C[i,j]$ directly gives a solution optimizing $\objt_{\calX}$, implying the following results. 

\begin{corollary}[\cite{AST93}] \label{cor: dag-exact-alg}
    There exist algorithms such that for any $w \in \R^d$, input distribution $\calX$, and target quantization set size $s \in \N$, return a quantization set $Q$ such that $|Q| = s$ and $\objt_{\calX}(w, Q) = \objt_{\calX}(w,s)$. The algorithms have runtime (i) $O(d \sqrt{s \log d})$ and (ii) $O(d \log \Delta)$ where $\Delta := \max_{i, j} C[i,j] - \min_{i, j} C[i,j]$.
\end{corollary}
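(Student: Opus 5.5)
The plan is to reduce $\objt_{\calX}(w,s)$ to a minimum-weight path problem with exactly $s-1$ edges in a complete DAG whose weights are given by the matrix $C$. Then the algorithms of \cite{AST93} apply directly, because $C$ satisfies the Concave Monge property by Lemma \ref{lem: C-totally-monotone}.

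\textbf{Step 1: restrict to sets of the right form.} Sort $w$ so that $w_1 \le \cdots \le w_d$. By Lemma \ref{lem: quant-set-inside-w} some optimal $Q^\ast$ is contained in $w$, and by Lemma \ref{lem: endpoints-in-quant-set} we have $w_1, w_d \in Q^\ast$. So it suffices to optimize over sets $Q = \{w_{i_1}, \ldots, w_{i_s}\}$ with $1 = i_1 < i_2 < \cdots < i_s = d$.

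\textbf{Step 2: write the cost as a path weight.} For such a $Q$, every $w_i$ lies in some interval $[w_{i_j}, w_{i_{j+1}}]$, and $\VarASQ(w_i,Q) = (w_{i_{j+1}} - w_i)(w_i - w_{i_j})$. Hence
\[
\objt_{\calX}(w,Q) = \sum_{j=1}^{s-1} C[i_j, i_{j+1}].
\]
Points lying on a boundary $w_{i_j}$ are counted in two consecutive blocks, but they contribute zero, so nothing is double-counted. Repeated values in $w$ also contribute zero variance and cause no trouble. Ties in the sort could in principle make $|Q| < s$. This is handled either by padding $Q$ with arbitrary extra points, which cannot increase any $\VarASQ$, or by noting the answer is then already optimal.

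\textbf{Step 3: invoke \cite{AST93}.} Consider the complete DAG on vertex set $[d]$ with edges $(j,k)$ for $j<k$ and weights $C[j,k]$. By Step 2, $\objt_{\calX}(w,s)$ equals the minimum weight of a path from $1$ to $d$ with exactly $s-1$ edges. By Lemma \ref{lem: C-totally-monotone}, $C$ is Concave Monge. By Lemma \ref{lem: efficiently-compute-C}, after $O(d\log d)$ preprocessing each entry of $C$ can be queried in $O(1)$ time. The two algorithms of \cite{AST93} then run in time $O(d\sqrt{k\log d})$ with $k = s-1$, and $O(d\log\Delta)$. The path they return yields $Q$ with $|Q| = s$.

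\textbf{Main obstacle: accounting for the sorting cost.} The $O(d\log d)$ preprocessing is dominated by bound (i) whenever $s\ge \log d$. For bound (ii), it is absorbed only if $\log\Delta \gtrsim \log d$, or if we accept the usual convention that the input is given sorted. I would state the corollary under the sorted-input convention, which matches how \cite{GKMNSS17} cite these bounds, and remark on the additive $O(d\log d)$ otherwise. A second, minor check is that the regularized Lagrangian approach in \cite{AST93} needs only the Concave Monge property and oracle access to the weights, not an explicit matrix. Both are available here.
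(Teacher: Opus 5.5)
Your proposal is correct and takes the same route as the paper, which simply notes that $C$ is Concave Monge (Lemma~\ref{lem: C-totally-monotone}) and applies the $k$-link shortest-path algorithms of \cite{AST93} to the weights $C[i,j]$; your Steps 1--2 spell out the reduction (via Lemmas~\ref{lem: quant-set-inside-w} and~\ref{lem: endpoints-in-quant-set}) that the paper leaves implicit. Your caveat about the additive $O(d\log d)$ sorting and prefix-sum cost is a fair point that the paper's statement glosses over; compare Corollary~\ref{cor: exact-weighted-mse}, which does include this term.
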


\begin{corollary}[\cite{Schieber98}]
    There exists an algorithm such that for any $w \in \R^d$, input distribution $\calX$, and target quantization set size $s = \Omega( \log d)$, returns a quantization set $Q$ such that $|Q| = s$ and $\objt_{\calX}(w, Q) = \objt_{\calX}(w,s)$. The algorithm has runtime $d2^{O(\sqrt{\log \log d \log s})}$.
\end{corollary}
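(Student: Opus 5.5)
This corollary follows by reducing $\objt_{\calX}$ to a length-constrained shortest path problem on a DAG with Concave Monge weights, and then invoking the algorithm of \cite{Schieber98} as a black box.

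\emph{Reduction.} First sort $w$ so that $w_1 \leq \cdots \leq w_d$, and preprocess the prefix sums of Lemma \ref{lem: efficiently-compute-C}, so that any entry $C[j,k]$ can be queried in $O(1)$ time. By Lemmas \ref{lem: endpoints-in-quant-set} and \ref{lem: quant-set-inside-w}, there is an optimal quantization set $Q^\ast \subseteq w$ with $|Q^\ast| = s$ that contains $w_1$ and $w_d$. Write its elements as $w_{i_1} < \cdots < w_{i_s}$ with $i_1 = 1$ and $i_s = d$. Every $w_i$ with $i_j \le i \le i_{j+1}$ rounds between $w_{i_j}$ and $w_{i_{j+1}}$, and the quantized endpoints contribute zero variance. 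Hence
\[
\objt_{\calX}(w,Q^\ast) = \sum_{j=1}^{s-1} C[i_j,i_{j+1}].
\]
Conversely, every index sequence $1 = i_1 < \cdots < i_s = d$ gives a feasible set whose cost is exactly this sum. So $\objt_{\calX}(w,s)$ equals the minimum weight of a path with $s-1$ edges from vertex $1$ to vertex $d$ in the complete DAG on $[d]$ with edge weights $C$. If $w$ has repeated values, I would merge them into one point whose weight $\lambda$ is the sum of the merged weights; this leaves $C$ unchanged and avoids degenerate sets of size less than $s$.

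\emph{Applying the black box.} By Lemma \ref{lem: C-totally-monotone}, $C$ satisfies the Concave Monge property. The algorithm of \cite{Schieber98} computes a minimum weight $k$-link path in such a DAG in time $d\,2^{O(\sqrt{\log\log d \log k})}$ for $k = \Omega(\log d)$, using $O(1)$-time oracle access to the weights. Apply it with $k = s-1 = \Omega(\log d)$. The returned vertices give $Q$ with $|Q| = s$ and $\objt_{\calX}(w,Q) = \objt_{\calX}(w,s)$.

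\emph{Main obstacle: runtime accounting.} The sorting step costs $O(d\log d)$, and this must be absorbed into the stated bound. Under the hypothesis $s = \Omega(\log d)$,
\[
\sqrt{\log\log d \cdot \log s} \;\ge\; \sqrt{\log\log d \cdot (\log\log d - O(1))} \;=\; \Omega(\log\log d),
\]
so $2^{O(\sqrt{\log\log d \log s})}$ can be taken to dominate $\log d$ once the constant in the $O(\cdot)$ is chosen large enough. Therefore $O(d \log d) \subseteq d\,2^{O(\sqrt{\log\log d\log s})}$.

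Beyond this, I would only need to check that \cite{Schieber98}'s model, namely concave Monge weights, $O(1)$ weight queries, and fixed source $1$ and sink $d$, matches the reduction. This is the same model that was used for Corollary \ref{cor: dag-exact-alg}.
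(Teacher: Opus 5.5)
Your proposal is correct and follows the same route as the paper: the paper observes that $C$ is Concave Monge (Lemma \ref{lem: C-totally-monotone}) and invokes the minimum-weight $k$-link path algorithm of \cite{Schieber98} on the DAG with weights $C$. Your additional bookkeeping is sound and fills in details the paper leaves implicit, namely the correspondence between paths and quantization sets via Lemma \ref{lem: quant-set-inside-w}, the merging of duplicate values, and absorbing the $O(d\log d)$ sorting cost using $s=\Omega(\log d)$.
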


 This summarizes the state-of-the-art of exact algorithms for $\objt_{\calX}$.

\subsection{Approximation Algorithms}

The work of \cite{ben2024optimal} provides a bi-criteria approximation algorithm for the \emph{unweighted} MSE objective. In particular, their algorithm returns a quantization set of size $2s-2$ whose quality is an additive approximation to that of the optimal. It is easy to see, following their analysis, that the same algorithm, analyzed under the \emph{weighted} MSE objective results in the following weak additive approximation guarantee. 

\begin{corollary}[Lemma 6.1 of \cite{ben2024optimal}]\label{lem: additive-approx-weighted-mse}
    There exist algorithms such that for any $w \in \R^d$, input distribution $\calX$, and target quantization set size $s \in \N$, returns a quantization set $Q$ such that $|Q| = 2s-2$ and $\objt_{\calX}(w, Q) \leq \objt_{\calX}(w, s) + \smash \sum_{i=1}^{d} \lambda_i \cdot \Delta^2/m^2$. Here, $\Delta := \max_{i, j \in [d]} w_i - w_j$. The algorithm has runtime $O(d + ms)$. 
\end{corollary}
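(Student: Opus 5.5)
We follow the analysis behind Lemma 6.1 of \cite{ben2024optimal}, with the unweighted MSE replaced by the weighted sum $\sum_i \lambda_i \VarASQ(w_i,Q)$.

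\textbf{Algorithm.} After computing $\min_i w_i$ and $\max_i w_i$ in $O(d)$ time, place $m+1$ evenly spaced markers $M=\{w_{\min} + t\Delta/m : t=0,\ldots,m\}$ on $[w_{\min},w_{\max}]$, with spacing $\Delta/m$. Bucket each $w_i$ into its marker interval in $O(d)$ time, without sorting. For each interval, accumulate the sums $\sum \lambda_i$, $\sum \lambda_i w_i$, and $\sum\lambda_i w_i^2$. These are the quantities used in Lemma \ref{lem: efficiently-compute-C}, and with them the weighted cost of quantizing every point lying between two markers $a<b\in M$ is an $O(1)$ prefix-sum query. The resulting cost matrix over markers is Concave Monge by the same computation as Lemma \ref{lem: C-totally-monotone}. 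We then run the exact SMAWK-based dynamic program (Corollary \ref{cor: exact-weighted-mse} / Lemma \ref{lem: SMAWK}) on the $m+1$ markers, in $O(ms)$ time, to choose the best $s$ markers $Q_M$. The algorithm outputs $Q := Q_M$ together with, for each chosen marker, a neighbouring marker or data point, for $2s-2$ points in total. Concretely, each of the $s-1$ gaps between consecutive chosen markers contributes two points, which we describe next.

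\textbf{Comparison with the optimum.} Fix an optimal $Q^\ast=\{q_1<\dots<q_s\}$, whose endpoints are $w_{\min}$ and $w_{\max}$ by Lemma \ref{lem: endpoints-in-quant-set}. For each interior $q_j$, let $\lfloor q_j\rfloor_M$ and $\lceil q_j\rceil_M$ be the two markers bracketing it, and let $\widetilde Q := \bigcup_j\{\lfloor q_j\rfloor_M,\lceil q_j\rceil_M\}$. This set has size at most $2(s-2)+2 = 2s-2$ and consists of markers only.

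The key pointwise claim is: for every $i$,
\[
\VarASQ(w_i,\widetilde Q)\le \VarASQ(w_i,Q^\ast)+\Delta^2/m^2 .
\]
To prove it, suppose $w_i\in[q_j,q_{j+1}]$. There are two cases.
\begin{itemize}
\item If $w_i$ lies in the same marker cell as $q_j$ or $q_{j+1}$, its bracketing points in $\widetilde Q$ are the two ends of that cell. The variance is then at most $(\Delta/m)^2/4\le\Delta^2/m^2$.
\item Otherwise, the bracket is $[\lceil q_j\rceil_M,\lfloor q_{j+1}\rfloor_M]\subseteq[q_j,q_{j+1}]$. A narrower bracket can only lower the variance, since $(b-x)(x-a)$ is monotone in each endpoint.
\end{itemize}
Multiplying by $\lambda_i$ and summing gives $\objt_\calX(w,\widetilde Q)\le \objt_\calX(w,s)+\sum_i\lambda_i\Delta^2/m^2$.

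\textbf{Making the comparison algorithmic.} $\widetilde Q$ is a structured subset of markers, namely pairs of adjacent markers. The dynamic program should therefore optimize over sets of $s-1$ "doubled" markers, i.e.\ over adjacent marker pairs. Equivalently, it can run on the marker DAG with edge costs equal to the cost between an upper pair member and the next lower pair member, plus the within-cell cost. This DAG is still Concave Monge, so SMAWK applies. The DP optimum is then at most the cost of $\widetilde Q$, which yields the bound, and padding with extra markers to reach exactly $2s-2$ points does not increase the cost.

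\textbf{Expected obstacle.} The main obstacle is checking that the pair-structured DP stays totally monotone while the per-cell weighted sums remain $O(1)$-computable. We handle the per-cell sums with the bucketed prefix sums described above. The runtime is $O(d)$ for bucketing plus $O(ms)$ for $s$ SMAWK layers on $m$ markers.
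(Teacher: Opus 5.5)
Your comparison step is correct. It is exactly the argument behind Lemma 6.1 of \cite{ben2024optimal}, which the paper cites rather than reproves; the same idea reappears in the paper's proof of Theorem~\ref{thm: obj2-2s-approx}. The two grid neighbours of each interior point of $Q^\ast$ form $\widetilde Q\subseteq M$ with $|\widetilde Q|\le 2s-2$ and $\VarASQ(w_i,\widetilde Q)\le \VarASQ(w_i,Q^\ast)+\Delta^2/(4m^2)$ for every $i$. One small repair: when $q_j$ is itself a grid point (e.g.\ $q_1=w_{\min}$), its ``cell'' must be read as the degenerate interval $\{q_j\}$. Otherwise your first case asserts that the far end of $[p_0,p_1]$ lies in $\widetilde Q$, which it need not. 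Those $w_i$ are covered by your second case anyway.

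The gap is in the algorithm. Your first version takes the best $s$ markers $Q_M$ and adds a neighbour of each. Your analysis does not cover this: the bound you prove is for $\widetilde Q$, and nothing forces the grid-optimal $s$-set to sit next to the points of $Q^\ast$. So augmenting $Q_M$ need not dominate $\widetilde Q$. Your fallback, a DP over adjacent marker pairs, is only asserted to be Concave Monge, and you leave this as the open obstacle. Its bookkeeping ($s-1$ pairs) also does not match the shape of $\widetilde Q$, which is two endpoint markers plus $s-2$ pairs. The missing observation is that no pair structure is needed. Run the exact weighted DP on the $m+1$ markers with target size $2s-2$. Since $\widetilde Q\subseteq M$, $|\widetilde Q|\le 2s-2$, and adding quantization points never increases any $\VarASQ(w_i,\cdot)$, the DP's optimum is at most $\objt_{\calX}(w,\widetilde Q)$, which is the claimed bound. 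The grid cost matrix $C_M[a,b]=\sum_{w_i\in[p_a,p_b]}\lambda_i(p_b-w_i)(w_i-p_a)$ satisfies the quadrangle inequality by the same computation as Lemma~\ref{lem: C-totally-monotone}, which uses only the ordering of the four endpoints. Each entry is an $O(1)$ query to your bucketed sums. So the SMAWK-based DP costs $O(ms)$ over its $2s-2$ layers, and the total is $O(d+ms)$. This is the approximate \texttt{QUIVER} algorithm the corollary refers to.
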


In this section, we develop approximation algorithms for optimizing the $\objt_{\calX}$ (weighted MSE) objective. We first give an $s$-approximation, then improve it to a $(1+\epsilon)$-approximation.
Along the way, we observe that the main ideas from solutions to $\objo$ and $\objt_{\calX}$ (in particular, building data-dependent coresets) can be used to develop a much stronger approximation algorithm for quantization under the unweighted MSE objective. That result is detailed in Appendix \ref{sec: approx-alg-MSE}. 

\subsubsection{\texorpdfstring{An $s$-approximation}{An s-approximation}}
\label{sec:vmix}

Here, we give an $s$-approximation algorithm by defining a new intermediate objective function that (loosely speaking) ``interpolates'' between $\objo$ and $\objt_{\calX}$. We then show that the optimum of this new \emph{mixed} objective is an $s$-approximation to that of $\objt_{\calX}$. In particular, consider a quantization set $Q = \{ w_{i_1}, \ldots, w_{i_{|Q|}} \} \subseteq w$ where $w_{i_1} \leq \ldots \leq w_{i_{|Q|}}$.

We now define the objective 
\begin{align}
    \objmixx(w, s) := \min_{Q \subset w: |Q| \leq s} \max_{j \in [s-1]} C[i_j, i_{j+1}]
\end{align}
which seeks to find the quantization set $Q$ (lying entirely inside $w$) that minimizes the maximum sum of the variances in the intervals between adjacent quantization points of $Q$. We also define a shorthand to denote the objective value of a fixed quantization set $Q$:  
\begin{align*}
    \objmixx(w, Q) := \max_{j \in [|Q|-1]} C[i_j, i_{j+1}]
\end{align*}
We now prove the following relationship between $\objmixx$ and $\objt_{\calX}$. 

\begin{lemma} \label{lem: objmix-approximates-objt}
    For $s \geq 2$, $\objtx(w, s) / s \leq \objmixx(w, s) \leq \objtx(w, s)$
\end{lemma}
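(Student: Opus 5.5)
The plan is to prove both inequalities by exhibiting, for each objective, a quantization set that is good for the other. The starting point is Lemma \ref{lem: quant-set-inside-w}, applied to both objectives. For any $Q=\{w_{i_1}\le\cdots\le w_{i_{|Q|}}\}\subseteq w$ containing $w_1$ and $w_d$, the cost decomposes over consecutive gaps:
\begin{align*}
\objtx(w,Q)=\sum_{j=1}^{|Q|-1} C[i_j,i_{j+1}].
\end{align*}
This holds because each $w_i$ lies in some interval $[w_{i_j},w_{i_{j+1}}]$, and its weighted variance $\lambda_i\,(w_{i_{j+1}}-w_i)(w_i-w_{i_j})$ is counted exactly once. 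If $w_i$ coincides with a quantization point, the term is zero, so double counting at the endpoints does no harm.

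For the upper bound $\objmixx(w,s)\le\objtx(w,s)$, take the optimal $Q^\ast$ from Lemma \ref{lem: quant-set-inside-w}. It satisfies $Q^\ast\subseteq w$ and $|Q^\ast|=s$, and by Lemma \ref{lem: endpoints-in-quant-set} it contains $w_1$ and $w_d$. Then $Q^\ast$ is feasible for $\objmixx$. Since every $C[\cdot,\cdot]\ge 0$ (all $\lambda_i\ge0$, and each product of distances is nonnegative on the interval), the maximum over gaps is at most their sum:
\begin{align*}
\objmixx(w,s)\le \max_j C[i_j,i_{j+1}]\le \sum_j C[i_j,i_{j+1}]=\objtx(w,Q^\ast)=\objtx(w,s).
\end{align*}

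For the lower bound $\objtx(w,s)/s\le\objmixx(w,s)$, take an optimal $Q$ for $\objmixx$ with $|Q|\le s$ and $Q\subseteq w$. First make sure $Q$ contains $w_1$ and $w_d$. If it does not, replace its extreme elements by $w_1$ and $w_d$, or add them; a point outside $[\min Q,\max Q]$ would otherwise have undefined rounding. With the decomposition, this is the step that needs care; the plan below settles it.

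\begin{itemize}
\item The natural reading of the definition of $\objmixx$ is that it ranges over sets containing the endpoints. Under this reading $|Q|-1$ gaps cover all of $[w_1,w_d]$.
\item If $|Q|<s$, pad $Q$ with extra points of $w$. Refining an interval never increases any gap cost, because $C[a,c]+C[c,b]\le C[a,b]$: each point's variance can only shrink.
\end{itemize}

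Then
\begin{align*}
\objtx(w,s)\le\objtx(w,Q)=\sum_{j=1}^{|Q|-1}C[i_j,i_{j+1}]\le (s-1)\max_j C[i_j,i_{j+1}]\le s\cdot\objmixx(w,s).
\end{align*}
This gives the claim, and in fact the slightly stronger factor $s-1$.
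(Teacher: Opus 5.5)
Your proof is correct and takes the same approach as the paper. For the lower bound, you bound the sum of the at most $s-1$ gap costs of an optimal $\objmixx$ set by the number of gaps times the largest gap. For the upper bound, you evaluate $\objmixx$ at an optimal $\objtx$ set and use that a maximum of nonnegative terms is at most their sum.

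You are more careful than the paper in three places: you invoke Lemma~\ref{lem: quant-set-inside-w} so that the optimal $\objtx$ set lies in $w$, you make the endpoint convention explicit, and you obtain the sharper factor $s-1$. The padding step is unnecessary, since $\objtx(w,s)$ already minimizes over all sets of size at most $s$.
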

\begin{proof}
    Notice that for any quantization set $Q = \{w_{i_1}, \ldots, w_{i_{s}} \} \subseteq w$ of size $s$, it is the case that
    \begin{align*}
        \objtx(w, s) \leq \objtx(w, Q) = \sum_{j \in [s-1]} C[i_j, i_{j+1}] \leq |Q| \cdot \objmixx(w, Q) 
    \end{align*}
    where all steps follow by the definitions of $\objtx$ and $\objmixx$. Let $Q^{\ast}_{\objmixx} \subseteq w$ be such that $|Q^{\ast}_{\objmixx}| = s$ and $\objmixx(w, Q^{\ast}_{\objmixx}) = \objmixx(w, s)$. 
    Then, instantiating the above with $Q^{\ast}_{\objmixx}$, we conclude that $\objmixx(w, s) \geq \objtx(w, s) / s$. It remains to show that $\objmixx(w, s) \leq \objtx(w, s)$. Let $Q_{\objtx}^{\ast}$ be such that $|Q_{\objtx}^{\ast}| = s$ and $\objtx(w, Q_{\objt}^{\ast}) = \objtx(w, s)$. It is clear that
    \begin{align*}
        \objmixx(w, s) \leq \objmixx(w, Q_{\objt}^{\ast}) \leq \objtx(w, Q_{\objt}^{\ast}) = \objtx(w, s)
    \end{align*}
    where again all steps follow by definition of the objectives. 
\end{proof}

Lemma \ref{lem: objmix-approximates-objt} suggests a natural approximation algorithm for $\objtx$. In particular, an exact solution to $\objmixx$ immediately gives an $s$-approximation to $\objtx$. 

\begin{algorithm}[ht]
    \caption{Exact Algorithm for $\objmixx$}
    \label{alg: exact-mix-algorithm}
    \KwIn{$w\in \mathbb{R}^{d}$, $s \in \N$, $\lambda \in \R^d$ where $\lambda_i = \E_{\bx \sim \calX}[\bx_i^2]$ for $i \in [d]$}
    \KwOut{Quantization set $Q$}

    \BlankLine
    Sort $w$ and compute prefix sums $\alpha, \beta, \gamma$ \\
    Initialize $k^{\arup} \gets d^2$ and $k^{\ardo} \gets 0$ \\
    Initialize $Q^{\ast} \gets \emptyset$ \\
    \BlankLine

    \While{$k^{\arup} \geq k^{\ardo}$}{
        $k \gets \lfloor (k^{\arup} + k^{\ardo})/2 \rfloor$ \\
        $v \gets  \texttt{Sorted-Selection}(\alpha, \beta, \gamma, k)$ \\
        \BlankLine \algcomment{Check whether objective value $v$ is possible with $s$ quantization points} \\
        Initialize $Q\gets \{ w_1, w_d \}$ and $i \gets 1$ \label{line: begin-check}\\
        \For{$j=2, \ldots, d$}{
            \If{$C[i, j] > v$}{ 
                $Q \gets Q \cup \{w_j\}$ \\
                Set $i \gets j$ \\
            }
        }
        
        \BlankLine
        \If{$|Q| \leq s$}{ \label{line: size-check}
            $k^{\arup} \gets k-1$ \\
            $Q^{\ast} \gets Q$
        }
        \Else{
            $k^{\ardo} \gets k+1$
        } \label{line: end-check}
    }

    \Return $Q^{\ast}$
\end{algorithm}

\begin{theorem} \label{thm: adv-s-approx}
    There exists an algorithm such that for any $w \in \R^d$, input distribution $\calX$, and target quantization set size $s \in \N$, returns a quantization set $Q$ such that $|Q| = s$ and $\objmixx(w, Q) \leq s \cdot \objmixx(w, s)$. The runtime of the algorithm is $O(d \log d)$. 
\end{theorem}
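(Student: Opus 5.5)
The plan is to show that Algorithm~\ref{alg: exact-mix-algorithm} solves $\objmixx$ \emph{exactly}. It returns $Q \subseteq w$ with $|Q| \le s$ and $\objmixx(w,Q) = \objmixx(w,s)$, after which $Q$ is padded to size exactly $s$. The stated bound $\objmixx(w,Q) \le s\cdot \objmixx(w,s)$ then holds trivially, since all entries of $C$ are nonnegative. Combining this with Lemma~\ref{lem: objmix-approximates-objt} also gives the meaningful consequence
\[
\objtx(w,Q) \le \sum_{j\in[s-1]} C[i_j,i_{j+1}] \le (s-1)\,\objmixx(w,s) \le s\cdot\objtx(w,s),
\]
which is the $s$-approximation for $\objt_{\calX}$ that motivates the theorem. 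The proof has three parts: correctness of the feasibility check for a fixed threshold $v$, correctness of the binary search over candidate values, and the runtime.

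\textbf{Feasibility check.} Fix a threshold $v$. I will argue that a greedy left-to-right sweep computes the minimum size of a set $Q\subseteq w$ containing $w_1,w_d$ whose consecutive gaps all satisfy $C[i_j,i_{j+1}]\le v$. From the current point $i$, the sweep keeps extending $j$ while $C[i,j]\le v$, and places the next point at the last index $j$ for which this still holds. The step I must check carefully is the indexing in the pseudocode. As written, it inserts $w_j$ at the first $j$ with $C[i,j]>v$, which would create a violating gap; the intended (and provable) rule inserts $w_{j-1}$ and restarts from $j-1$.

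Feasibility always holds because $C[j,j+1]=0$, so every sweep step makes progress. Optimality is an exchange argument using Lemma~\ref{lem: C-sorted}: rows of $C$ are nondecreasing in $k$ and columns are nonincreasing in $j$. By induction, the greedy's $t$-th point lies at or to the right of the $t$-th point of any feasible set. If the greedy is at $g_t \ge o_t$ and $C[o_t,o_{t+1}]\le v$, then $C[g_t,o_{t+1}]\le v$ by column monotonicity, so the greedy reaches at least $o_{t+1}$. Hence the greedy uses no more points than any feasible set. The same monotonicity shows that feasibility with $s$ points is monotone in $v$.

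\textbf{Binary search.} By Lemma~\ref{lem: quant-set-inside-w} (and since $\objmixx$ already restricts to $Q\subseteq w$), the optimal value $\objmixx(w,s)$ equals some entry of $C$. It therefore suffices to binary search over ranks $k\in[0,d^2]$ of the entries of $C$, obtaining the $k$-th smallest entry via \texttt{Sorted-Selection} (Lemma~\ref{lem: k-selection}). This is applicable because $C$ is sorted (Lemma~\ref{lem: C-sorted}) and each entry is available in $O(1)$ time from the prefix sums $\alpha,\beta,\gamma$ (Lemma~\ref{lem: efficiently-compute-C}). By monotonicity of feasibility in $v$, the smallest feasible rank gives $v^\ast=\objmixx(w,s)$, and the stored $Q^\ast$ attains it. If $|Q^\ast|<s$, I add arbitrary further points of $w$; splitting an interval cannot increase the maximum gap cost, again by sortedness of $C$. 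The degenerate case $d\le s$ is handled by taking $Q=w$.

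\textbf{Runtime.} Sorting and computing the prefix sums take $O(d\log d)$. The binary search runs $O(\log d^2)=O(\log d)$ rounds, each consisting of one $O(d)$ selection and one $O(d)$ greedy sweep, for a total of $O(d\log d)$.

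I expect the main obstacle to be the greedy-optimality step. It depends on getting the orientation of the monotonicity in Lemma~\ref{lem: C-sorted} right, and on fixing the off-by-one in the check so that every produced gap really is at most $v$.
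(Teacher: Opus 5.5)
Your proposal is correct and follows the paper's proof. Both solve $\objmixx$ exactly by binary searching over ranks of entries of the sorted matrix $C$ with \texttt{Sorted-Selection}, use the greedy sweep as the feasibility check, and run in $O(d\log d)$ total. Your exchange argument for the greedy, your correction of the off-by-one in the check (inserting $w_{j-1}$ rather than $w_j$, without which a produced gap can have cost exceeding $v$), and your padding to exactly $s$ points supply details that the paper's proof asserts as trivial or omits.
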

\begin{proof}
    To produce an $s$-approximation to $\objt$ without relaxing the target quantization set size, we will use Algorithm \ref{alg: exact-mix-algorithm} to exactly solve the $\objmixx$ objective. 
    
    Observe that the true optimal cost $\objmixx(w, s)$ is of the form $C[i_j, i_k]$ for some $j, k \in [d]$. Algorithm \ref{alg: exact-mix-algorithm} binary searches over the $O(d^2)$ many possible objective values, using the linear time checking algorithm \texttt{Sorted-Selection} from Lemma \ref{lem: k-selection} to direct the binary search. The fact that matrix $C$ is sorted is proven as Lemma \ref{lem: C-sorted} in Section \ref{sec: missing-proofs}. It remains to prove the correctness of the checking step (Lines \ref{line: begin-check} - \ref{line: end-check}). A given objective value $v \in \R$ is achievable if there exists an arrangement of quantization points $Q \subset w$ for $\objmixx(w, Q) \leq v$. Algorithm \ref{alg: exact-mix-algorithm} greedily constructs a quantization set by performing a linear scan through the sorted $w$ and only adding a quantization point once the sum of variances exceeds $v$. It is trivial to see that if $\objmixx(w, s) \leq v$, then the greedily constructed quantization set $Q_{\text{Greedy}}$ must have $|Q_{\text{Greedy}}| \leq s$ and $\objmixx(w, Q_{\text{Greedy}}) \leq v$. 
    
    By construction, $Q_{\text{Greedy}}$ is such that $\objmixx(w, Q_{\text{Greedy}}) \leq v$. Therefore, if $|Q_{\text{Greedy}}| \leq s$, objective value $v$ is achievable. Otherwise, it is not. This check in lines \ref{line: size-check} - \ref{line: end-check} directs the binary search. 
    
    \paragraph{Runtime.} Sorting $w$ takes $O(d \log d)$ time. Computing the prefix sums $\alpha, \beta, \gamma$ takes $O(d \log d)$ time as per Lemma \ref{lem: efficiently-compute-C}. The while loop is performing binary search over $O(d^2)$ many values and is thus only run $O(\log d)$ times. The $k$-selection algorithm takes time $O(d)$ as per Lemma \ref{lem: k-selection} (note that the matrix $C$ is never explicitly written out by the algorithm. It is implicitly defined by the prefix sums $\alpha, \beta, \gamma$ which allow $O(1)$ access time to an entry of $C$ as per Lemma \ref{lem: efficiently-compute-C}). The \emph{check} logic in Lines \ref{line: begin-check} - \ref{line: end-check} take $O(d)$ time. Therefore, the total algorithm runtime is $O(d \log d)$.
\end{proof}

\subsubsection{\texorpdfstring{A $(1+\epsilon)$-approximation}{A (1+epsilon)-approximation}}

Using the $s$-approximation from Algorithm \ref{alg: exact-mix-algorithm}, we can get a $(1+\epsilon)$-approximation algorithm with efficient runtime. At a high level, the idea is to first obtain an $s$-approximation to the objective value, then solve a suitable rounded version of the original instance using the $O(d \log \Delta)$ time algorithm by \cite{AST93} (Corollary \ref{cor: dag-exact-alg}). We begin by giving a primer on this algorithm. This discussion closely mirrors that in Section 3 of \cite{GKMNSS17}. Recall that the problem at hand is to optimize the \emph{regularized} objective
\begin{align*}
    \min_{\ell, \{ i_1, \ldots, i_{\ell} \} \subseteq [d]} \sum_{j=1}^{\ell-1} C[i_j, i_{j+1}] + \tau \ell
\end{align*}
which corresponds to choosing the lightest path in a complete DAG with weights given by $C$ and cost $\tau$ for each node in the path. Observe that the unregularized version is simply optimizing $\objtx$. Setting $\tau = 0$ means that there is no cost associated with choosing extra nodes, so the optimal path is to include every node, resulting in an objective value of $0$. Now consider setting $\tau \geq \opt_{d-1}$ where 
\begin{align*}
    \opt_{d-1} = \min_{\{ i_1, \ldots, i_{d-1} \} \subseteq [d]} \sum_{j=1}^{\ell-1} C[i_j, i_{j+1}]
\end{align*}
is equivalent to $\objtx(w, d-1)$. For these values of $\tau$, it is better to construct the optimal $d-1$ length path than construct a $d$ length path. This is because the extra additive $\tau$ cost of using the $d$th node outweighs the reduction of $C[i,j]$ costs when using $d$ nodes. In this way, we find \emph{critical points} for the setting of $\tau$ at which the optimal path is of length $k$. That is $\tau_k = \opt_{k} - \opt_{k+1}$ ensures that the optimal path is of length $k$. It turns out that $0 = \tau_d \leq \tau_{d-1} \leq \ldots \leq \tau_1$ (see \cite{AST93, GKMNSS17} for proofs and details). The $O(d \log \Delta)$ algorithm then proceeds by binary searching for the correct setting of $\tau$; it checks whether a particular setting of $\tau$ yields an optimal solution of length $s$ using the $O(d)$ time algorithm of \cite{AST93, Schieber98} which finds shortest paths on DAGs with concave monge weights. In particular, \cite{AST93, Schieber98} gives three types of algorithms: (1) $\texttt{ShortestPathDAG-Min}$ which returns the shortest path with optimal cost (2) $\texttt{ShortestPathDAG-Max}$ which returns the longest path with optimal cost and (3) $\texttt{ShortestPathDAG}(k)$ which returns the length-$k$ path with optimal cost (if it exists). All have runtime $O(d)$. 
Running and checking these algorithms allows us to direct the binary search. 

Algorithm \ref{alg: 1+eps-adv-alg} first runs the $s$-approximation Algorithm \ref{alg: exact-mix-algorithm} and obtains $\objtx(w, s) /s \leq v \leq \objtx(w, s)$ (the objective value $v$ can be computed from the returned quantization set $Q$ in linear time trivially). We then create \emph{new} rounded weights for the original instance that has optimal value close to the original, but has a neat structure that is more amenable to solve efficiently. Algorithm \ref{alg: 1+eps-adv-alg} then exactly solves this new instance. In particular, the new instance is given by rounding every entry of $C[i, j]$ up to the nearest multiple of $\epsilon v / s$. This ensures that the optimal solution has objective value that is close to that of the original instance. But now, we have the added benefit that $\opt_k$ is always a multiple of $\epsilon v/ s$, and therefore, so are the critical points for $\tau$. This allows us to binary search for the correct setting of $\tau$ much more efficiently. Observe that we do not explicitly write down the new $\tilde{C}[i,j]$, instead we perform the rounding on-the-fly whenever the matrix is queried. 

\begin{algorithm}[ht]
    \caption{$(1+\epsilon)$-Approximation Algorithm for $\objt_{\calX}$}
    \label{alg: 1+eps-adv-alg}
    \KwIn{$w\in \mathbb{R}^{d}$, $s \in \N$, $\epsilon > 0$, $\lambda \in \R^d$ where $\lambda_i = \E_{\bx \sim \calX}[\bx_i^2]$ for $i \in [d]$}
    \KwOut{Quantization set $Q$}

    \BlankLine
    $v \gets $Algorithm \ref{alg: exact-mix-algorithm}$(w, s, \lambda)$ \label{line: one} \\
    Sort $w$ and compute prefix sums $\alpha, \beta, \gamma$ \\
    Initialize $\tau^{\arup} \gets \lceil s^2 (1+\epsilon) / \epsilon \rceil$, $\tau^{\ardo} \gets 0$, and $Q \gets \emptyset$ \label{line: search-range} \\
    \BlankLine
    \While{$\tau^{\arup} \geq \tau^{\ardo}$}{ \label{line: binary-search-start}
        $\tau \gets \lfloor (\tau^{\arup} + \tau^{\ardo})/2 \rfloor$ \\
        $k_{\textsf{min}} \gets \texttt{ShortestPathDAG-Min}(\tilde{C}, \tau \epsilon v / s)$ \\
        $k_{\textsf{max}} \gets \texttt{ShortestPathDAG-Max}(\tilde{C}, \tau \epsilon v / s)$ \\
        
        \If{$k_{\textsf{min}} \leq s \leq k_{\textsf{max}}$}{
            Return $\texttt{ShortestPathDAG}(\tilde{C}, \tau \epsilon v / s, s)$ 
        }
        \ElseIf{$s < k_{\textsf{min}}$}{
            $\tau^{\arup} \gets \tau - 1$
        }
        \Else{
            $\tau^{\ardo} \gets \tau + 1$
        }
    } \label{line: binary-search-end}
\end{algorithm}

\ADVApproxAlg

\begin{proof}
    The algorithm is described in Algorithm \ref{alg: 1+eps-adv-alg}.
    We first note that Algorithm \ref{alg: exact-mix-algorithm} returns a quantization set $\qapprox$ with $\objtx(w, s) / s \leq \objtx(w, \qapprox) \leq \objtx(w, s)$ by the guarantees of Theorem \ref{thm: adv-s-approx}. The objective value $v = \objtx(w, \qapprox)$ can be computed from the returned quantization set in linear time trivially (Line \ref{line: one}). 

    Recall that $\tilde{C}$ is the matrix $C$ with all entries rounded up to the nearest multiple of $\epsilon v /s$. Matrix $\tilde{C}$ is implicitly defined, just like $C$. Any query made to $\tilde{C}$ in Algorithm $\ref{alg: 1+eps-adv-alg}$ is implicitly performing an $O(1)$ time query to $C$ using the prefix sums $\alpha, \beta, \gamma$ and then performing an $O(1)$ rounding step on the fly. $\tilde{C}$ defines a reweighting of the original instance. This means that $\tau_k = \tilde{\opt}_k - \tilde{\opt}_{k+1}$ for the instance with weights $\tilde{C}$ must also be a multiple of $\epsilon v / s$ for all $k \in [d]$. Therefore, we can perform our binary search for the correct value of $\tau$ only over multiples of $\epsilon v /s$.

    Let $Q^{\ast}$ denote a quantization set with $|Q^{\ast}| =s$ and $\objtx(w, Q^{\ast}) = \objtx(w, s) \geq \objtx(w, \qapprox) = v$. Notice then that
    \begin{align}
        \widetilde{\objtx}(w, Q^{\ast}) \leq \objtx(w, Q^{\ast}) + s \cdot \frac{\epsilon v}{s} \leq (1+\epsilon) \cdot \objtx(w, Q^{\ast})
    \end{align}
    Where $\widetilde{\objtx}$ denotes the $\objt$ objective evaluated on weights $\tilde{C}$. So, the optimal solution to $\widetilde{\objtx}(w, s)$ must have objective value at most $(1+\epsilon) \cdot \objtx(w, Q^{\ast})$. Simultaneously, it is clear that for an arbitrary quantization set $Q$ with $|Q| = s$, we have
    \begin{align*}
        \objtx(w, Q) \leq \widetilde{\objtx}(w, Q)
    \end{align*}
    Therefore, exactly solving the instance with weights given by $\tilde{C}$ gives a quantization set $Q$ with $|Q| = s$ and $\objtx(w, Q) \leq (1+\epsilon) \cdot \objtx(w, s)$. It remains to show that the binary search in Lines \ref{line: binary-search-start}-\ref{line: binary-search-end} correctly solve the instance on weights $\tilde{C}$ exactly. To do this, we show an upper bound on the search range for $\tau$. 
    
    In particular, recall that by the guarantee of Theorem \ref{thm: adv-s-approx}, $\objtx(w, s) / s \leq v \leq \objtx(w, s)$. This means that $\objtx(w, s) \leq sv$. As shown by (3), $\widetilde{\objtx}(w, s) \leq (1+\epsilon) \cdot \objtx(w, s) \leq (1+\epsilon) sv$. Thus, $\tau_s = \tilde{\opt}_s - \tilde{\opt}_{s+1} \leq \tilde{\opt}_s \leq (1+\epsilon) sv$ is the upper range of our search. This is exactly reflected in Line \ref{line: search-range} of Algorithm \ref{alg: 1+eps-adv-alg}.  

    \paragraph{Runtime.} Running Algorithm \ref{alg: exact-mix-algorithm} and computing the objective value $v$ takes $O(d \log d)$ time by Theorem \ref{thm: adv-s-approx}. Sorting $w$ and computing the prefix sums takes $O( d \log d)$ time as well by Lemma \ref{lem: efficiently-compute-C}. The binary search (lines \ref{line: binary-search-start} - \ref{line: binary-search-end}) searches over $O(s^2/\epsilon)$ many values. Each iteration of the binary search runs a \texttt{ShortestPathDAG} algorithm $O(1)$ times. Therefore, the entire binary search portion of Algorithm \ref{alg: 1+eps-adv-alg} has runtime $O(d \log(s/\epsilon))$. The total runtime of Algorithm \ref{alg: 1+eps-adv-alg} is then $O(d \log d + d \log (s / \epsilon)) = O(d \log (d/\epsilon))$. 
\end{proof}

\subsection{Practical Algorithms}
\label{sec:prac-algs}
Our practical algorithms build off the \texttt{Wilber} algorithm of \cite{GKMNSS17} (and our implementations build off their open-source code as well).
The \texttt{Wilber} algorithm proceeds by searching for a correct Lagrangian multiplier $\tau$ on which to solve a relaxed (or \textit{regularized}) version of the problem; see Section \ref{sec: ADV-exact-algorithms}.

To improve the runtime of vanilla \texttt{Wilber}, which searches for $\tau$ using a method the authors of \cite{GKMNSS17} call interpolation search, we reduce the search space needed by employing approximation algorithms to get a rough estimate of the cost.

In particular, \cite{GKMNSS17} show that the optimal value of $\tau$ occurs at $\tau = \opt_s - \opt_{s+1}$, where $\opt_s$ is the cost (of the optimal $\objt$ solution) with $s$ quantization values and $\opt_{s+1}$ the cost with $s+1$.
This immediately suggests two ways of reducing the search space and thus runtime of the algorithm:
\begin{enumerate}
    \item Run a fast, but potentially looser, approximation algorithm, and use this to upper bound $\tau$ at some value $U=\objt(w,Q)$, where $w$ is the input vector and $Q$ is the set returned by the approximation.
        Then, search the range $[0,U]$ for $\tau$.
        This is the \textit{fast approximation} technique.
    \item Run a slower, but more accurate, approximation algorithm and obtain estimates $\hat{v}_s, \hat{v}_{s+1}$ for $\opt_s, \opt_{s+1}$, respectively.
        Then, search the range $[(\hat{v}_s - \hat{v}_{s+1})/2 , 2(\hat{v} - \hat{v}_{s+1})]$ (where the factors 2 are parameters which can be increased or decreased depending on the quality of the approximation).
        This is the \textit{accurate approximation} technique.
\end{enumerate}
When using the fast approximation technique, we always construct $Q$ as the optimal solution for $\objmixx$ (see Section \ref{sec:vmix}), as it is very fast to compute and provably an $s$-approximation (Lemma \ref{lem: objmix-approximates-objt}).
The hope of using the accurate approximation technique is that the additional runtime of the approximation algorithm will be offset by a much smaller search space; for this we use both a faster implementation of approximation \texttt{QUIVER} (as discussed in Section \ref{sec:iaq}) and a new approximation algorithm \texttt{MixApprox}.

\texttt{MixApprox} proceeds similarly to approximate \texttt{QUIVER}: given a parameter $m$, we construct a subset $\calC \subseteq w$ of size $m$ and find the optimal $s$ values to select from this subset $\calC$.\footnote{When solving on this subset, we do not recurse, and instead always employ the fast approximation technique with the estimate from $\objmixx$.}
The difference lies in how $C$ is constructed: while approximate \texttt{QUIVER} uniformly spaces $C$ across the range $[\min(w), \max(w)]$, in \texttt{MixApprox}, we set $\calC$ to be the optimal solution to $\objmixx$ over $w$ with $m$ quantization points.
This \textit{adaptive} approach to constructing $\calC$ gives much better approximations, allowing us to use a smaller value of $m$ yet still obtain comparable approximation ratio.
In the accurate approximation technique, the approximation algorithm is only run once, while an exact solver is run twice on the set returned by the approximation algorithm (to estimate $\opt_s$ and $\opt_{s+1}$), so a smaller value of $m$ reduces the runtime of these calls.

We compare the fast approximation technique with the accurate approximation technique, using both \texttt{MixApprox} and a faster implementation of approximate \texttt{QUIVER}.
Due to the robust nature of \texttt{MixApprox} and the ability to run with smaller value $m$, using \texttt{MixApprox} as the approximation algorithm gives the most reliable performance across data distributions (see the supplementary information for a CSV file of performance evaluations over a range of distributions).
However, there is still a cost of running the algorithm for $\objmixx$ rather than simply uniformly spacing points, so we expect on some data, especially those which are quite uniform, using (improved) approximate \texttt{QUIVER} as the estimator will lead to better performance.

\begin{table}[ht]
    \centering
    \caption{Runtime of \texttt{Wilber} in milliseconds (ms) with different search methods, across vectors of size $d$ sampled from $\text{LogNormal}(0,1)$ and $s=64$, averaged across 10 trials. Interp.~search is the standard interpolation search method outlined in \cite{GKMNSS17}, $\objmixx$ is the fast approximation technique with solving $\objmixx$, \texttt{MixApprox} is the accurate approximation technique using \texttt{MixApprox}, and Imp.~approx \texttt{QUIVER} is the accurate approximation technique using our improved implementation of approximate \texttt{QUIVER}.
       For \texttt{MixApprox}, we use $m=4s$, and for approximate \texttt{QUIVER}, we use $m=200s$.
   }
   \medskip

    \begin{tabular}{ c c c c c }
        \hline
        $d$ & Interp.~Search & $\objmixx$ & \texttt{MixApprox} & Imp.~Apx.~\texttt{QUIVER} \\
        \hline
         100,000 &            70.27 &         72.41 &           42.49 &              43.67 \\
         200,000 &           143.33 &        144.57 &           68.63 &              82.84 \\
         300,000 &           221.92 &        205.69 &          100.32 &             114.77 \\
         400,000 &           295.17 &        328.10 &          175.08 &             169.40 \\
         500,000 &           371.57 &        378.40 &          182.93 &             180.73 \\
         600,000 &           428.94 &        474.88 &          241.17 &             236.43 \\
         700,000 &           389.62 &        521.06 &          275.22 &             270.25 \\
         800,000 &           650.12 &        622.00 &          297.00 &             301.70 \\
         900,000 &           773.60 &        663.57 &          327.45 &             332.53 \\
       1,000,000 &           906.50 &        780.26 &          349.30 &             402.50 \\
       \hline
   \end{tabular}
\end{table}

\subsubsection{Improved Approximate QUIVER}
\label{sec:iaq}
Using our exact algorithms, we are able to also speed up approximate \texttt{QUIVER} from \cite{ben2024optimal}.
Usually, $m$ is not too much larger than $s$, and so for this special case, using the accurate approximation technique described above is not faster than the fast approximation technique.
So, we warm-start \texttt{Wilber} with the solution cost of $\objmixx$; see Figure \ref{fig:improved-quiver-approx} for a runtime comparsion.

One may be interested in using \texttt{MixApprox} as an approximation algorithm, since it can obtain very good approximations with small values of $m$.
Unfortunately, the cost of computing $\objmixx$ usually negates the cost of increasing $m$, except on very skewed data.
This is in contrast to our exact algorithms, where the additional call of the exact solver (solving for not just a quantization set of size $s$ but also one of size $s+1$) allows \texttt{MixApprox} to be the more performant choice.

\section{An Improved Approximation Algorithm for Unweighted MSE} \label{sec: approx-alg-MSE}

Alongside the exact algorithm for weighted (and therefore also unweighted) MSE given by Corollary \ref{cor: exact-weighted-mse}, \cite{ben2024optimal} also present the following approximation algorithm for \textit{unweighted} MSE. 

\begin{corollary}[Lemma 6.1 of \cite{ben2024optimal}]\label{cor: approx-unweighted-mse}
    There exists an algorithm with runtime $O(d + ms)$ that returns a quantization set $Q$ such that $|Q| = 2s-2$ and $\MSE(w, Q) \leq \MSE(w, s) + d \Delta^2/m^2$. Here, $\Delta := \max_{i, j \in [d]} w_i - w_j$. 
\end{corollary}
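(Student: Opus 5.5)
The plan is to discretize the range of $w$ into a uniform grid, solve the quantization problem exactly with candidate values restricted to the grid, and then compare against an explicit feasible grid solution built from an optimal unrestricted solution. Let $G=\{g_0<g_1<\dots<g_m\}$ with $g_t=\min_i w_i+t\Delta/m$, so $g_0=\min_i w_i$, $g_m=\max_i w_i$, and consecutive grid points are $\Delta/m$ apart.

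\textbf{Step 1 (preprocessing in $O(d)$).} Bucket every $w_i$ into its grid cell $[g_t,g_{t+1})$ in $O(1)$ time by arithmetic; no sorting is needed. For each cell, accumulate the count, $\sum w_i$ and $\sum w_i^2$, and then form prefix sums over cells. As in Lemma \ref{lem: efficiently-compute-C}, this gives $O(1)$ access to
\[
C_G[j,k]=\sum_{i:\,w_i\in[g_j,g_k]}(g_k-w_i)(w_i-g_j),
\]
which is the total variance of the points between grid values $g_j$ and $g_k$ when these are adjacent quantization values.

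\textbf{Step 2 (exact solve on the grid in $O(ms)$).} The argument of Lemma \ref{lem: C-totally-monotone} transfers verbatim to $C_G$, because it only uses that the endpoints are sorted reals and that each summand is a product of signed distances. Hence $C_G$ is concave Monge. Run the Wu/\texttt{QUIVER} dynamic program of Corollary \ref{cor: exact-weighted-mse} over the $m+1$ grid nodes with target size $2s-2$. It requires $g_0,g_m\in Q$, which is consistent with Lemma \ref{lem: endpoints-in-quant-set}. Each of the $O(s)$ layers costs $O(m)$ with \texttt{SMAWK} (Lemma \ref{lem: SMAWK}), so this step takes $O(ms)$. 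Call the output $Q_G$. It minimizes $\MSE(w,\cdot)$ over all subsets of $G$ of size at most $2s-2$.

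\textbf{Step 3 (comparison with a feasible grid set).} Let $Q^\ast$ be optimal with $|Q^\ast|=s$. By Lemma \ref{lem: endpoints-in-quant-set}, $Q^\ast$ contains $g_0$ and $g_m$. Define $Q'\subseteq G$ to consist of $g_0$, $g_m$, and for each of the $s-2$ interior $q\in Q^\ast$ its two grid neighbours $q^-\le q\le q^+$. Then $|Q'|\le 2s-2$, so $\MSE(w,Q_G)\le \MSE(w,Q')$.

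The main step, and the one I expect to need the most care, is the pointwise bound
\[
\VarASQ(w_i,Q')\le \VarASQ(w_i,Q^\ast)+(\Delta/m)^2/4 .
\]
Fix $w_i\in[a,b]$, where $a\le w_i\le b$ are consecutive in $Q^\ast$. There are two cases.
\begin{itemize}
\item If $w_i\ge a^+$ and $w_i\le b^-$, then the neighbours of $w_i$ in $Q'$ lie in $[a^+,b^-]\subseteq[a,b]$. The quantity $(u-w_i)(w_i-\ell)$ is monotone in each endpoint, so moving the endpoints inward only decreases it, and the variance does not increase.
\item Otherwise $w_i$ lies in the same grid cell as $a$ or as $b$, and that cell has both endpoints in $Q'$. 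The variance is then at most (cell width)$^2/4=(\Delta/m)^2/4$.
\end{itemize}
I need to check the degenerate endpoint and coincidence situations carefully, for instance $q$ already a grid point or $a^+=b^-$. They appear to fall under the same two cases.

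Summing the pointwise bound over $i$ gives
\[
\MSE(w,Q_G)\le\MSE(w,s)+d\Delta^2/(4m^2)\le \MSE(w,s)+d\Delta^2/m^2 .
\]
If $|Q_G|<2s-2$, pad it with arbitrary extra points; adding points never increases $\VarASQ$. The total runtime is $O(d+ms)$.
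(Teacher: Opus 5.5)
Your proposal is correct and takes essentially the same route the paper attributes to Lemma 6.1 of \cite{ben2024optimal}: solve exactly on a uniform grid of candidate points, then compare against the set consisting of the two grid neighbours of each point of an optimal $Q^\ast$, using the same two-case argument as in the proof of Theorem \ref{thm: obj2-2s-approx} (points between the inner neighbours only lose variance, while points in a grid cell adjacent to a point of $Q^\ast$ pay at most the squared cell width over four). Your bucketing-plus-prefix-sums preprocessing and the \texttt{SMAWK}-accelerated DP give the $O(d+ms)$ runtime, and your analysis even yields the slightly sharper additive term $d\Delta^2/(4m^2)$.
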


The approximation algorithm of Corollary \ref{cor: approx-unweighted-mse} leaves much to be desired. It gives a bicriteria approximation: relaxing the reported quantization set size while guaranteeing only a weak additive approximation to the optimal mean squared error. In this section, we outline a new approximation algorithm for optimizing unweighted MSE under the standard stochastic quantization rounding distribution. We give a new \textit{data-dependent} bicriteria algorithm with stronger approximation guarantees than that of Corollary \ref{cor: approx-unweighted-mse}.

\subsection{An Improved Bicriteria Approximation Algorithm}

At a high level the bicriteria approximation algorithm of Lemma \ref{cor: approx-unweighted-mse} works by considering a candidate set of quantization points given by $m$ points uniformly spaced along the range of $w$. It then solves exactly on this candidate set to choose the best $2s-2$ points. Algorithm \ref{alg: data-dependent-obj2-approx} improves upon this by choosing the candidate quantization points in a data-dependent fashion. We employ the $s$-center clustering algorithm of Lemma \ref{lem: s-center-clustering-alg} to construct the candidate \emph{coreset}, subdivide these clustered regions evenly, then exactly solve on this new candidate set. 

The correctness of Algorithm \ref{alg: data-dependent-obj2-approx} relies on a connection between the unweighted $\MSE$ and maximum variance when rounding according to the standard stochastic quantization distribution. In particular, let $s \in \N$ be the target quantization set size, and consider quantization set $Q_{\MSE}$ of size $s$ that minimizes the mean-squared error 
\begin{align*}
    \MSE(\bwh, w) = \E \left[ \norm{\bwh - w}_2^2 \right] = \sum_{i \in [d]} \Var[\bwh_i].
\end{align*}
Then, consider quantization set $Q_{\textsf{MaxVar}}$ of size $s$ that minimizes $\max_{i \in [d]} \Var[\bwh_i]$. 
It is immediate that
\begin{align*}
     \max_{i \in [d]} \Var_{\bwh \sim \dasq(w, Q_{\textsf{MaxVar}} )}[\bwh_i] \leq \sum_{i \in [d]} \Var_{\bwh \sim \dasq(w, Q_{\MSE} )}[\bwh_i] 
\end{align*}
Recall that this can simply be written as $\objo(w, s) \leq \MSE(w, s)$. Concretely, this relation will allow us to use a ``worst-case" clustering algorithm ($s$-center clustering) and connect back to the unweighted $\MSE$.

We note that Algorithms \ref{alg: exact-mix-algorithm} and \ref{alg: 1+eps-adv-alg} for $\objmix_{\calX}$ and $\objt_{\calX}$ can be seen to work more generally for a candidate set of quantization points $P = \{p_1, \ldots, p_m\} \subset \R$ (such that $p_1 \leq w_1$ and $p_m \geq w_m$).\footnote{As written, Algorithms \ref{alg: exact-mix-algorithm} and \ref{alg: 1+eps-adv-alg} solve the problem for $P = w$ because there always exists an optimal quantization set $Q \subseteq w$.} In particular, the modified objectives are
\begin{align*}
    \objmixx(w, s) &:= \min_{Q \subset P: |Q| \leq s} \max_{j \in [s-1]} C[i_j, i_{j+1}] \\
    \objt_{\calX}(w, s) &:= \min_{Q \subset P: |Q| \leq s} \sum_{j \in [s-1]} C[i_j, i_{j+1}]
\end{align*}
where $Q = \{p_{i_1}, \ldots, p_{i_{|Q|}} \}$ and $C[i_j, i_{j+1}] := \sum_{w_i \in [p_{i_j}, p_{i_{j+1}}]} (p_{i_{j+1}} - w_i)(w_i - p_{i_j})$.
Given $P$ and $O(|P| \log |P| + d \log s)$ preprocessing time, we can provide $O(1)$ time access to matrix $C$. In particular, we wish to compute prefix sums $\alpha, \beta, \gamma \in \R^{|P|}$
\begin{align*}
    \alpha[j] = \sum_{w_i \in [p_1, p_j]} 1 \quad \quad \quad \quad \beta[j] = \sum_{w_i \in [p_1, p_j]} w_i \quad \quad \quad \quad \gamma[j] = \sum_{w_i \in [p_1, p_j]} w_i^2
\end{align*}
    
This can be achieved by first sorting $P$, then for each $i \in [d]$ finding the unique $j \in [|P|]$ for which $w_i \in [p_j, p_{j+1}]$ (which be done in $O(d \log |P|)$ time using binary search). We then construct $A, B, \Gamma$ in each bucket 
\begin{align*}
    A[j] = \sum_{w_i \in [p_j, p_{j+1}]} 1 \quad \quad \quad \quad 
    B[j] = \sum_{w_i \in [p_j, p_{j+1}]} w_i \quad \quad \quad \quad 
    \Gamma[j] = \sum_{w_i \in [p_j, p_{j+1}]} w_i^2
\end{align*}
Then,
\begin{align*}
    \alpha[j] = \sum_{i=1}^{j-1} A[i] \quad \quad \quad \beta[j] = \sum_{i=1}^{j-1} B[i] \quad \quad \quad \gamma[j] = \sum_{i=1}^{j-1} \Gamma[i] \quad \quad \quad 
\end{align*}
The runtimes of Algorithms \ref{alg: exact-mix-algorithm} and \ref{alg: 1+eps-adv-alg} become $O(|P| \log |P| + d \log |P|)$ and $O(|P| \log (|P|/\epsilon) + d \log |P|)$ respectively.

The above approach works for any set $P$.
In Algorithm \ref{alg: data-dependent-obj2-approx}, we will use a set $P$ with special structure which allow us to construct the prefix sums faster, in $O(|P| + d\log s)$ time; see the proof of Theorem \ref{thm: obj2-2s-approx}.

\begin{algorithm}[ht]
    \caption{Data-Dependent Bicriteria Algorithm for Unweighted $\MSE(w, s)$}
    \label{alg: data-dependent-obj2-approx}  
    \KwIn{$w \in \R^d$, $s \in \N$, $\epsilon > 0$}
    \KwOut{Quantization set $Q$} 

    $\calC \gets$ $s$ clusters from \texttt{s-Center-Clustering}$(w, s)$ \\
    Let $p_i^{\textsf{max}}$ and $p_i^{\textsf{min}}$ be the max and min elements in $C_i$ for all $i \in [s]$ \\
    Initialize $P \gets \emptyset$ \\
    \For{$i \in [s]$}{
        \For{$j \in [\sqrt{4d/\epsilon}]$}{
            $P \gets P \cup \{ (p_i^{\textsf{max}} - p_i^{\textsf{min}}) \cdot \sqrt{\epsilon /4d} \cdot j\}$
        }
    }
    Return Algorithm \ref{alg: 1+eps-adv-alg} run on $w, P, s, \epsilon$ with uniform weights
\end{algorithm}

The proof of Theorem \ref{thm: obj2-2s-approx} closely follows that of Lemma 6.1 in  \cite{ben2024optimal}. The key difference is in analyzing the quality of the data-dependent coreset and its implications for the ultimate approximation ratio. 

\begin{theorem} \label{thm: obj2-2s-approx}
    There exists an algorithm such that given $w\in \R^d$, target quantization set size $s \in \N$, and parameter $\epsilon > 0$ returns a quantization set $Q \subset \R$ such that $|Q| = 2s-2$ and $\MSE(w, Q) \leq (1+\epsilon) \MSE(w, s)$. The runtime of the algorithm is $\smash O (d \log s + s  \sqrt{d/ \epsilon} \log(d/\epsilon) )$.
\end{theorem}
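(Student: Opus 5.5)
The plan is to follow the proof of Lemma 6.1 of \cite{ben2024optimal}, replacing the uniform grid by the data-dependent candidate set $P$ built in Algorithm \ref{alg: data-dependent-obj2-approx}. The argument has three steps.

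\textbf{Step 1: the clusters are narrow.} Let $Q^{\ast}$ be an optimal MSE set of size $s$, and let $\opt = \MSE(w,s)$.

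\begin{itemize}
\item Partition $w$ according to the intervals between consecutive points of $Q^{\ast}$. This gives $s-1 \le s$ groups. A group can have large width but small variance only if its points hug the endpoints, so I instead compare to the $s$-center optimum directly.
\item Let $r^{\ast}$ be the optimal $s$-center radius of $w$. Take any $s$-point set $Q$. Each $w_i$ lies in some interval $[q^{\arup\!}, q^{\ardo}]$ (more precisely $[\widown, \wiup]$). Assign $w_i$ to its nearer endpoint. This is an $s$-clustering in which each cluster is contained in a ball around a point of $Q$.
\item Clustering points of $w$ around centers in $\R$ gives radius within a factor $2$ of the diameter-based radius.
\end{itemize}

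What I actually need is a per-cluster bound. After \texttt{s-Center-Clustering}, every cluster $C_i$ has width $L_i := p_i^{\max}-p_i^{\min} \le 4r^{\ast}$.

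\textbf{Step 2: relate $r^{\ast}$ to $\opt$.} The relevant fact is the point farthest from its nearest quantization point. Fix $Q^{\ast}$ and the coordinate $i$ maximizing $\min(w_i-\widown,\wiup-w_i)$. Then
\[
\VarASQ(w_i,Q^{\ast}) \ge \min(w_i-\widown,\wiup-w_i)^2 \ge (r^{\ast})^2,
\]
since the nearest-point assignment is a feasible $s$-clustering of radius at most this minimum. Hence
\[
(r^{\ast})^2 \le \objo(w,s) \le \opt .
\]

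\textbf{Step 3: build a good $(2s-2)$-point set inside $P$.} Inside each cluster, $P$ contains a grid of spacing $\delta_i = L_i\sqrt{\epsilon/4d}$. (I will interpret the algorithm's grid as offset by $p_i^{\min}$ and including both endpoints.) As in \cite{ben2024optimal}, replace each $q\in Q^{\ast}$ by the two grid points of $P$ bracketing it, keeping the extreme points $w_1, w_d$. This gives $Q'\subseteq P$ with $|Q'|\le 2s-2$.

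Each $w_i$ now falls into one of two cases:
\begin{itemize}
\item Its interval is unchanged, or shrunk. Its variance does not increase, because the new neighbours are closer.
\item It lies in a grid cell of width at most $\delta_i$ around a former point of $Q^{\ast}$. Its variance is then at most $\delta_i^2/4 \le 16 (r^{\ast})^2\epsilon/(16d)$.
\end{itemize}
Points outside all clusters do not exist. A gap between clusters contains no $w$, so bracketing across a gap is harmless: snap to the cluster endpoints.

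Summing over all $d$ coordinates gives
\[
\MSE(w,Q') \le \opt + d\cdot \epsilon (r^{\ast})^2/d \le (1+\epsilon)\opt .
\]
The constant $4$ in $\sqrt{4d/\epsilon}$ is chosen to absorb the factor $2$ from the clustering approximation; I will adjust it if needed. Then Algorithm \ref{alg: 1+eps-adv-alg}, run on $P$ with size budget $2s-2$ and accuracy parameter $\epsilon$, returns a set with cost within $(1+\epsilon)$ of the best subset of $P$. This gives $(1+\epsilon)^2$ overall, which is fixed by rescaling $\epsilon$.

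\textbf{Runtime.} The main obstacle is the runtime.
\begin{itemize}
\item Clustering costs $O(d\log s)$ by Lemma \ref{lem: s-center-clustering-alg}.
\item $|P| = O(s\sqrt{d/\epsilon})$, and $P$ is already sorted cluster by cluster, since clusters are intervals ordered by their endpoints.
\item To place each $w_i$ in its cell, first find its cluster, which is recorded by the clustering. Then compute the cell index arithmetically as $\lfloor (w_i-p^{\min})/\delta\rfloor$ in $O(1)$ time. So bucket sums take $O(d)$, prefix sums take $O(|P|)$, and no $\log |P|$ factor is incurred.
\item Algorithms \ref{alg: exact-mix-algorithm} and \ref{alg: 1+eps-adv-alg} then take $O(|P|\log(|P|/\epsilon))$, which is $O(s\sqrt{d/\epsilon}\log(d/\epsilon))$.
\end{itemize}
The total is $O(d\log s + s\sqrt{d/\epsilon}\log(d/\epsilon))$, as claimed.
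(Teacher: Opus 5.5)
Your proposal is correct and follows essentially the same argument as the paper. Both bound each cluster width by $4r^\ast \le 4\sqrt{\objo(w,s)} \le 4\sqrt{\MSE(w,s)}$, bracket each point of an optimal $Q^\ast$ by two points of $P$, and split into the ``interval only shrinks'' case and the ``inside a fine grid cell'' case, where the variance is at most $\epsilon\,\MSE(w,s)/d$; both then lose another $(1+\epsilon)$ factor from Algorithm~\ref{alg: 1+eps-adv-alg} and rescale. Your reading of the grid as offset by $p_i^{\min}$ is the intended one. Your $O(d)$ bucketing via cluster labels assumes the clusters' hulls are disjoint; otherwise, reassign each $w_i$ to its nearest sorted center in $O(d\log s)$ as the paper does, which is harmless because the clustering already costs $O(d\log s)$.
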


\begin{proof}
    We will show the existence of a quantization set $Q \subseteq P$ of size $2s-2$ such that $\MSE(w, Q) \leq (1+\epsilon) \cdot \MSE(w, s)$. 

    In particular,  consider quantization set $Q^{\ast}$ of size $s$ which minimizes the mean-squared error. For any $q \in Q^{\ast}$, let $q^{\ardo} := \min \{ p \in P : p \geq q\}$ and $q^{\arup} := \max \{ p \in P: p \leq q\}$. The solution $Q := \{q^{\arup}, q^{\ardo} :  q \in Q\}$ is such that (i) $|Q| = 2s-2$ and (ii) $\MSE(w, Q) \leq (1+\epsilon) \MSE(w, s)$. To see that $|Q| = 2s-2$ note that $w_1, w_d \in Q^{\ast}$ but $q^{\ardo} = q^{\arup}$ for both. 

    To see that $\MSE(w, Q) \leq (1+\epsilon) \MSE(w, s)$, consider some element $w_i$. There are two cases
    \begin{enumerate}
        \item First, suppose $w_i \in [(\widown(Q^{\ast}))^{\arup}, (\wiup(Q^{\ast}))^{\ardo}]$. In this case, 
        \begin{align*}
            \Var_{\bwh \sim \dasq(w, Q)} [\bwh_i] &= ((\wiup(Q^{\ast}))^{\ardo} -  w_i)(w_i - (\widown(Q^{\ast}))^{\arup}) \\
            &\leq (\wiup(Q^{\ast}) - w_i)(w_i - \widown(Q^{\ast})) = \Var_{\bwh \sim \dasq(w, Q^{\ast})} [\bwh_i]
        \end{align*}
        
        \item Otherwise, $w_i \in [(\widown(Q^{\ast}))^{\ardo}, (\widown(Q^{\ast}))^{\arup}] \cup [(\wiup(Q^{\ast}))^{\ardo}, (\wiup(Q^{\ast}))^{\arup}]$. In this case, $w_i$ is quantized between points from $P$. The original $s$-clustering given by the call to $\texttt{s-Center-Clustering}$ results in clusters with radius $\leq 4 \sqrt{\objo(w, s)}$. Algorithm \ref{alg: data-dependent-obj2-approx} then further subdivides and refines the granularity of $P$ to be $\leq 4 \sqrt{\objo(w, s) \cdot \epsilon / 4d} $. Therefore, $w_i$  is quantized in an interval of length $\leq 4 \sqrt{\objo(w, s) \cdot \epsilon / 4d} $ and its variance is then $\leq \objo(w, s) \cdot \epsilon /d \leq \objt(w, s) \cdot \epsilon /d$. 
    \end{enumerate}
    See Figure \ref{fig:casework} for an illustration of the two cases. Taking the sum of variances across all $d$ coordinates then shows that the resulting mean-squared error is at most $(1 + \epsilon) \MSE(w, s)$.

    Therefore, any solution found by running Algorithm \ref{alg: 1+eps-adv-alg} with candidate quantization points $P$ will have objective value $(1+\epsilon)^2 \cdot \MSE(w, s)$. Thus, running Algorithm $\ref{alg: data-dependent-obj2-approx}$ with accuracy parameter $\epsilon' = \epsilon/3$ results in a $(1+\epsilon)$-approximation algorithm.

    \paragraph{Runtime.} Finding the $2$-approximate clustering using $\texttt{s-Center-Clustering}$ takes $O(d \log s)$ time as  per Lemma \ref{lem: s-center-clustering-alg}. Computing the minimum and maximum of each cluster takes $O(d)$ time and constructing the coreset $P$ by uniform subdivision in each cluster takes $\smash  O(s \sqrt{d/\epsilon})$ time. 
    
    Finally, note that the prefix sums can be computed in time $O(|P| + d \log s)$ time (as opposed to the $O(|P| \log |P| + d \log |P|)$ bound discussed before) by exploiting the structure of coreset $P$. 
    In constructing $P$, we may first sort the $s$ centers returned by \texttt{s-Center-Clustering}$(w, s)$, and then uniformly subdivide.
    Thus, we have constructed $P$ in sorted order in $O(|P| + s \log s) = O(s \sqrt{d / \epsilon})$ time.
    To compute the prefix sums, for each $i \in [d]$, we can find the unique $j \in [s \sqrt{4d / \epsilon}]$ for which $w_i \in [p_j, p_{j+1}]$ by binary searching among the $s$ sorted centers, then performing arithmetic to find the subdivision inside which $w_i$ lies. 
    This thus improved the construction time of the prefix sums to $O(|P| + d\log s)$.
    
    So, Algorithm $\ref{alg: 1+eps-adv-alg}$ runs in $O(d\log s + s\sqrt{d/\epsilon} \log(s d / \epsilon))$ time. Therefore, the total runtime of Algorithm \ref{alg: data-dependent-obj2-approx} is $O(d \log s + s \sqrt{d/\epsilon} \log(d/\epsilon))$. 
\end{proof}

\begin{figure}[ht]
    \centering
    \begin{tikzpicture}[
        x=0.8cm, 
        dot/.style={circle, fill=#1, inner sep=1.2pt}
      ]
    
      \begin{scope}[xshift=0cm]
          \draw[<->, thick] (0.2, 0) -- (7.8, 0);
        
          \foreach \x/\pcolor/\plabel/\pos in {
            1/blue/(\widown(Q^{\ast}))^{\ardo}/below, 
            2/red/\widown(Q^{\ast})/above, 
            3/blue/(\widown(Q^{\ast}))^{\arup}/below, 
            4/black/w_i/above, 
            5/blue/(\wiup(Q^{\ast}))^{\ardo}/below, 
            6/red/\wiup(Q^{\ast})/above, 
            7/blue/(\wiup(Q^{\ast}))^{\arup}/below%
          } {
            \node[dot=\pcolor, label=\pos:{\scriptsize $\plabel$}] at (\x, 0) {};
          }
      \end{scope}

      \begin{scope}[xshift=6.5cm] 
          \draw[<->, thick] (0.2, 0) -- (7.8, 0);
        
          \foreach \x/\pcolor/\plabel/\pos in {
            1/blue/(\widown(Q^{\ast}))^{\ardo}/below, 
            2/red/\widown(Q^{\ast})/above, 
            3/black/w_i/below, 
            4/blue/(\widown(Q^{\ast}))^{\arup}/above, 
            5/blue/(\wiup(Q^{\ast}))^{\ardo}/below, 
            6/red/\wiup(Q^{\ast})/above, 
            7/blue/(\wiup(Q^{\ast}))^{\arup}/below%
          } {
            \node[dot=\pcolor, label=\pos:{\scriptsize $\plabel$}] at (\x, 0) {};
          }
      \end{scope}

    \end{tikzpicture}
    \caption{Illustration of (a) case 1 and (b) case 2}
    \label{fig:casework}
\end{figure}

We remark that Algorithm \ref{alg: data-dependent-obj2-approx} is faster than running the exact algorithm on the full instance in regimes where $s$ and $\epsilon$ are modestly valued. As mentioned in Appendix \ref{sec: technical-prelims}, however, the use of \texttt{s-Center-Clustering} as a subroutine makes Algorithm \ref{alg: data-dependent-obj2-approx} difficult to implement efficiently in practice.

\section{Maximum Directional Variance} \label{sec: mdv}

For any $v\in \mathbb{R}$, we define $\objos(w,v)$ to be the smallest value of $s$ for which $\objo(w,s)\leq v$. We first study $\objos$ and give both exact and approximation algorithms. We then use many of these algorithmic primitives and ideas to develop algorithms with small asymptotic runtime and guaranteed $(1+\varepsilon)$-approximations for $\objo$.

\subsection{\texorpdfstring{Exact Algorithms for $\objos$}{}}

We first give a simple greedy exact algorithm for $\objos$ that runs in time $O(d)$ {on sorted vectors}.

\begin{algorithm}[ht]
    \caption{Exact Algorithm for $\objos(w,v)$}
    \label{alg:exact-algorithm}
    \KwIn{Sorted vector $w\in \mathbb{R}^{d}$ so that $w_1\leq \ldots \leq w_d$, variance parameter $v$}
    \KwOut{Quantization set $Q$}

    Initialize $Q=\{w_1\}$, $q_\ell = w_1$ and $q_r = \infty$ \\
    \For{$i = 2, \ldots, d-1$}{
        $q_r = \min \{q_r,\; w_i + v/(w_i - q_\ell)\}$ \\
        \If{$q_r \leq w_i$}{
            Add $q_r$ to $Q$ \\
            Set $q_{\ell} = q_r$ and update $q_r = w_i + v/(w_i - q_\ell)$ \label{line:greedy-update} \\
        }
    }
    \eIf{$q_r < w_d$}{
        Add $q_r$ and $w_d$ to $Q$ \\
    }{
        Add $w_d$ to $Q$ \\
    }
    \Return{$Q$}
\end{algorithm}

\begin{lemma}
    \label{lem:objos-exact-analysis}
    If $w\in \mathbb{R}^{d}$ is sorted, Algorithm \ref{alg:exact-algorithm} solves $\objos$ in time $O(d)$.
\end{lemma}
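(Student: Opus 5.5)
The plan is to prove correctness by a standard greedy-stays-ahead exchange argument, and the runtime by inspection. The runtime is immediate: each iteration of the loop does $O(1)$ arithmetic, comparisons, and at most one insertion, and the final step is $O(1)$, so the total is $O(d)$.

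For correctness, I first isolate the invariant the algorithm maintains. After processing $w_2,\dots,w_i$ with current left point $q_\ell$, the value $q_r$ equals $\min_{j} \{ w_j + v/(w_j - q_\ell) \}$ over the indices $j\le i$ with $w_j > q_\ell$ seen since $q_\ell$ was set. This is exactly the largest right endpoint $q$ such that every such $w_j \in (q_\ell, q]$ has $(q - w_j)(w_j - q_\ell)\le v$. Here I use that $q \mapsto (q-w_j)(w_j-q_\ell)$ is increasing in $q$, so the constraint from each point is the upper bound $q\le w_j + v/(w_j-q_\ell)$. The test $q_r \le w_i$ fires exactly when the next quantization point cannot be placed at or beyond $w_i$ while keeping all points in $(q_\ell, w_i)$ within variance $v$. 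At that moment the algorithm places the point as far right as possible, namely at $q_r$. Points equal to $q_\ell$ have zero variance, and I must check that the reset on Line \ref{line:greedy-update} correctly reinitialises the constraint using $w_i$ (the first point beyond the new $q_\ell$). Edge cases such as $w_i = q_r$ or repeated values need care, including division when $w_i = q_\ell$. In those cases the constraint is vacuous and should be read as $+\infty$; I would remark on this.

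Feasibility follows from the invariant. Every $w_i$ lies between two consecutive chosen points whose gap was certified by the constraints, so $\objo(w,Q)\le v$. The endpoints $w_1, w_d$ are included, and the final branch adds $q_r$ if $q_r<w_d$, which guarantees that the last segment is valid.

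Optimality is the main step. Let $Q' = \{q'_1 < \dots < q'_{s'}\}$ be any set with $\objo(w,Q')\le v$. By the same reasoning as Lemma \ref{lem: endpoints-in-quant-set}, we may assume $q'_1 \le w_1$; replacing it by $w_1$ only decreases variances. Let $Q=\{q_1<\dots<q_s\}$ be the greedy output. I prove by induction that $q_k \ge q'_k$ for all $k$ up to the length of the greedy chain before termination. The key monotonicity fact is that the maximal admissible next point $r(q_\ell) := \min_{w_j>q_\ell,\, w_j<r} (w_j + v/(w_j-q_\ell))$ is nondecreasing in $q_\ell$. This holds because $v/(w_j - q_\ell)$ increases in $q_\ell$, and a larger $q_\ell$ also removes constraints. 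So if $q_k\ge q'_k$, then $q'_{k+1} \le r(q'_k) \le r(q_k) = q_{k+1}$. Hence greedy reaches beyond $w_d$ no later than $Q'$, which gives $|Q|\le|Q'|$.

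The delicate point is formalising $r(\cdot)$ as a fixed-point quantity, because the constraint set depends on $r$ itself; the answer is the first crossing with the points $w_j$. I expect this, together with the boundary and tie handling, to be the main obstacle.
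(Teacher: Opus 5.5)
Your proposal is correct and follows essentially the same greedy-stays-ahead argument as the paper. Feasibility comes from the running minimum of the constraints $w_j + v/(w_j-q_\ell)$, and optimality comes from an induction showing $q'_k \le q_k$ for any feasible $Q'$; the paper compresses that induction into ``applying this argument inductively'', while you make explicit the monotonicity of the maximal admissible next point that it needs.

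The self-reference in your definition of $r(\cdot)$ goes away if you define $r(q_\ell)$ as the supremum of admissible next points, since the admissible set is downward closed. Then $r(q'_k)\le r(q_k)$ follows directly from $(q-w_j)(w_j-q_k)\le (q-w_j)(w_j-q'_k)$ whenever $q'_k\le q_k < w_j < q$.
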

\begin{proof}
    First, we note that any quantization set $Q$ must contain the endpoints $w_1$ and $w_d$.
    Since ASQ requires that every $w_i$ have a quantization point on each side, $Q$ must contain some $q_1 \leq w_1$ and $q_{|Q|} \geq w_d$.
    Replacing $q_1$ with $w_1$ and $q_{|Q|}$ with $w_d$ does not increase $\varasq(w_i, Q)$ for any $i$, so the objective value $\max_{i\in [d]}\varasq(w_i, Q)$ is non-increasing under this substitution.

    Second, we prove that the set $Q$ returned by Algorithm \ref{alg:exact-algorithm} has objective value at most $v$.
    For each weight $w_i$, Line \ref{line:greedy-update} computes the furthest right a new quantization point $q_r$ can be placed while maintaining $\varasq(w_j, Q) \leq v$ for all $j$ with $w_j < q_r$.
    Taking the running minimum over these upper bounds ensures the constraint is satisfied for every weight $w_i$.

    Lastly, we prove that any quantization set $Q'$ with $\objo(w, Q') \leq v$ must satisfy $|Q'| \geq |Q|$.
    Suppose for contradiction that $\objo(w, Q') \leq v$ but $|Q'| < |Q|$.
    Since $Q'$ is feasible, it must place its second point no further right than $q_2$, i.e., $q_2' \leq q_2$; otherwise the constraint would be violated for some weight between $q_1' = w_1$ and $q_2'$.
    Applying this argument inductively gives $q_i' \leq q_i$ for all $i \leq |Q'|$, so the $|Q'| < |Q|$ points of $Q'$ fail to cover all weights in $w$, contradicting feasibility.

    The algorithm makes a single pass over the sorted weights, performing $O(1)$ work per step, so the total runtime is $O(d)$.
\end{proof}

We note that the runtime of Algorithm \ref{alg:exact-algorithm} can be improved to $O(s\log(d/s))$; for all $s\leq d$, this is an improvement over the $O(d)$ runtime of Algorithm \ref{alg:exact-algorithm}.
The algorithm is as follows.

\begin{algorithm}[ht]
    \caption{Improved Algorithm for $\objos(w,v)$}
    \label{alg:faster-exact}
    \KwIn{Sorted $w\in \mathbb{R}^{d}$, variance parameter $v$}
    \KwOut{Set $Q$ such that $\objo(w,Q)\leq v$ and $|Q| = \objos(w,v)$}

    Initialize $Q = \{w_1\}$, the smallest element of $w$ \\
    Let $q = w_1$ be the max element of $Q$ \\
    \While{$q < w_d$}{
        Binary search for the smallest index $i$ such that $w_i \geq q + \sqrt{v}$ \\
        Set $x = \min \left\{ w_i + \dfrac{v}{w_i - q},~ w_{i-1} + \dfrac{v}{w_{i-1} - q} \right\}$ \\
        Add $x$ to $Q$ and set $q = x$ \\
    }
    Add $w_d$ to $Q$ \\
    \Return{$Q$}
\end{algorithm}

\begin{lemma}
    \label{lem:improved-check}
    Algorithm \ref{alg:faster-exact} outputs a set $Q$ such that $\objo(w,Q)\leq v$ and $|Q| = \objos(w,v)$.
\end{lemma}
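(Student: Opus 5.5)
We will show that Algorithm \ref{alg:faster-exact} builds exactly the same quantization set as the greedy Algorithm \ref{alg:exact-algorithm}. Lemma \ref{lem:objos-exact-analysis} then yields both feasibility ($\objo(w,Q)\le v$) and optimality ($|Q|=\objos(w,v)$). The key object is the function that gives, for a current rightmost point $q\in Q$, the farthest admissible next point:
\[
  g_q(x) := x + \frac{v}{x-q}, \qquad x>q .
\]
A weight $w_j\in(q,r)$ satisfies $\varasq(w_j,\{q,r\})=(r-w_j)(w_j-q)\le v$ iff $r\le g_q(w_j)$. Hence the greedy step of Algorithm \ref{alg:exact-algorithm} places the next point at
\[
  r^\ast = \min\{\, g_q(w_j) : w_j > q,\ w_j < r^\ast \,\},
\]
which is the running minimum taken over weights until that minimum drops to or below the current weight.

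The main step is a unimodality claim. The function $g_q$ is convex on $(q,\infty)$, decreasing on $(q,q+\sqrt v]$ and increasing on $[q+\sqrt v,\infty)$, with minimum value $q+2\sqrt v$ attained at $x=q+\sqrt v$. Let $i$ be the smallest index with $w_i\ge q+\sqrt v$. Among weights greater than $q$, the minimum of $g_q$ is then attained at $w_{i-1}$ or $w_i$. This also covers the boundary case where $w_{i-1}\le q$ and only $w_i$ is a candidate; there we interpret the $w_{i-1}$ term as $+\infty$, which the algorithm should also guard against.

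We still must check that the greedy running minimum actually reaches this global minimum $x$, that is, that no earlier weight stops the scan first. Suppose some $w_j$ with $w_j<x$ has the running minimum fall to $w_j$ or below. The running minimum at that moment is at least $x$, so $w_j\ge x$, a contradiction. For weights after $x$ nothing changes: every weight beyond $x$ is at least $x$, so the scan stops no later than the first weight $\ge x$. At that stopping point the running minimum equals $x$ exactly when every $w_j<x$ has been included; this holds because $x\le g_q(w_j)$ for all $j$, so those weights are all feasible.

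It remains to handle ties, meaning weights equal to a quantization point, and to match the endpoint treatment. Algorithm \ref{alg:faster-exact} runs while $q<w_d$ and then adds $w_d$. When $x\ge w_d$, the added point is superfluous or coincides with $w_d$. The fix is to clip $x$ at $w_d$, which mirrors the final branch of Algorithm \ref{alg:exact-algorithm}.

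With the steps identified, induction on the number of iterations shows that the two algorithms produce identical sets. I expect the main obstacle to be making the equivalence between ``the running minimum stops at the first weight $\ge$ it'' and ``the global minimizer of $g_q$ over weights $>q$'' fully rigorous. This argument needs the observation that $g_q(x)>x$ always holds.

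For the $O(s\log(d/s))$ claim, each iteration performs one binary search. Restricting each search to the indices past the previous $i$, and using galloping or exponential search, makes the cost of each search logarithmic in the gap it covers. Summing over $s$ iterations and applying concavity of $\log$ gives the bound.
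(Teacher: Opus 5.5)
Your proposal is correct and is essentially the paper's argument. Both show that each iteration places the greedy farthest-feasible next point, which is determined by the two weights $w_{i-1},w_i$ straddling $q+\sqrt v$, and both then inherit feasibility and minimality from the greedy argument of Lemma~\ref{lem:objos-exact-analysis}; the paper locates the binding weight via concavity of $(z-q)(x^*-z)$ about the midpoint of $[q,x^*]$, while you use the dual fact that $g_q$ is unimodal with minimum at $q+\sqrt v$, together with $g_q(z)>z$. Your boundary repairs are genuinely necessary and are skipped in the paper's proof: reading the $w_{i-1}$ term as $+\infty$ when $w_{i-1}\le q$, clipping $x$ at $w_d$, and (by the same convention, which you should state) taking $w_i=+\infty$ when no weight reaches $q+\sqrt v$. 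On $w=(0,1,10)$, $v=1$, the literal pseudocode divides by zero in its first iteration; reading that term as $+\infty$, it computes $x=0<q=2$ in the second iteration and then cycles, and with only the guard it returns four points although $\objos(w,1)=3$.
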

\begin{proof}
    The goal of the algorithm is to implement the greedy strategy: given the current largest quantization point $q$, find the largest possible next point $x^{*} > q$ such that the variance constraint is satisfied for all $w_j \in [q,x^{*}]$.
    Let $x' = 2\sqrt{v} + q$; for all $w_k \in [q,x']$, we have
    \[
        (w_k - q)(x' - w_k) \leq \left( \frac{x'-q}{2} \right)^2 = v
    \]
    and thus $x'$ is a feasible next quantization point and $x^{*}\geq x'$.
    Define $m^{*} = (x^{*} + q)/2$ to be the midpoint of $[q,x^{*}]$; since $x^{*}\geq x'$, it follows that $m^{*} \geq q+\sqrt{v}$ as well.

    Let $w_i$ be the smallest element of $w$ such that $w_i \geq q+ \sqrt{v}$, and consider any $y$ such that $(y+q)/2 > w_i$.
    Then,
    \[
        (w_i - q)(y - w_i) > (w_i - q)(2w_i - q - w_i) = (w_i - q)^2 \geq v
    \]
    so $y$ is not a feasible next quantization point.
    Thus, $x^{*}\in[x',y]$ and $m^{*}\in [w_{i-1}, w_i]$ as $(x'+q)/2 = q+\sqrt{v} > w_{i-1}$ by construction of $i$ and $(y+q)/2 > w_i$ by definition of $y$.

    Since the function $f(z)=(z-q)(x^{*}-q)$ is concave and symmetric around $(x^{*}+q)/2=m^{*}$ and $m^{*}\in[w_{i-1}, w_i]$, the maximum variance of all $w_k$ in $[q,x^{*}]$ is attained at either $w_{i-1}$ or $w_i$.
    Thus, $x^{*}$ is the largest value for which both $w_{i-1}$ and $w_i$ have variance at most $v$, which is exactly what the algorithm computes.

    The size analysis then follows by the same argument as in the proof of Lemma \ref{lem:objos-exact-analysis}.
\end{proof}

\newcommand{\idx}{\mathsf{idx}}
\begin{lemma}
    \label{lem:improved-check-runtime}
    Let $s=\objos(w,v)$.
    Then, the above algorithm runs in time $O(s\log(d/s))$ (assuming $w$ is already sorted).
\end{lemma}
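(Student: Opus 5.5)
The plan is to count iterations of the while loop and bound the total cost of the binary searches. By Lemma \ref{lem:improved-check}, the loop adds exactly one point per iteration and the output has size $\objos(w,v)=s$, so there are at most $s$ iterations. Each iteration does $O(1)$ arithmetic plus one binary search, so everything reduces to bounding the binary-search cost. A naive bound of $O(\log d)$ per search gives only $O(s\log d)$. To get $O(s\log(d/s))$, I will exploit that the search index $i$ is monotone across iterations.

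\textbf{Step 1 (monotonicity).} Let $i_t$ be the index found in iteration $t$ and $q_t$ the current maximum point. Since $q_{t+1}=x_t>q_t$, the threshold $q_{t+1}+\sqrt{v}$ is larger, so $i_{t+1}\ge i_t$. Hence iteration $t+1$ only needs to search among indices $\ge i_t$. Moreover $x_t\ge w_{i_t-1}$, and in fact $x_t\ge q_t+2\sqrt v\ge w_{i_t}$ roughly. So the points each search skips over form disjoint, increasing segments of $[d]$.

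\textbf{Step 2 (galloping search).} I will implement the search as exponential (galloping) search starting from the previous index $i_t$. It probes $i_t+1, i_t+2, i_t+4,\dots$ until the target is overshot, then binary searches inside the last doubling window. This finds $i_{t+1}$ in $O(1+\log(i_{t+1}-i_t+1))$ time. Since the statement says ``the above algorithm'', I would remark that the binary search must be read as this galloping search, or equivalently as a binary search restricted to the suffix starting at the previous index with doubling bounds.

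\textbf{Step 3 (summing by concavity).} Writing $\delta_t=i_{t+1}-i_t+1$, we have $\sum_t \delta_t\le d+s$ by telescoping. By Jensen's inequality for the concave $\log$,
\[
\sum_{t=1}^{s}\log\delta_t\le s\log\frac{d+s}{s}=O\!\left(s\log(d/s)\right),
\]
with the $+O(s)$ term absorbed. Adding the $O(s)$ work outside the searches gives $O(s\log(d/s))$ for $s\le d$; the regime where $d/s$ is constant yields $O(s)$, consistent with reading $\log(d/s)$ as $\max\{1,\log(d/s)\}$.

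The main obstacle is Step 1/2. One must justify that a search localized to the suffix after the previous index is correct, which is immediate from monotonicity, and one must accept that a plain binary search over the whole array would not achieve the bound. I would make the galloping implementation explicit and argue that the index pointer never moves backward.
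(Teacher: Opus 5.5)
Your proposal is correct and is essentially the paper's proof. Both use monotonicity of the search indices, a doubling search started from the previous index, and a concavity bound on the gaps (your Jensen step is the paper's AM--GM step). Your $+1$ in $\delta_t$ and your additive $O(s)$ term actually repair the paper's $\log(\mathsf{idx}_k-\mathsf{idx}_{k-1})$, which is $\log 0$ when consecutive indices coincide, and that case does occur.

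One aside in Step~1 is false in general: $q_t+2\sqrt{v}\ge w_{i_t}$ (and hence $x_t\ge w_{i_t}$) fails when there is a large gap before $w_{i_t}$. Nothing in your argument depends on it, since monotonicity alone gives the disjoint segments.
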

\begin{proof}
    Let $Q$ be the set returned by the algorithm, which has size $|Q|\leq s$ by Lemma \ref{lem:improved-check}.
    For each $k\in [s]$, let $\idx_k$ denote the index $i$ found in the $k$-th iteration of the while loop of Algorithm \ref{alg:faster-exact}.

    Since $q$ increases monotonically, the indices $\idx_k$ also increase monotonically across iterations.
    By using the doubling search variant of binary search starting from the previous index $\idx_{k-1}$, the time complexity to find $\idx_k$ is $O(\log(\idx_k - \idx_{k-1}))$.
    So, the total runtime to construct $Q$ is $O\left(\log \prod_{k=1}^s B_k\right)$, where $B_k = \idx_k - \idx_{k-1}$.
    By construction, $\sum_{k=1}^s B_k \leq d$.
    Then, by the AM-GM inequality,
    \[
        \prod_{k=1}^s B_k \leq \left( \frac{\sum_{k=1}^s B_k}{s} \right)^s = (d/s)^s
    \]
    and thus the total runtime is $O(\log ((d/s)^{s})) = O(s\log(d/s))$.
\end{proof}

\subsection{Some Basic Approximation Algorithms for \texorpdfstring{$\objos$}{}}
In this section, we give algorithms which do not require the $O(d\log d)$-time sorting required for exact algorithms for $\objos$.
These algorithms each run in time $O(d\log s)$ and achieve a constant approximation factor.

\subsubsection{\texorpdfstring{$(2s,4v)$-Bicriteria via $s$-Center Clustering}{}}

\begin{algorithm}[H]
    \caption{$(2s,4v)$-Bicriteria Approximation Algorithm}
    \label{alg:obos-bicriteria}
    \KwIn{Vector $w\in \mathbb{R}^{d}$, variance parameter $v$}
    \KwOut{quantization set $Q$}

    Initialize $C=\emptyset$ \\
    \For{$i \in [d]$}{
        \If(\label{line:inclusion-check}){for all $p\in C$, $|w_i - p| > 2\sqrt{v}$}{
            Add $w_i$ to $C$ \\
        }
    }
    Let $Q$ be the set which contains $p - 2\sqrt{v}$ and $p + 2\sqrt{v}$ for all $p\in C$\\
    \Return{$Q$}
\end{algorithm}

\begin{lemma}
    \label{lem:bicriteria}
    Given $w\in \mathbb{R}^{d}$ and $v\in \mathbb{R}$, let $s=\objos(w,v)$.
    Then, the above algorithm outputs a set $Q$ of size at most $2s$ such that $\objo(w,Q)\leq 4v$.
    Moreover, with an appropriate implementation of Line \ref{line:inclusion-check}, the algorithm runs in time $O(d\log s)$.
\end{lemma}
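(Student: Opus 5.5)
The proof has three parts: an upper bound on $|Q|$ via a packing argument, a check that every coordinate has variance at most $4v$, and a runtime bound via bucketing.

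\emph{Size bound.} By construction, any two points of $C$ are more than $2\sqrt{v}$ apart, and $|Q| \le 2|C|$. So it suffices to show $|C| \le s$. Fix an optimal set $Q^\ast$ with $|Q^\ast| = s$ and $\objo(w,Q^\ast)\le v$, sorted as $q_1 = w_1 < \dots < q_s = w_d$. This gives $s-1$ gaps, and every $w_i$ lies in some gap $[q_j, q_{j+1}]$.

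We cannot bound the gap length directly: a long gap may contain only points near its endpoints. Instead we show that the points of $w$ inside a single gap fit into at most two clusters, one near each endpoint. Suppose $w_i \in [q_j,q_{j+1}]$ with $(w_i-q_j)(q_{j+1}-w_i)\le v$. Then $\min(w_i-q_j,\,q_{j+1}-w_i)\le \sqrt{v}$. Hence every point of $w$ in the gap lies within $\sqrt{v}$ of $q_j$ or of $q_{j+1}$.

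Now define a charging map: send each $p\in C$ to a point of $Q^\ast$ within distance $\sqrt{v}$ of it. If two points of $C$ were sent to the same $q$, they would be within $2\sqrt{v}$ of each other, contradicting the separation. So the map is injective and $|C|\le |Q^\ast| = s$, giving $|Q|\le 2s$.

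\emph{Variance bound.} Each $w_i$ is within $2\sqrt{v}$ of some $p\in C$: either it was added to $C$ itself, or the check on Line \ref{line:inclusion-check} failed because of an earlier point. Therefore $w_i\in[p-2\sqrt{v},\,p+2\sqrt{v}]$. Both endpoints of this interval lie in $Q$, so the neighbours $\widown, \wiup$ of $w_i$ in $Q$ lie in this interval and $\wiup-\widown\le 4\sqrt{v}$. This gives
\[
\varasq(w_i,Q)\le \left(\frac{\wiup-\widown}{2}\right)^2\le 4v,
\]
so $\objo(w,Q)\le 4v$.

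\emph{Runtime.} This is the step needing the most care: each check ``is $w_i$ within $2\sqrt{v}$ of some $p\in C$'' must cost $O(\log s)$.

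Partition the line into buckets of width $2\sqrt{v}$. Points of $C$ are more than $2\sqrt{v}$ apart, so each bucket holds at most one of them. For $w_i$, only its own bucket and the two adjacent buckets can contain a point within $2\sqrt{v}$, so only $O(1)$ candidates need checking.

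Store the occupied buckets, keyed by $\lfloor w_i/(2\sqrt{v})\rfloor$, in a balanced search tree. This tree has at most $|C| \le s$ entries, so each lookup and insertion takes $O(\log s)$. Over $d$ iterations the total is $O(d\log s)$.

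A hash table would give expected $O(d)$ time, but the search tree gives the deterministic bound. Building $Q$ at the end takes $O(s)$ time.
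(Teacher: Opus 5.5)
Your proposal is correct and follows the paper's argument. For the size bound, every $w_i$ lies within $\sqrt{v}$ of a point of $Q^\ast$, so the $2\sqrt{v}$-separated set $C$ charges injectively into $Q^\ast$; your injectivity phrasing is cleaner than the paper's ``disjoint intervals.'' For the variance bound, you bracket $w_i$ between $p \pm 2\sqrt{v}$, as the paper does.

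On the runtime you go beyond the paper, whose proof never actually justifies the $O(d\log s)$ claim. Your bucketing-plus-balanced-tree implementation supplies a correct argument. A balanced tree on the points of $C$ themselves, queried for the predecessor and successor of $w_i$, would achieve the same bound without the floor operation or the degenerate $v=0$ case.
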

\begin{proof}
    Since $\objo(w,s)\leq v$ by definition of $\objos$, there exists a set $Q^{*}$ of size $s$ such that for all $i$
    \[
        (w_i - \widown(Q^{*}))(\wiup(Q^{*}) - w_i) \leq v
    \]
    which implies that either $w_i - \widown(Q^{*})\leq \sqrt{v}$ or $\wiup(Q^{*}) - w_i \leq \sqrt{v}$.
    Equivalently, every $w_i$ is within $\sqrt{v}$ of some element of $Q^{*}$ and the set $\{w_i\}_{i\in [d]}$ can be covered with a collection of at most $s$ intervals of length $2\sqrt{v}$.

    Let $I_1,\ldots ,I_s$ be a collection of disjoint intervals of length $2\sqrt{v}$ which cover $\{w_i\}_{i\in [d]}$, and let $C$ be as constructed in Algorithm \ref{alg:obos-bicriteria}.
    By construction, for any distinct $x,y\in C$, $|x-y| > 2\sqrt{v}$, and since each $I_j$ has length $2\sqrt{v}$, we thus must have that $|I_j \cap C|\leq 1$ for all $j\in [s]$.
    Moreover, since $\{I_j\}_j$ covers $\{w_i\}_i$ and $C\subseteq \{w_i\}_i$, we have that $|C|= \sum_{j\in [s]} |I_j \cap C| \leq s$.
    Since $|Q|= 2|C|$, we thus have that $|Q|\leq 2s$ as desired.

    By construction, for all $i$, there exists a $p\in C$ such that $|w_i - p| \leq 2\sqrt{v}$.
    Equivalently, $w_i \in [p - 2\sqrt{v}, p + 2\sqrt{v}]$.
    Since both $p - 2\sqrt{v}$ and $p + 2\sqrt{v}$ are in $Q$, it follows that
    \[4\sqrt{v} \geq \wiup(Q) - \widown(Q) = (w_i - \widown(Q)) + (\wiup(Q) - w_i).\]
    For any $x,y \geq 0$, $xy \leq (x+y)^2/4$, and thus it follows that
    \[
        (w_i - \widown(Q))(\wiup(Q) - w_i )\leq 4v
    \]
    as desired.
\end{proof}

\subsection{\texorpdfstring{$(1+\varepsilon)$-approx for $\objos$ in time $O(d\log(s/\varepsilon))$}{}}

We first need a key structural lemma which can be viewed as a way of constructing a ``coreset'' for $\objo$.
\begin{lemma}
    \label{lem:approx-by-endpoints}
    Let $I_1,\ldots ,I_k$ be intervals, each of length at most $D$, such that $w\subseteq \bigcup_{j\in [k]} I_j$.
    Let $x$ be the vector of all endpoints of the intervals $I_1,\ldots ,I_k$, and consider any set $Q\subseteq \mathbb{R}$.
    Then,
    \[
        \objo(w,Q)\leq \objo(x,Q) + \frac{1}{4}\cdot D^2
    \]
\end{lemma}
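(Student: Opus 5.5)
The plan is to bound the variance of each coordinate $w_i$ separately. The one tool is an exact interpolation identity for the quadratic variance function on a single bracket. Fix $i$, an interval $I_j=[a,b]\ni w_i$ with $b-a\le D$ (both $a,b$ are entries of $x$), and let $\ell:=\widown(Q)$ and $u:=\wiup(Q)$ be the bracket of $w_i$. Define $f(z):=(u-z)(z-\ell)$ on $[\ell,u]$, so that $\varasq(w_i,Q)=f(w_i)$. The function $f$ is a quadratic with leading coefficient $-1$. Hence for any $\ell\le y_1\le z\le y_2\le u$, writing $z=ty_1+(1-t)y_2$ with $t\in[0,1]$, we have
\[
f(z)=t f(y_1)+(1-t)f(y_2)+t(1-t)(y_2-y_1)^2\le \max\{f(y_1),f(y_2)\}+\tfrac14 (y_2-y_1)^2 .
\]
This is a routine check: the difference between $f$ and its chord is exactly $(z-y_1)(y_2-z)$.

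We apply the identity with $y_1:=\max\{\ell,a\}$ and $y_2:=\min\{u,b\}$. Then $y_1\le w_i\le y_2$, because both $\ell,a\le w_i$ and $u,b\ge w_i$, and $y_2-y_1\le b-a\le D$. This gives $f(w_i)\le \max\{f(y_1),f(y_2)\}+D^2/4$. It remains to show $f(y_k)\le \objo(x,Q)$ for $k=1,2$.

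Take $y_1$; the case of $y_2$ is symmetric. If $y_1=\ell$, then $f(y_1)=0$. Otherwise $y_1=a>\ell$. The key point is that $a$ has the same bracket as $w_i$ with respect to $Q$. No point of $Q$ lies in $(\ell,w_i]$ by definition of $\ell=\widown(Q)$, so none lies in $(\ell,a]$, and $\ell\le a$; hence the largest element of $Q$ that is $\le a$ is $\ell$. Also $u\ge w_i\ge a$, and no point of $Q$ lies in $[a,u)$ except possibly one in $[a,w_i)$, which is excluded by the same argument. Hence the smallest element of $Q$ that is $\ge a$ is $u$ (or $a$ itself if $a\in Q$, in which case $a=\ell$ contradicts $a>\ell$, so this does not occur). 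Therefore $f(a)=\varasq(a,Q)\le \objo(x,Q)$, since $a$ is a coordinate of $x$.

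Taking the maximum over $i$ then yields $\objo(w,Q)\le \objo(x,Q)+D^2/4$. The only delicate step is the bracket-sharing claim: an endpoint strictly inside $(\ell,u)$ must have exactly the bracket $[\ell,u]$. This needs a little care with ties, such as $w_i\in Q$ or an endpoint coinciding with a point of $Q$, but in those cases $f$ at that point is $0$ and the bound only becomes easier.
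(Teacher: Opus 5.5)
Your proof is correct. It reaches the bound by a more unified route than the paper, though the underlying algebra is the same.

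The paper fixes an interval $[a,b]$ and splits on whether $Q\cap[a,b]$ is empty:
\begin{itemize}
\item If it is nonempty, points $w_i\in[q_1,q_2]$ get variance at most $D^2/4$ directly. Points in $[a,q_1)$ or $(q_2,b]$ are handled by expanding $(L+\alpha B)(1-\alpha)B=(1-\alpha)BL+\alpha(1-\alpha)B^2$.
\item If it is empty, the paper combines the two endpoint constraints into $D_jL+L_1L_2\le v$ with $L=\max(L_1,L_2)$, and then expands the product again.
\end{itemize}

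Your single chord identity $f(z)=tf(y_1)+(1-t)f(y_2)+t(1-t)(y_2-y_1)^2$, applied at the clipped points $y_1=\max\{\ell,a\}$ and $y_2=\min\{u,b\}$, subsumes all of these cases. The paper's cases are exactly $(y_1,y_2)=(\ell,u)$, then $(a,u)$ or $(\ell,b)$, then $(a,b)$. Its $\max(L_1,L_2)$ step is simply bounding the convex combination $(1-\alpha)f(a)+\alpha f(b)$ by $\max\{f(a),f(b)\}$.

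What your version gains is that there is no case split, and ties are absorbed by $f(\ell)=f(u)=0$. The price is that you must state explicitly the bracket-sharing fact: an endpoint lying strictly inside $(\ell,u)$ has bracket exactly $[\ell,u]$, so that $f(a)=\varasq(a,Q)\le\objo(x,Q)$. The paper uses this fact silently when it writes $\varasq(a,Q)=BL$ or $\varasq(a,Q)=(D_j+L_2)L_1$. Your justification of it is correct: no point of $Q$ lies in $(\ell,w_i]\cup[w_i,u)$, and $a>\ell$ rules out $w_i\in Q$.
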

\begin{proof}
    Let $v=\objo(x,Q)$, and fix some interval $I_j=[a,b]$.
    We will show that $\varasq(w_i, Q)\leq v + D^2/4$ for all $w_i\in [a,b]$, which then gives the lemma.

    First, suppose that $Q\cap [a,b]\neq \emptyset$.
    Let $q_1=\min(Q\cap [a,b])$ and $q_2=\max(Q\cap [a,b])$ (note $q_1=q_2$ if $|Q\cap [a,b]|=1$).
    For all $w_i \in [q_1,q_2]$, there exists some $\alpha\in [0,1]$ such that $w_i - q_1 \leq \alpha D$ and $q_2 - w_i \leq (1-\alpha)D$, as $q_2 - q_1 \leq b - a \leq D$.
    Thus,
    \[
        \varasq(w_i, Q) \leq \alpha D \cdot (1-\alpha)D \leq \frac{1}{4}\cdot D^2
    \]
    as $\alpha(1-\alpha)\leq 1/4$ for all $\alpha\in [0,1]$.

    So, consider $w_i \in [a,q_1)$.
    Let $B = q_1 - a$ and $L=a-a^{\ardo}(Q)$, where $a^{\ardo}(Q)$ is the largest element of $Q$ which is at most $a$.
    By assumption on $Q$, $BL = \varasq(a, Q)\leq v$.
    For all $w_i \in [a,q_1)$, by definition of $B$, there exists some $\alpha\in[0,1)$ such that $w_i = a + \alpha B$.
    So,
    \begin{align*}
        \varasq(w_i, Q) &= (L + \alpha B)\cdot (1-\alpha)B \\
                      &= (1-\alpha)BL + \alpha(1-\alpha)B^2 \\
                      &\leq BL + \alpha(1-\alpha)D^2 \\
                      &\leq v + \frac{1}{4}\cdot D^2
    \end{align*}
    as $BL\leq v$, $B\leq D$ and $\alpha(1-\alpha)\leq 1/4$ for all $\alpha\in [0,1]$.

    An analogous argument shows that for all $w_i \in (q_2,b]$, $\varasq(w_i, Q)\leq v + D^2/4$, thus completing the case where $Q\cap [a,b]\neq \emptyset$.

    Now, suppose $Q\cap [a,b] = \emptyset$, and let $L_1=a - a^{\ardo}(Q)$, $L_2 = b^{\arup}(Q) - b$, where $b^{\arup}(Q)$ is the smallest element of $Q$ which is at least $b$.
    Let $D_j = b-a$.
    Then,
    \begin{align*}
        (D_j + L_1)L_2 &= \varasq(b, Q) \leq v \\
        (D_j + L_2)L_1 &= \varasq(a, Q) \leq v
    \end{align*}
    Together, this implies that $D_jL + L_1L_2 \leq v$, where $L=\max(L_1,L_2)$.

    Fix some $w_i \in [a,b]$ and set $\alpha\in [0,1]$ so that $w_i = a + \alpha D_j$ (and thus $w_i = b - (1-\alpha)D_j$).
    Then,
    \begin{align*}
        \varasq(w_i, Q) &= (\alpha D_j + L_1)\cdot((1-\alpha)D_j + L_2) \\
                      &= \alpha D_j L_2 + (1-\alpha) D_j L_1 + L_1L_2 + \alpha(1-\alpha)D_j^2 \\
                      &\leq \alpha D_j L + (1-\alpha) D_j L + L_1L_2 + \alpha(1-\alpha)D^2\\
                      &= (D_jL + L_1L_2) + \alpha(1-\alpha)D^2 \\
                      &\leq v + \frac{1}{4}D^2
    \end{align*}
    as $D_jL + L_1L_2 \leq v$, $D_j \leq D$ and $\alpha(1-\alpha)\leq 1/4$.
\end{proof}

Lemma \ref{lem:approx-by-endpoints} suggests the following approximation algorithm: construct a small number of intervals, and run an exact algorithm for $\objos$ on only the endpoints of those intervals.

\begin{algorithm}[H]
    \caption{$(1+\varepsilon)$-Approximation Algorithm}
    \label{alg:ptas-objos}
    \KwIn{Vector $w\in \mathbb{R}^{d}$, variance parameter $v$, error tolerance $\varepsilon$}
    \KwOut{Set $Q\subseteq \mathbb{R}$ such that $\objo(w,Q)\leq (1+\varepsilon)v$ and $\objo(w, |Q|)\geq v$}

    Initialize $C=\emptyset$ \\
    Iterate through $i\in [d]$ and add $w_i$ to $C$ if for all $c\in C$, $|w_i - c| > 2\sqrt{v}$ \\
    \For{all $c \in C$}{
        Compute $I_c = \{w_i \mid c=\argmin_{p\in C}|w_i - p|\}$ \\
        Compute $a_c = \min(I_c)$, $b_c = \max(I_c)$, and $D_c = b_c - a_c$ \\
        Compute subintervals $I_c^{j}=[a_c + D_c (j-1)\sqrt{\varepsilon}/2,\; a_c + D_c j\sqrt{\varepsilon}/2]$ for each $j\in \left\lceil 2/\sqrt{\varepsilon} \right\rceil$ \\
    }
    \For{each $c\in C,\; j\in \left\lceil 2/\sqrt{\varepsilon} \right\rceil$}{
        Compute endpoints $\ell_c^{j} = \min(w\cap I_c^{j})$ and $r_c^{j}=\max(w\cap I_c^{j})$ \\
    }
    Let $x$ be a vector of all $\ell_c^{j}$ and $r_c^{j}$.  Run Algorithm \ref{alg:faster-exact} on $x$ and $v$ to get a set $Q$ \label{line:coreset-constr} \\
    \Return{$Q$}
\end{algorithm}

\begin{lemma}
    \label{lem:ptas-quality}
    Given $w,v,\varepsilon$, Algorithm \ref{alg:ptas-objos} outputs a set $Q$ such that $\objo(w,Q)\leq (1+\varepsilon)v$ and $|Q|\leq \objos(w,v)$.
\end{lemma}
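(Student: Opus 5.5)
The plan is to prove the two guarantees of Lemma \ref{lem:ptas-quality} separately. The size bound should follow from a monotonicity argument, since the coreset $x$ is a sub-multiset of $w$. The quality bound should follow from Lemma \ref{lem:approx-by-endpoints}, applied to the intervals $[\ell_c^j, r_c^j]$.

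\emph{Size bound.} Every $\ell_c^j$ and $r_c^j$ is, by definition, an element of $w$, so $x \subseteq w$ as multisets. For any set $Q'$ we have $\objo(x,Q') = \max_{x_k} \varasq(x_k,Q') \le \max_i \varasq(w_i,Q') = \objo(w,Q')$. Hence any $Q'$ with $\objo(w,Q')\le v$ also satisfies $\objo(x,Q')\le v$, which gives $\objos(x,v) \le \objos(w,v)$. Algorithm \ref{alg:faster-exact} is run on $x$ (after sorting $x$, which is cheap because $|x| = O(|C|/\sqrt{\varepsilon})$). By Lemma \ref{lem:improved-check} it returns $Q$ with $|Q| = \objos(x,v) \le \objos(w,v)$ and $\objo(x,Q)\le v$.

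\emph{Quality bound.} I will verify the hypotheses of Lemma \ref{lem:approx-by-endpoints} with $D = \sqrt{\varepsilon v}\cdot 2$, in four steps.
\begin{itemize}
\item[(i)] \emph{Each point is within $2\sqrt v$ of $C$.} Consider the greedy construction of $C$. Any $w_i$ not added to $C$ was within $2\sqrt v$ of some $c\in C$. Its nearest center $c$ therefore satisfies $|w_i - c|\le 2\sqrt v$. Thus $I_c \subseteq [c-2\sqrt v, c+2\sqrt v]$, and so $D_c \le 4\sqrt v$.
\item[(ii)] \emph{The subintervals cover $I_c$ and are short.} The subintervals $I_c^j$, for $j \le \lceil 2/\sqrt\varepsilon\rceil$, cover $[a_c,b_c]$, because $\lceil 2/\sqrt\varepsilon\rceil \cdot \sqrt\varepsilon/2 \ge 1$. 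Each has length $D_c\sqrt\varepsilon/2 \le 2\sqrt{\varepsilon v}$.
\item[(iii)] \emph{The shrunken intervals still cover $w$.} Replace each nonempty $I_c^j$ by the shrunken interval $[\ell_c^j, r_c^j] \subseteq I_c^j$, and discard the empty ones. Every $w_i$ lies in some $I_c$ and hence in some $I_c^j$, so it lies in the corresponding $[\ell_c^j,r_c^j]$. These intervals therefore cover $w$, each has length at most $D := 2\sqrt{\varepsilon v}$, and their endpoints are exactly the vector $x$.
\item[(iv)] \emph{Apply the coreset lemma.} Lemma \ref{lem:approx-by-endpoints} now gives $\objo(w,Q) \le \objo(x,Q) + D^2/4 \le v + \varepsilon v$.
\end{itemize}

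The main point requiring care is step (iii). One must check that the intervals handed to Lemma \ref{lem:approx-by-endpoints} really have their endpoints among the entries of $x$. Using the raw subintervals $I_c^j$ would fail, since their endpoints need not be in $w$. One must also check that every coordinate is covered despite boundary effects: ties in the $\argmin$ should be broken arbitrarily but consistently, and a point lying on a boundary shared by two subintervals should be assigned to one of them. With the shrunken intervals both issues are resolved, and the rest is the bookkeeping above.
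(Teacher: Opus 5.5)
Your proof is correct and takes the same route as the paper. The size bound comes from $x \subseteq w$, which gives $\objos(x,v)\le\objos(w,v)$, and the quality bound applies Lemma~\ref{lem:approx-by-endpoints} with $D = 2\sqrt{\varepsilon v}$ using $D_c \le 4\sqrt{v}$. Passing the shrunken intervals $[\ell_c^j, r_c^j]$ to that lemma is a genuine tightening, since the paper loosely calls $x$ the endpoints of the $I_c^j$ themselves, which it is not.
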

\begin{proof}
    We first show $|Q|\leq \objos(w,v)$.
    By the analysis of the exact algorithm, we have $|Q|\leq \objos(x,v)$.
    Furthermore, by construction, $x\subseteq w$, and so $|Q|\leq \objos(x,v)\leq \objos(w,v)$ as desired.

    By Lemma \ref{lem:improved-check}, $\objo(x,Q)\leq v$.
    $x$ consists of the endpoints of the set of intervals $\{I_c^{j}\}$, which cover $w$ by construction.
    Moreover, each interval $I_c^{j}$ has length at most $\sqrt{\varepsilon} D_c/2 \leq 2\sqrt{\varepsilon v}$, as $D_c \leq 4\sqrt{v}$ by the construction of $C$.

    So applying Lemma \ref{lem:approx-by-endpoints}, we have that
    \[
        \objo(w,Q) \leq v + \frac{1}{4}\cdot (2\sqrt{\varepsilon v})^2 = (1+\varepsilon)v
    \]
    as desired.
\end{proof}

\begin{lemma}
    \label{lem:ptas-runtime}
    Given $w,v,\varepsilon$, let $s=\objos(w,v)$.
    Then, the runtime of Algorithm \ref{alg:ptas-objos} is
    \[
        O\left(d\log(s/\varepsilon)\right).
    \]
\end{lemma}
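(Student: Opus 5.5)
We bound the cost of Algorithm \ref{alg:ptas-objos} phase by phase, with $s=\objos(w,v)$ throughout.

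\textbf{Building $C$.} The greedy construction of $C$ is the same as in Algorithm \ref{alg:obos-bicriteria}. By the proof of Lemma \ref{lem:bicriteria}, $|C|\leq s$. We keep $C$ in a balanced search tree keyed by value. For each $w_i$, we query the predecessor and successor of $w_i$ in $C$. This decides the condition $|w_i-c|>2\sqrt{v}$ for all $c\in C$ in $O(\log s)$ time. An insertion also costs $O(\log s)$. So this phase takes $O(d\log s)$.

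\textbf{Computing the intervals $I_c$.} The same predecessor/successor query returns the nearest $c$ to each $w_i$ in $O(\log s)$ time. A single pass then accumulates $a_c=\min I_c$ and $b_c=\max I_c$ for every $c$. This costs $O(d\log s)$ in total.

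\textbf{Computing the subinterval endpoints.} Each $I_c$ is split into $m:=\lceil 2/\sqrt{\varepsilon}\rceil$ equal-length subintervals. For $w_i$ assigned to $c$, its subinterval index is $j=\lceil (w_i-a_c)/(D_c\sqrt{\varepsilon}/2)\rceil$, clamped to $[1,m]$. This is an $O(1)$ arithmetic computation with a floor operation, which the paper also uses implicitly in the proof of Theorem \ref{thm: obj2-2s-approx}. We keep an array of size $|C|\cdot m=O(s/\sqrt{\varepsilon})$ holding running minima and maxima, so all $\ell_c^j$ and $r_c^j$ are obtained in $O(d+s/\sqrt{\varepsilon})$ time. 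The coreset $x$ therefore has $|x|=O(s/\sqrt{\varepsilon})$.

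\textbf{Sorting $x$ without a full sort.} Algorithm \ref{alg:faster-exact} needs $x$ sorted, and this is where care is required. Sorting $C$ costs $O(s\log s)$. Distinct centers are more than $2\sqrt{v}$ apart and each point goes to its nearest center, so the intervals $[a_c,b_c]$ are disjoint and appear in the same order as the centers. Within one $c$, the subintervals are ordered by $j$, and $\ell_c^j\leq r_c^j$. Reading the buckets in order of $c$, then $j$, and skipping empty buckets, yields $x$ already sorted. The total cost is $O(s\log s+s/\sqrt{\varepsilon})$.

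\textbf{Running Algorithm \ref{alg:faster-exact} on $x$.} By Lemma \ref{lem:improved-check-runtime}, this takes $O(s'\log(|x|/s'))$ time, where $s'=\objos(x,v)\leq s$ because $x\subseteq w$. Since the bound $s'\log(|x|/s')$ is increasing in $s'$ for $s'\leq |x|/e$ and bounded by $O(|x|)$ in general, this phase is $O(s\log(1/\varepsilon)+s/\sqrt{\varepsilon})$. Alternatively, Algorithm \ref{alg:exact-algorithm} runs in $O(|x|)$ on the sorted coreset, which gives the same bound.

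\textbf{Total.} Summing the phases gives $O(d\log s+s\log s+s/\sqrt{\varepsilon})$. Because $s\leq d$, we have $s\log s\leq d\log s$. The term $s/\sqrt{\varepsilon}$ is at most $d\log(s/\varepsilon)$ whenever $s/\sqrt{\varepsilon}\lesssim d$. If instead $s/\sqrt{\varepsilon}> d$, we sort $w$ directly in $O(d\log d)=O(d\log(s/\varepsilon))$ time and run the exact algorithm, which matches the bound in that regime. The main obstacle is the sortedness step: the bound holds only if the structure of the buckets avoids any $\log|x|$ or $\log d$ factor.
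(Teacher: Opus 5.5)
Your phase-by-phase accounting for building $C$ (predecessor/successor queries in a balanced tree), assigning points to centers, and bucketing points into subintervals is fine, and more explicit than the paper. You diverge from the paper in how $x$ gets sorted, and the ``main obstacle'' you name there is not real. The paper simply sorts $x$. It observes that $|x|\le d$ (since $x\subseteq w$) while $|x|=O(s/\sqrt{\varepsilon})$, so
\[
|x|\log|x| \le d\cdot\log\bigl(O(s/\sqrt{\varepsilon})\bigr)=O(d\log(s/\varepsilon)).
\]
A $\log|x|$ factor is harmless; only a bare $\log d$ factor would hurt. Your bucket-order reading of $x$ is correct: nearest-center cells on the line are ordered intervals, and subintervals within a cell are ordered. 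But it scans all $|C|\lceil 2/\sqrt{\varepsilon}\rceil$ buckets, and that additive $s/\sqrt{\varepsilon}$ term is what forces your final case split. In the regime $s/\sqrt{\varepsilon}>d$ you stop analyzing Algorithm~\ref{alg:ptas-objos} and substitute a different procedure (sort $w$, run the exact algorithm). That procedure meets both the quality guarantee and the time bound. However, it is not the algorithm the lemma is about, so as written your argument does not establish the statement in that regime.

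The $s/\sqrt{\varepsilon}$ cost you ran into is real, and the paper's proof passes over it. Explicitly creating every $I_c^j$, or initializing an array over all of them, takes time proportional to $|C|\lceil 2/\sqrt{\varepsilon}\rceil$. That need not be $O(d\log(s/\varepsilon))$ when $\varepsilon$ is tiny relative to $d$. The paper's claim that all $\ell_c^j,r_c^j$ are found in $O(d\log s)$ time implicitly assumes that only nonempty subintervals are ever touched.

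The clean repair keeps the algorithm intact: touch only nonempty subintervals, of which there are at most $\min(d,|C|\lceil 2/\sqrt{\varepsilon}\rceil)$. For a cluster with $|I_c|\ge\lceil 2/\sqrt{\varepsilon}\rceil$, use an array of that size at cost $O(|I_c|)$. Otherwise, sort $I_c$ directly in $O(|I_c|\log(1/\varepsilon))$ time and read off the bucket extremes. This produces $x$, already in sorted order, in $O(d\log(s/\varepsilon))$ total time. After that, either the paper's $|x|\log|x|$ bound or your linear pass finishes the proof with no case split.
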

\begin{proof}
    Let $C, I_c$ for each $c\in C$ be defined as in Algorithm \ref{alg:ptas-objos}.
    By Lemma \ref{lem:bicriteria}, $|C| \leq s$ and takes time $O(d\log s)$ to construct.
    Similarly, computing all $I_c$ can be done in time $O(d\log s)$ by iterating through the $d$ points of $w$ and, for each, using binary search to find the closest $c\in C$.

    To compute $a_c,b_c$, we iterate through each of the $d$ elements of $w$ and maintain the min and max elements of $I_c$ encountered so far, for all $c\in C$.
    Checking which $I_c$ a given $w_i$ lies in takes time $O(\log s)$; updating the stored min/max if needed for $I_c$ can then be done in $O(1)$ time.
    Similarly, we can compute all $O(s/\sqrt{\varepsilon})$ values $\ell_c^{j},r_c^{j}$, and thus the vector $x$ in Line \ref{line:coreset-constr}, in time $O(d\log s)$ (since the subintervals $I^{j}_c$ are equally spaced in $I_c$, for each $y\in I_c$, finding the subinterval in which it lies takes $O(1)$ time).

    Finally, sorting and running Algorithm \ref{alg:faster-exact} takes time $O(|x|\log(|x|))$.
    Note that $|x|\leq d$, since it is a subset of $w$, and also $|x|= O(s/\sqrt{\varepsilon})$, by construction.
    So, $O(|x|\log|x|)= O(|x|\log(s/\sqrt{\varepsilon}))=O(d\log(s/\varepsilon))$.

    Thus the total time is $O(d\log(s/\varepsilon))$.
\end{proof}

\subsection{Main results for \texorpdfstring{$\objo$}{MDV}}
In this section, we give algorithms for constructing a quantization set of size $s$ with maximum variance approximately $\objo(w,s)$; in other words, approximation algorithms for $\objo(w,s)$.

Combined with algorithms from the previous section (i.e.~approximation algorithms for $\objos$), we are able to give algorithms with matching runtime of $O(d\log(s/\varepsilon))$.

\MDVApproxAlg

\subsection{\texorpdfstring{A 4-approximation to $\objo$}{}}
An important subroutine of our algorithm for Theorem \ref{thm:obj1-full-algo} is a fast 4-approximation, which we will run on a small coreset $u\in \mathbb{R}^{k}$, that gives a rough (i.e.~$O(1)$-approximate) estimate of the true optimal cost $\objo(u,s)$. In this subsection, we prove the following lemma.\footnote{The algorithm is described as a Las Vegas algorithm, i.e.~the runtime is probabilistic while the quality guarantee is deterministic.  This can be easily converted to a Monte Carlo algorithm with deterministic runtime and probabilistic quality by simply terminating the algorithm if it runs for too many steps.}.
\begin{lemma}
    \label{lem:4-approx}
    For any $u\in \mathbb{R}^{k}$, $s\in \mathbb{N}$ and $\delta\in(0,1)$, there exists an algorithm which outputs a 4-approximation to the value $\objo(u,s)$.
    Moreover, with probability at least $1-\delta$, the algorithm runs in time
    \[O(k\log k + \sqrt{k}\cdot \log^{3} k \cdot \log(1/\delta)).\]
\end{lemma}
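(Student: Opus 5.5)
We first sort $u$ in $O(k\log k)$ time, so that Algorithm \ref{alg:faster-exact} (which needs sorted input) is available. The plan is a discretized search for $v^\ast := \objo(u,s)$. We restrict the candidate values to a structured family $\mathcal{V}$ that contains, for every $v^\ast$, some $\tilde v$ with $v^\ast \le \tilde v \le 4v^\ast$. Then we binary search over $\mathcal{V}$, using the feasibility oracle $v \mapsto \objos(u,v)\le s$. By Lemmas \ref{lem:improved-check} and \ref{lem:improved-check-runtime}, one oracle call costs $O(s\log(k/s))$ once $u$ is sorted. We may assume $s\le k$, since otherwise the answer is $0$. The natural family uses the structure from the proof of Lemma \ref{lem:bicriteria}. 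In an optimal solution, each $u_i$ is within $\sqrt{v^\ast}$ of a quantization point. Moreover, the greedy step of Algorithm \ref{alg:faster-exact} places every new point at a location determined by the previous point $q$ and one of two consecutive data points $u_{i-1},u_i$ with $u_i \ge q+\sqrt v$. Iterating this, every breakpoint of the feasibility function is a squared half-gap. The candidate values are therefore $\frac14 (u_j-u_i)^2$ for pairs $i<j$, up to the factor-4 slack. Concretely, let $v=\frac14(u_j-u_i)^2$ where $[u_i,u_j]$ is the first gap of width $>2\sqrt{v^\ast}$. Then $v^\ast$ lies between such a value and $4$ times it, by the same $xy\le (x+y)^2/4$ argument used in Lemma \ref{lem:bicriteria}. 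The first step is to prove this sandwich: there is a pair $(i,j)$ with $v^\ast \le (u_j-u_i)^2/4 \cdot c \le 4v^\ast$ for a fixed constant $c$.

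The candidate set is the sorted matrix $D[i,j]=u_j-u_i$ (by Definition \ref{def: sorted-matrix}, up to reflection). It has $k^2$ entries, so explicit enumeration is too expensive. Calling the deterministic \texttt{Sorted-Selection} of Lemma \ref{lem: k-selection} costs $O(k)$ per probe, which already gives $O(k\log k)$ total. That is within the stated bound, because $\log k$ probes times $O(k)$ equals $O(k\log k)$. To match the stated second term, and to explain the randomization and the $\delta$ dependence, we will instead do randomized selection. Sample $O(\sqrt{k}\log k\cdot\log(1/\delta))$ entries of $D$ uniformly and sort them, costing $O(\sqrt k \log^2 k\log(1/\delta))$. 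Binary search over the sample with the feasibility oracle, then shrink the remaining rank window. Each round reduces the number of candidate entries between consecutive feasible and infeasible sample points by a polynomial factor with probability $1-\delta/\log k$. The rank window can be counted in $O(k)$ time per round using a two-pointer sweep over the sorted rows. This is standard Frederickson–Johnson-style randomized search in sorted matrices. A union bound over the $O(\log k)$ rounds gives the failure probability $\delta$. Per round, the oracle calls cost $O(s\log k)\cdot O(\log k)$, and $s\le k$. We will accept the $\sqrt{k}\log^3 k$ term as an upper bound on the sampling and sorting work across rounds.

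The main obstacle is the sandwich claim: showing that some squared-gap candidate brackets $v^\ast$ within factor 4. Our first attempt is as follows. Take the smallest candidate $\tilde v$ of the form $(u_j-u_i)^2/4$ with $\tilde v\ge v^\ast$, and show $\objos(u,4\tilde v)\le s$ by mimicking Lemma \ref{lem:bicriteria}: cover by intervals centered at data points with half-width $2\sqrt{\tilde v}$. The risk is that Lemma \ref{lem:bicriteria} doubles the set size, so we may need a direct argument. For that argument, scale every point of an optimal $Q^\ast$ to the nearest data-gap midpoint, and check that the variance grows by at most a factor of 4. The final output is the feasible candidate value found, and its quality guarantee is deterministic, as the footnote requires.
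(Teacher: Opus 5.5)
\textbf{The candidate family is wrong.} Your plan depends on the sandwich claim that some value $c\,(u_j-u_i)^2/4$ lies in $[v^\ast,4v^\ast]$. This claim is false for every fixed constant $c$. Take $u=(0,\epsilon,L)$ with $s=2<k=3$. The optimum is $Q=\{0,L\}$, so $v^\ast=\objo(u,2)=\epsilon(L-\epsilon)$. Your candidates are $\epsilon^2/4$, $(L-\epsilon)^2/4$ and $L^2/4$. To bracket $v^\ast$ you need either $c\ge 4(L-\epsilon)/\epsilon$ (using $\epsilon^2/4$) or $c\le 16\epsilon/(L-\epsilon)$ (using either of the other two). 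As $\epsilon/L\to 0$, no constant satisfies either, and your binary search returns $c(L-\epsilon)^2/4$, which is off by a factor $\Theta(L/\epsilon)$.

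The root error is the assertion that the breakpoints of $v\mapsto\objos(u,v)$ are squared half-gaps; here the only breakpoint is exactly $\epsilon(L-\epsilon)$. A variance $(u_h-q^{\downarrow})(q^{\uparrow}-u_h)$ is a product of two \emph{different} distances. The bound $xy\le (x+y)^2/4$ is only an upper bound, and it is arbitrarily loose when $u_h$ sits near one end of its gap. Your fallback of snapping points of $Q^\ast$ to gap midpoints fails too. Moving $q$ from just above $u_j$ to $(u_j+u_{j+1})/2$ can inflate $u_j$'s variance without bound, and the resulting objective values are still products, not squares.

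\textbf{The missing idea} is Lemma \ref{lem:approx-by-input}. Snap each point of an optimal $Q^\ast$ to its nearest \emph{data point}. This at most doubles each one-sided distance $u_i-u_i^{\downarrow}$ and $u_i^{\uparrow}-u_i$, so it at most quadruples every variance. For $Q'\subseteq u$, Lemma \ref{lem:value-to-cij} gives that the objective is an entry of the sorted matrix $C_u(i,j)=\max_{i\le h\le j}(u_h-u_i)(u_j-u_h)$. Hence the smallest entry $v$ of $C_u$ with $\objos(u,v)\le s$ lies in $[v^\ast,4v^\ast]$, and these products are the correct search space.

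\textbf{Runtime.} Entries of $C_u$ cost $O(\log k)$ each, since you must binary search for the point nearest the midpoint. So $O(\log k)$ deterministic \texttt{Sorted-Selection} calls would cost $O(k\log^2 k)$, and your remark that deterministic selection already fits the bound relies on your $O(1)$-access matrix of gaps. The paper avoids this in three steps:
\begin{itemize}
\item It runs two phases of sampled-median narrowing, using only $\tilde O(\sqrt{k})$ probes, to cut the candidate count to $O(k^{3/2})$ and then to $O(k)$.
\item It enumerates the survivors with the staircase lemma.
\item It binary searches over them with Algorithm \ref{alg:faster-exact}.
\end{itemize}
Your own accounting also does not close: $O(\log k)$ rounds, each making $O(\log k)$ oracle calls of cost up to $O(k)$, gives $O(k\log^2 k)$. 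It is also inconsistent with your claim that each round shrinks the window by a polynomial factor.
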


Our first observation is that, in order to obtain a 4-approximation, it suffices to consider quantization sets which only include points from the input vector.
\begin{lemma}
    \label{lem:approx-by-input}
    Consider any $u\in \mathbb{R}^{k}$ and $Q\subset \mathbb{R}$.
    There exists a $Q'\subset \{u_i\}_{i\in [k]}$ such that $|Q'|=|Q|$ and $\objo(u,Q')\leq 4\objo(u,Q)$.
\end{lemma}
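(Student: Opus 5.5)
The plan is to snap every point of $Q$ to its nearest input value, and to check that each input value's two neighbouring quantization points move at most twice as far away as before. Write $v=\objo(u,Q)$. We may assume $Q$ contains a point $\le \min_i u_i$ and a point $\ge \max_i u_i$, since otherwise $\dasq(u,Q)$ is undefined.

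Define $\sigma:\R\to\{u_i\}_{i\in[k]}$ by sending $q$ to a nearest element of $\{u_i\}$, breaking ties toward the smaller value. This makes $\sigma$ nondecreasing. Set $Q'':=\sigma(Q)$. Then $|Q''|\le|Q|$, and $\sigma$ preserves order.

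The endpoints are handled as follows. The smallest point of $Q$ lies at or below $\min_i u_i$, so its nearest input value is $\min_i u_i$, and symmetrically for the largest point. Hence $Q''$ contains $\min_i u_i$ and $\max_i u_i$, and every $u_i$ has quantization neighbours in $Q''$ on both sides.

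The main step is to fix $u_i$ and let $q^{\ardo}\le u_i\le q^{\arup}$ be its neighbours in $Q$. Write $a=u_i-q^{\ardo}$ and $b=q^{\arup}-u_i$, so that $ab=\VarASQ(u_i,Q)\le v$. Since $u_i$ itself is at distance $a$ from $q^{\ardo}$, we get $|\sigma(q^{\ardo})-q^{\ardo}|\le a$. Therefore
\[
u_i-2a\le\sigma(q^{\ardo})\le u_i ,
\]
and symmetrically $u_i\le\sigma(q^{\arup})\le u_i+2b$. Both $\sigma(q^{\ardo})$ and $\sigma(q^{\arup})$ lie in $Q''$ on the correct sides of $u_i$. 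So the lower neighbour of $u_i$ in $Q''$ is at distance at most $2a$ and the upper neighbour at distance at most $2b$. This gives
\[
\VarASQ(u_i,Q'')\le (2a)(2b)=4ab\le 4v .
\]
If $\sigma(q^{\ardo})=u_i$ or $\sigma(q^{\arup})=u_i$, then $u_i\in Q''$ and its variance is $0$, which is consistent with the bound.

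Finally, if $|Q''|<|Q|$ we pad it with additional distinct input values to obtain $Q'$ with $|Q'|=|Q|$. Adding quantization points only moves each element's neighbours closer, so $\VarASQ(u_i,\cdot)$ cannot increase. If there are not enough distinct input values to pad, then $Q''$ already contains every input value, has cost $0$, and the size requirement should be read as $|Q'|\le|Q|$. Taking the maximum over $i$ gives $\objo(u,Q')\le 4\objo(u,Q)$.

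I expect the main obstacle to be the ordering and tie issues in the snapping step: making sure snapped points neither cross nor land on the wrong side of $u_i$. The argument handles this through monotonicity of $\sigma$ and the observation that $u_i$ is itself a candidate for the nearest input value, which confines $\sigma(q^{\ardo})$ to $[u_i-2a,u_i]$.
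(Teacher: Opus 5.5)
Your proposal is correct and follows essentially the same route as the paper: snap each point of $Q$ to a nearest entry of $u$, then show that for every $u_i$ each one-sided neighbour distance at most doubles, which gives the factor $4$. Your bound $|\sigma(q^{\downarrow})-q^{\downarrow}|\le a$ (valid because $u_i$ is itself a candidate) is a cleaner replacement for the paper's two-case analysis. Your padding step also correctly handles points of $Q$ that snap to the same value, which the paper's claim that $|Q'|=|Q|$ holds ``by construction'' glosses over.
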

\begin{proof}
    Reindex so that $u_1 \leq u_2 \leq \ldots \leq u_k$.
    Construct $Q'$ to be $Q$, but with each element $q\in Q$ replaced by the nearest element of $\{u_i\}_{i\in [k]}$ to $q$ (breaking ties arbitrarily).
    By construction, $|Q'|=|Q|$.

    Consider some $i$ such that $u_i^{\ardo}(Q)\not\in Q'$, and let $q'=u_j$ be the nearest element of $\{u_i\}_{i\in [k]}$ to $u_i^{\ardo}(Q)$ (so, $q'\in Q'$).
    Since, by construction, there are no elements of $\{u_i\}$ in $(u_i^{\ardo}(Q),q')$, it follows that $q' \leq u_i$.

    First, suppose $q' > u_i^{\ardo}(Q)$.
    Then, as $u_i\geq q'$, $u_i - u_i^{\ardo}(Q') \leq u_i - q' < u_i - u_i^{\ardo}(Q)$.

    So, now consider the case where $q' < u_i^{\ardo}(Q)$, and let $u_j = q'$.
    Then, $u_i^{\ardo}(Q)$ is closer to $u_j$ than $u_{j+1}$, and so
    \[
        (u_i - q') - (u_i - u_i^{\ardo}(Q)) = |u_i^{\ardo}(Q) - u_j| \leq |u_i^{\ardo}(Q) - u_{j+1}| = u_{j+1} - u_i^{\ardo}(Q)
    \]
    As there are no elements of $\{u_i\}$ in $(u_j,u_{j+1})=(q',u_{j+1})$ by construction, it follows that $u_i \geq u_{j+1}$.
    So, $u_{j+1} - u_i^{\ardo}(Q) \leq u_i - u_i^{\ardo}(Q)$ and it follows from the above inequality that
    \[
        u_i - u_i^{\ardo}(Q') \leq u_i - q' \leq (u_i - u_i^{\ardo}(Q)) + (u_{j+1} - u_i^{\ardo}(Q)) \leq 2(u_i - u_i^{\ardo}(Q)).
    \]

    Therefore, for all $i$, $u_i - u_i^{\ardo}(Q') \leq 2(u_i - u_i^{\ardo}(Q))$.
    An analogous argument shows that for all $i$, $u^{\arup}_i(Q') - u_i \leq 2(u^{\arup}_i(Q) - u_i)$, and so for all $i$,
    \[
        (u_i - u_i^{\ardo}(Q'))(u^{\arup}_i(Q') - u_i) \leq 4(u_i - u_i^{\ardo}(Q))(u^{\arup}_i(Q) - u_i)
    \]
    implying $\objo(u,Q')\leq 4\objo(u,Q)$.
\end{proof}

\subsubsection{Sorted Matrices and Algorithms}
Recall from Definition \ref{def: sorted-matrix} that a sorted matrix $M$ is such that all rows and columns are sorted in non-decreasing order, i.e.~for all $i,j$, $M[i,j] \geq M[i+1, j]$ and $M[i,j] \leq M[i,j+1]$.

Sorted matrices have several useful properties, one of which is that we can efficiently determine all elements of the matrix which lie in a given interval.

\begin{lemma}
    \label{lem:staircase}
    Given any sorted matrix $M$ of dimension $k\times k$ and interval $(v_1, v_2)\subset \mathbb{R}$, there exist indices $\ell_1,\ldots ,\ell_k$ and $r_1,\ldots, r_k$ such that for all $i\in [k]$, $M[i,j]\in (v_1, v_2)$ if and only if $j\in [\ell_i, r_i]$ (or $\ell_i=r_i = \bot$ if no such elements exist).
    Moreover, there is an $O(k)$ time algorithm to compute $\ell_1,\ldots ,\ell_k$ and $r_1,\ldots ,r_k$, which probes $O(k)$ elements of the matrix $M$.

\end{lemma}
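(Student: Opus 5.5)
The plan is to exploit the monotonicity of a sorted matrix, using the convention just restated: entries are non-increasing down each column, $M[i,j]\ge M[i+1,j]$, and non-decreasing along each row, $M[i,j]\le M[i,j+1]$.

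\textbf{Structure within a row.} Fix a row $i$. The sets $\{j : M[i,j] > v_1\}$ and $\{j : M[i,j] < v_2\}$ are respectively a suffix and a prefix of $[k]$. Their intersection, which is the set of $j$ with $M[i,j]\in(v_1,v_2)$, is therefore a contiguous interval $[\ell_i,r_i]$, possibly empty, in which case we set $\ell_i=r_i=\bot$. Concretely we define
\[
\ell_i=\min\{j : M[i,j]>v_1\}, \qquad r_i=\max\{j : M[i,j]<v_2\},
\]
and declare the row empty when $\ell_i>r_i$ or either set is empty. This settles existence.

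\textbf{Structure across rows.} Since $M[i+1,j]\le M[i,j]$, any $j$ with $M[i+1,j]>v_1$ also has $M[i,j]>v_1$. The suffix only shrinks going down, so $\ell_i\le \ell_{i+1}$, with $\ell_i$ taken as $k+1$ when the suffix is empty. Similarly, $M[i,j]<v_2$ implies $M[i+1,j]<v_2$, so the prefix grows and $r_i\le r_{i+1}$, with $r_i$ taken as $0$ when the prefix is empty. Both boundary sequences are thus monotone non-decreasing in $i$.

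\textbf{Algorithm.} This gives the standard two-pointer staircase walk. For the $\ell$ sequence, start with $j=1$ at row $1$. At each row, advance $j$ while $j\le k$ and $M[i,j]\le v_1$, record $\ell_i=j$, and move to row $i+1$ without resetting $j$. Monotonicity of $\ell$ makes this correct. The pointer $j$ only increases, up to $k+1$, and the row index increases $k$ times, so the walk uses $O(k)$ probes. The $r$ sequence is handled the same way: start $j=0$ at row $1$, advance while $j+1\le k$ and $M[i,j+1]<v_2$, and record $r_i=j$. Finally, mark row $i$ as $\bot$ when $\ell_i>r_i$. The total time and probe count is $O(k)$.

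\textbf{Main obstacle.} The only delicate points are bookkeeping. The strict versus non-strict inequalities must match the open interval $(v_1,v_2)$, and the sentinel values $k+1$ and $0$ for empty suffixes and prefixes must be handled correctly. The key thing to get right is the direction of the monotonicity, which comes from the column ordering; once that is fixed, the amortized pointer argument is routine.
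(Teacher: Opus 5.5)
Your proposal is correct and follows essentially the same route as the paper: the same amortized two-pointer staircase walk, where the paper argues correctness by contradiction (a column $j$ skipped while at an earlier row $i'<i$ had $M[i',j]\le v_1$, so column sortedness gives $M[i,j]\le v_1$), which is exactly your monotonicity of the $\ell_i$. Your explicit emptiness test $\ell_i>r_i$, with sentinels $k+1$ and $0$, handles empty rows more cleanly than the paper's final step, which only checks the case $\ell_i=r_i$.
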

\begin{proof}
    We show how to compute the values $\ell_i$; the $r_i$ can be computed analogously by finding the largest element in each row which is strictly less than $v_2$.
    The algorithm is as follows\footnote{Note that if no element of $M$ is strictly greater than $v_1$, then the algorithm will return $\ell_i = k$; this is not a problem, as the final indices will check if $M[i,\ell_i]$ is in $(v_1,v_2)$ and set $\ell_i = r_i = \bot$ if not.}:
    \begin{enumerate}
        \item Initialize $j=1$.
        \item For each $i\in [k]$:
            \begin{enumerate}
                \item While $j\leq k$ and $M[i,j] \leq v_1$, increment $j$.
                \item Set $\ell_i = j$.
            \end{enumerate}
    \end{enumerate}
    Note that the while loop increments $j$ at most $k$ times over the entire algorithm, so the total time is $O(k)$ (and only $O(k)$ elements of $M$ are probed).

    For correctness, fix some row $i$.
    By construction of the algorithm and the sortedness of the matrix, it follows that $M[i,\ell_i] > v_1$, and thus for all $j \geq \ell_i$, $M[i,j] > v_1$ as well.
    Suppose for contradiction that there exists some $j < \ell_i$ such that $M[i,j] > v_1$.
    By construction of the algorithm, we then must have had $M[i',j] \leq v_1$ for some row $i' < i$; however, by sortedness of $M$, we have $M[i',j] \geq M[i,j] > v_1$, a contradiction.

    Finally, for any $i$ where $\ell_i = r_i$ but $M[i,\ell_i] \not \in (v_1,v_2)$, set $\ell_i = r_i = \bot$.
\end{proof}

Given such $\ell_1,\ldots ,\ell_k$ and $r_1,\ldots ,r_k$, we can also efficiently sample a uniformly random element of $M$ which lies in $(v_1, v_2)$.
\begin{lemma}
    \label{lem:staircase-sample}
    Let $M$ be a sorted matrix of dimension $k\times k$, and let $\ell_1,\ldots ,\ell_k$ and $r_1,\ldots ,r_k$ be as in Lemma \ref{lem:staircase} for some interval $(v_1, v_2)$.
    With $O(k)$ preprocessing, there is a data structure which can output a uniformly random element of $M$ which lies in $(v_1, v_2)$ in time $O(\log k)$ and probes only 1 element of $M$.
\end{lemma}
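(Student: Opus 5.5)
The plan is to reduce sampling to a prefix-sum lookup over row counts, using the staircase indices from Lemma \ref{lem:staircase}. For each row $i$, set $c_i := r_i - \ell_i + 1$ if $\ell_i \neq \bot$ and $c_i := 0$ otherwise. By Lemma \ref{lem:staircase}, $c_i$ is exactly the number of entries of row $i$ lying in $(v_1,v_2)$, and these entries occupy the contiguous column range $[\ell_i, r_i]$. In the $O(k)$ preprocessing phase I compute the prefix sums $S_0 := 0$ and $S_i := S_{i-1} + c_i$ for $i \in [k]$, store them in an array, and let $N := S_k$ be the total number of entries of $M$ in the interval. This step reads only the stored indices and probes no entries of $M$.

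To sample, I draw an integer $t$ uniformly from $[N]$; if $N = 0$ the data structure reports that no such element exists. Since $S$ is non-decreasing, binary search finds in $O(\log k)$ time the unique row $i$ with $S_{i-1} < t \leq S_i$. I then output $M[i, \ell_i + (t - S_{i-1}) - 1]$, which is the only probe of $M$. Correctness is immediate: the map $t \mapsto (i, \ell_i + t - S_{i-1} - 1)$ is a bijection from $[N]$ onto the set of positions $(i,j)$ with $M[i,j] \in (v_1,v_2)$, because rows are laid out consecutively and each row contributes exactly its contiguous block $[\ell_i, r_i]$. A uniform $t$ therefore yields a uniform position. I read ``uniformly random element'' as uniform over positions, i.e., over entries counted with multiplicity.

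There is no real obstacle here. The only points needing care are the bookkeeping for rows with $\ell_i = \bot$, which have $c_i = 0$ and are skipped automatically since $S_{i-1} = S_i$ for them, and the assumption that a uniform integer in $[N]$ with $N \leq k^2$ can be generated in $O(1)$ time, which is the standard word-RAM assumption.
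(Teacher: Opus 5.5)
Your proof is correct and follows the paper's argument: prefix sums of the per-row counts $r_i - \ell_i + 1$ computed in $O(k)$ time, a uniform integer in $[N]$, binary search over the prefix sums to locate the row, and a single probe at the matching offset within $[\ell_i, r_i]$. Your exclusive prefix sums (row $i$ with $S_{i-1} < t \le S_i$, offset $t - S_{i-1} - 1$) get the indexing right, whereas the paper's version (largest $i$ with inclusive sum $C_i \le h$, offset $h - C_i$) lands one row off.
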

\begin{proof}
    The data structure is as follows.
    At construction time, first let $S=\{i_1,\ldots ,i_m\}$ be the (sorted) list of $i$ such that $\ell_i \neq \bot$; i.e.~the rows for which there exists some element of $M$ in $(v_1, v_2)$.
    For each $j\in [m]$, compute $B_j = (r_{i_j} - \ell_{i_j})+1$ and $C_{i_j} = \sum_{h=1}^{j}B_h$.
    This preprocessing takes $O(k)$ time by first removing the rows for which $\ell_i = r_i = \bot$ and then computing $C_{i_j}$ as prefix sums.

    To sample a uniformly random element of $M$ in $(v_1, v_2)$, first sample a uniformly random integer $\bh$ in $[C_{i_m}]$.
    Then, use binary search to find the largest element $i$ of $S$ with $C_i \leq \bh$, and return $M[i, \ell_i + (\bh - C_i)]$.

    Since all elements of $M$ in $(v_1, v_2)$ are exactly those in the intervals $[\ell_i, r_i]$ for $i\in S$, there are $C_{i_m}$ such elements and each is returned with probability $1/C_{i_m}$.
    The runtime follows from the fact that the binary search takes time $O(\log |S|) = O(\log k)$.
\end{proof}

\subsubsection{The algorithm}
\textbf{Definition ($C_u(i,j)$).} Given a sorted vector $u$ and $i\leq j$, define $C_u(i,j)$ to be the max variance of $u_h$ for $h\in [i,j]$ when quantized to the endpoints $u_i,u_j$; that is, $C_u(i,j) = \max_{i\leq h\leq j} (u_h - u_i)(u_j - u_h)$.
\bigbreak
\begin{observation}
Let $C_u$ be the matrix such that $C_u[i,j] = C_u(i,j)$ for $i \leq j$ and $C_u[i,j] = 0$ otherwise.
Then, $C_u$ is a sorted matrix.
\end{observation}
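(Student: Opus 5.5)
The plan is to verify the two defining inequalities of a sorted matrix (Definition \ref{def: sorted-matrix}) directly from the definition of $C_u$. Here $u$ is sorted, $u_1 \leq \cdots \leq u_k$. The rows must be non-decreasing, $C_u[i,j] \leq C_u[i,j+1]$, and the columns non-increasing, $C_u[i,j] \geq C_u[i+1,j]$. First I would record two basic facts. For $i \leq h \leq j$ both factors $u_h - u_i$ and $u_j - u_h$ are nonnegative, so every term in the max is $\geq 0$ and hence $C_u(i,j) \geq 0$. Also $C_u(i,i) = 0$, since the only term has $h = i$.

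\textbf{Upper triangle.} For the row inequality with $i \leq j < k$, compare the maxima termwise. For each $h \in [i,j]$ we have $(u_h - u_i)(u_{j+1} - u_h) \geq (u_h - u_i)(u_j - u_h)$, because $u_h - u_i \geq 0$ and $u_{j+1} \geq u_j$. Moreover, the max defining $C_u(i,j+1)$ ranges over the superset $[i,j+1] \supseteq [i,j]$. Together these give $C_u(i,j+1) \geq C_u(i,j)$. The column inequality for $i < i+1 \leq j$ is symmetric. For $h \in [i+1,j]$, $(u_h - u_i)(u_j - u_h) \geq (u_h - u_{i+1})(u_j - u_h)$, since $u_j - u_h \geq 0$ and $u_i \leq u_{i+1}$. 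The range $[i,j]$ also contains $[i+1,j]$, so $C_u(i,j) \geq C_u(i+1,j)$.

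\textbf{Boundary with the zero-padded lower triangle.} In row $i$, the entries for $j < i$ are $0$, the entry at $j = i$ is $0$, and from there on the entries are nonnegative and non-decreasing by the previous step, so the whole row is non-decreasing. In column $j$, the entries for $i \leq j$ are nonnegative and non-increasing in $i$, ending at $C_u(j,j) = 0$, and the entries for $i > j$ are $0$, so the whole column is non-increasing.

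No step is a real obstacle. The only point needing care is this boundary bookkeeping at the diagonal, and it is resolved by the nonnegativity of the entries together with $C_u(i,i) = 0$.
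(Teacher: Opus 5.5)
Your proof is correct and takes the same route as the paper, which simply asserts that $C_u(i,j)$ is non-decreasing in $j$ and non-increasing in $i$; you supply the termwise comparison and range-inclusion argument behind that assertion. Your extra check at the diagonal, using $C_u(i,i)=0$ and nonnegativity to handle the zero-padded lower triangle, fills in a step the paper leaves implicit.
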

\begin{proof}
    First, note that for any fixed $i$, $C_u(i,j)$ is non-decreasing in $j$; similarly, for any fixed $j$, $C_u(i,j)$ is non-increasing in $i$.
    Thus, $C_u[i,j] \leq C_u[i,j+1]$ and $C_u[i,j] \leq C_u[i-1,j]$.
\end{proof}

\begin{observation}
    When $u$ is sorted, $C_u(i,j)$ can be computed in $O(\log k)$ time.
\end{observation}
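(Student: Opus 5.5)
We need $C_u(i,j)=\max_{i\le h\le j}(u_h-u_i)(u_j-u_h)$ in $O(\log k)$ time for sorted $u$. The plan is to exploit that, for fixed endpoints $u_i\le u_j$, the function $f(z)=(z-u_i)(u_j-z)$ is a concave parabola in $z$. It is symmetric about the midpoint $m=(u_i+u_j)/2$, so it is nondecreasing on $[u_i,m]$ and nonincreasing on $[m,u_j]$. Hence, over the sorted points $u_i\le u_{i+1}\le\cdots\le u_j$, the sequence $h\mapsto f(u_h)$ is unimodal. Its maximum is attained at one of the (at most two) points of $u$ adjacent to $m$. This is the same observation used in the proof of Lemma~\ref{lem:improved-check}.

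The algorithm is as follows.
\begin{enumerate}
\item Compute $m$.
\item Binary search in the sorted array $u[i..j]$ for the smallest index $h^\ast\in[i,j]$ with $u_{h^\ast}\ge m$. Such an index exists since $u_j\ge m$.
\item Return $\max\{f(u_{h^\ast}),f(u_{h^\ast-1})\}$, where the second term is dropped if $h^\ast=i$.
\end{enumerate}
The binary search costs $O(\log(j-i+1))=O(\log k)$, and the remaining work is $O(1)$ arithmetic.

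For correctness, every $h<h^\ast-1$ has $u_h\le u_{h^\ast-1}<m$, so $f(u_h)\le f(u_{h^\ast-1})$ by monotonicity of $f$ on $(-\infty,m]$. Symmetrically, every $h>h^\ast$ has $u_h\ge u_{h^\ast}\ge m$, so $f(u_h)\le f(u_{h^\ast})$. Hence the maximum over $h\in[i,j]$ equals the returned value.

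The degenerate cases $i=j$ (value $0$) and repeated values pose no difficulty, because ties do not affect the monotonicity argument. The only step needing real care, and it is mild, is handling the boundary of the binary search correctly.
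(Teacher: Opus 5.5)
Your proposal is correct and follows essentially the same approach as the paper: it uses the concavity and symmetry of $(z-u_i)(u_j-z)$ about the midpoint, binary searches for the element of $u$ nearest the midpoint, and evaluates in $O(1)$. Comparing the two neighbours $u_{h^\ast-1}$ and $u_{h^\ast}$ is equivalent to the paper's choice of the point closest to the midpoint, since $(z-u_i)(u_j-z)=(u_j-u_i)^2/4-(z-m)^2$, and your write-up handles the boundary cases a bit more explicitly.
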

\begin{proof}
    Note that $C_u(i,j)$ is maximized at the point $u_h$ closest to the midpoint of $[u_i,u_j]$: $(x-u_i)(u_j - x)$ is increasing for $x\leq (u_i + u_j)/2$ and decreasing for $x\geq (u_i + u_j)/2$, and is symmetric about the midpoint.
    Thus, we can binary search for the $u_h$ closest to the midpoint $(u_i + u_j)/2$ in time $O(\log k)$, and compute $C_u(i,j)$ in $O(1)$ time given $u_h$.
\end{proof}

\begin{lemma}
    \label{lem:value-to-cij}
    For any sorted $u\in \mathbb{R}^{k}$ and $Q\subseteq u$, there exists $i,j$ such that $\objo(u,Q) = C_u(i,j)$.
\end{lemma}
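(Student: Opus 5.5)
Since $Q\subseteq u$, write $Q=\{u_{i_1},\ldots,u_{i_m}\}$ with $i_1<\cdots<i_m$, sorted so that $u_{i_1}\le\cdots\le u_{i_m}$. The plan is to express $\objo(u,Q)$ as a maximum over the gaps of $Q$. Each gap will be identified with one entry $C_u(i_t,i_{t+1})$, and the outer maximum is then attained by a single such entry. We may assume, as elsewhere in this section, that $u_1,u_k\in Q$ (or at least $\min Q\le u_1$ and $\max Q\ge u_k$), since otherwise $\dasq$ is undefined. Because $Q\subseteq u$, this forces $i_1=1$ and $i_m=k$.

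First, recall from the preliminaries that $\objo(u,Q)=\max_{h\in[k]}\varasq(u_h,Q)$. For each $h$ there is some $t$ with $u_{i_t}\le u_h\le u_{i_{t+1}}$. If $u_h\in Q$, then $\varasq(u_h,Q)=0$. Otherwise $u_h^{\ardo}(Q)=u_{i_t}$ and $u_h^{\arup}(Q)=u_{i_{t+1}}$, where we take $t$ to be the largest index with $u_{i_t}\le u_h$. In that case $\varasq(u_h,Q)=(u_h-u_{i_t})(u_{i_{t+1}}-u_h)$.

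Second, for each fixed $t$, the set of $h$ with $u_h\in[u_{i_t},u_{i_{t+1}}]$ is exactly the index range $[i_t,i_{t+1}]$, because $u$ is sorted. Ties in values are harmless: repeated values lying in $Q$ contribute $0$, and so does the formula. So the maximum of the variances over this block is $\max_{i_t\le h\le i_{t+1}}(u_h-u_{i_t})(u_{i_{t+1}}-u_h)=C_u(i_t,i_{t+1})$. Here the endpoints $h=i_t$ and $h=i_{t+1}$ contribute $0$, consistent with their true variance.

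Third, since the blocks cover $[k]$, we get $\objo(u,Q)=\max_{t\in[m-1]}C_u(i_t,i_{t+1})$. Choosing $t^\ast$ attaining this maximum gives $i=i_{t^\ast}$ and $j=i_{t^\ast+1}$. The degenerate case $|Q|=1$ (all $u_h$ equal) gives value $0=C_u(1,1)$.

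The only delicate point is the bookkeeping with duplicate values in $u$ and the choice of index for $u_h^{\ardo}$ and $u_h^{\arup}$. A variance computed through a different representative index could in principle differ, but both representatives give the same value, or $0$ when $u_h\in Q$. So this is routine rather than a real obstacle.
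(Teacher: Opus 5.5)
Your proposal is correct and follows the same route as the paper. Both write $\objo(u,Q)=\max_h \varasq(u_h,Q)$, split the indices into blocks between consecutive quantization points $u_{i_t},u_{i_{t+1}}$, identify each block's maximum with $C_u(i_t,i_{t+1})$, and pick the gap that attains the outer maximum. The paper's one-line ``by definition'' step leaves implicit the points you check explicitly: that $\min Q=u_1$ and $\max Q=u_k$ so the blocks cover $[k]$, that duplicate values contribute $0$, and the degenerate case $|Q|=1$.
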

\begin{proof}
    Let $Q=\{q_1 < q_2 < \ldots < q_s\}$, and let $\beta_i$ be the index such that $q_i = u_{\beta_i}$.
    Then, by definition,
    \[
        \objo(u,Q) = \max_{j\in [s-1]} \max_{\beta_j \leq i \leq \beta_{j+1}} (u_i - q_j)(q_{j+1} - u_i) = \max_{j\in [s-1]} C_u(\beta_j, \beta_{j+1}).
    \]
    So, there exists some $j$ such that $\objo(u,Q) = C_u(\beta_j, \beta_{j+1})$.
\end{proof}

\begin{algorithm}[ht]
    \caption{4-Approximation Algorithm for $\objo(u,s)$}
    \label{alg:4-approx}
    \KwIn{Sorted $u\in \mathbb{R}^{k}$, size parameter $2\leq s < k$, failure probability $\delta$}
    \KwOut{Value $v$ such that $\objo(u,s)\leq v \leq 4\objo(u,s)$ with probability at least $1-\delta$}

    Initialize $v_1 = 0$, $v_2 = \max_{i,j}\{C_u(i,j)\} = C_u(1,k)$ \\
    \For{$i\in [1,2,\ldots ,\log k/2]$}{
        Sample and compute $2^{i+2}\cdot\log^2 k\cdot\log (\log k / \delta)$ elements of $C_u$ uniformly at random \\
        Let $v'$ be the median of the sampled elements which lie in $(v_1, v_2)$ \\
        Run Algorithm \ref{alg:faster-exact} on $u$ with variance parameter $v'$, which outputs a set of size $s'$ \\
        \lIf{$s' > s$}{set $v_1 = v'$} \lElse{set $v_2 = v'$}
    }
    \algcomment{At this step, the number of elements of $C_u$ in $(v_1,v_2)$ is $O(k^{3/2})$ w.p.~$1-\delta$ (Lemma \ref{lem:loop1-sizes})} \\
    Use Lemma \ref{lem:staircase} on $C_u$ with interval $I_1^{*}=(v_1,v_2)$ to compute indices $\ell_1,\ldots ,\ell_k$ and $r_1,\ldots ,r_k$ \\
    \For{$i\in [1,2,3,\ldots ,\log k/2]$}{
        Use Lemma \ref{lem:staircase-sample} to sample and compute $2^{i+4}\cdot\log^2k \cdot \log(\log k / \delta)$ elements of $C_u$ with values in $I_1^{*}$. \\
        Let $v'$ be the median of the sampled elements which lie in $(v_1, v_2)$ \\
        Run Algorithm \ref{alg:faster-exact} on $u$ with variance parameter $v'$, which outputs a set of size $s'$ \\
        \lIf{$s' > s$}{set $v_1 = v'$} \lElse{set $v_2 = v'$}
    }
    \algcomment{At this step, the number of elements of $C_u$ in $(v_1,v_2)$ is $O(k)$ w.p.~$1-\delta$ (Lemma \ref{lem:cals-size})}\\
    Use Lemma \ref{lem:staircase} on $C_u$ with interval $I_2^{*}=(v_1,v_2)$ to compute indices $\ell_1,\ldots ,\ell_k$ and $r_1,\ldots ,r_k$ \\
    Set $\calS$ to be the elements of $C_u$ in $(v_1,v_2)$ by computing $C_u[i,j]$ for all $j\in [k]$, $i\in [\ell_j, r_j]$ \label{line:final-interval} \\
    Add $v_2$ to $\calS$ \label{line:add-v2} \\
    Sort $\calS$, and binary search over $\calS$ using Algorithm \ref{alg:faster-exact} to find the smallest $v^{*}\in \calS$ such $\objos(u,v^{*})\leq s$ \label{line:bin-search-s}\\
    \Return{$v^{*}$}
\end{algorithm}

\begin{lemma}
    \label{lem:loop1-sizes}
    For each $i\in [0,1,\ldots ,\log k/2]$, with probability at least $1-2i\delta/\log k$, at the end of the $i$th iteration of the first \texttt{for} loop of Algorithm \ref{alg:4-approx} there are at most $k^2\cdot (1/2+1/\log k)^{i}$ elements of $C_u$ which lie in the interval $(v_1,v_2)$.
\end{lemma}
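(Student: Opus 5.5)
We argue by induction on $i$, maintaining the invariant that, with probability at least $1-2i\delta/\log k$, the number $N_i$ of entries of $C_u$ in the current interval $(v_1,v_2)$ satisfies $N_i \le k^2(1/2+1/\log k)^i$. The base case $i=0$ is immediate: $C_u$ has $k^2$ entries. For the inductive step we condition on the event for iteration $i-1$ and show that iteration $i$ shrinks the count by a factor $(1/2+1/\log k)$ except with probability at most $2\delta/\log k$. A union bound over the at most $\log k/2$ iterations then gives the stated failure probability.

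The key structural point is that the update always keeps the half of $(v_1,v_2)$ on one side of $v'$. Since $C_u$ is sorted and we compare $s'$ against $s$ using the exact check of Algorithm~\ref{alg:faster-exact}, the new interval is either $(v_1,v')$ or $(v',v_2)$. Both are contained in the old interval. So it suffices that $v'$ is an approximate median of the multiset $S$ of entries of $C_u$ lying in $(v_1,v_2)$. Concretely, we need that at most $(1/2+1/\log k)N_{i-1}$ elements of $S$ lie strictly on either side of $v'$.

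To show this, note that a uniform sample from all $k^2$ entries lands in $S$ with probability $p=N_{i-1}/k^2\ge(1/2+1/\log k)^{i-1}\ge 2^{-(i-1)}$. We may assume the inductive bound is near tight; otherwise we are already done, since the new count is at most $N_{i-1}$, and if $N_{i-1}\le k^2(1/2+1/\log k)^i$ there is nothing to prove. Hence $p\gtrsim 2^{-i}$.

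With $n=2^{i+2}\log^2k\log(\log k/\delta)$ samples, a Chernoff bound shows that at least $m=\Omega(\log^2 k\log(\log k/\delta))$ samples land in $S$, except with probability $\delta/\log k$. Conditioned on their number, these samples are i.i.d.\ uniform on $S$ (with replacement). By a second Chernoff/Hoeffding bound on the empirical quantile, the sample median lies between the $(1/2-1/\log k)$ and $(1/2+1/\log k)$ quantiles of $S$, except with probability $2\exp(-\Omega(m/\log^2k))\le\delta/\log k$ for a suitable constant in $n$. Together these two failure events contribute the $2\delta/\log k$ per round.

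The main obstacle is the bookkeeping in this middle step. The lower bound on $p$ must be justified from the current interval rather than the inductive upper bound; the "either already small enough or $p$ large" dichotomy handles this. The constants must also be chosen so that the quantile deviation is at most $1/\log k$ with the given sample size. A further subtlety is ties, where many entries share the value $v'$. These are excluded because the intervals are open, so ties only help.
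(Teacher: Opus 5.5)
Your proposal is correct and follows essentially the same route as the paper's proof: induction with the dichotomy ``either $N_{i-1}\le k^2(1/2+1/\log k)^i$ already, or $N_{i-1}$ is large enough that the expected number of samples in $(v_1,v_2)$ exceeds $4L$'' (with $L=\log^2 k\log(\log k/\delta)$), a Chernoff bound giving at least $L$ such samples, a Hoeffding bound showing the sample median leaves at most $(1/2+1/\log k)N_{i-1}$ entries on either side, and a union bound charging $2\delta/\log k$ per round. Two small fixes. Drop the displayed claim $p\ge(1/2+1/\log k)^{i-1}$, which does not follow from the upper-bound hypothesis; the dichotomy gives $p>(1/2+1/\log k)^i$ directly. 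You also do not need a ``suitable constant in $n$'', since the algorithm's fixed sample sizes already give $\Pr[Y<L]\le e^{-9L/8}$ and a Hoeffding term $e^{-2L/\log^2 k}\le(\delta/\log k)^2$.
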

\begin{proof}
    We proceed by induction on $i$.
    The base case of $i=0$ is trivial, as $C_u$ contains only $k^2$ elements.
    So, fix some $i\geq 1$ and suppose the claim holds for $i-1$.
    Let $\bm$ denote the number of elements of $C_u$ in $(v_1,v_2)$ at the end of the $i-1$ iteration, so by the inductive hypothesis, with probability at least $1-2(i-1)\delta/\log k$, $\bm\leq k^2(1/2+1/\log k)^{i-1}$.

    If $\bm \leq k^2(1/2+1/\log k)^i$, then since iteration $i$ can only shrink the interval $(v_1, v_2)$, the count at the end of the iteration is at most $\bm \leq k^2(1/2+1/\log k)^i$ and the claim holds.
    It therefore suffices to consider the case $\bm > k^2(1/2+1/\log k)^i$.

    In iteration $i$, Algorithm~\ref{alg:4-approx} draws $T_i = 2^{i+2} L$ entries uniformly at random from all $k(k-1)/2\leq k^2$ entries of $C_u$, where $L = \log^2 k\log(\log k / \delta)$.
    Let $\bY$ be the random variable of the number of these entries falling in $(v_1, v_2)$.  Since $\bm/k^2 > (1/2+1/\log k)^i$,
    \[
        \mathbb{E}[\bY] = T_i \cdot \frac{\bm}{k^2} > 2^{i+2} L \cdot \left(\frac{1}{2}+\frac{1}{\log k}\right)^i = 4\left(1+\frac{2}{\log k}\right)^i L \geq 4L.
    \]
    By the Chernoff bound,
    \[
        \Pr\left[\bY < \mathbb{E}[\bY]/4\right] \leq \exp\left(-\frac{9\,\mathbb{E}[\bY]}{32}\right) \leq \exp\left(-\frac{9L}{8}\right) < \frac{\delta}{\log k}.
    \]
    Thus, $\bY \geq L$ with probability at least $1 - \delta / \log k$.

    Among the $\bY$ sampled entries in $(v_1, v_2)$, Algorithm~\ref{alg:4-approx} takes their sample median $\bv'$.
    Let $\bR_1$ denote the number of elements of $C_u$ in $(v_1,\bv')$ and let $\bR_2$ be the number of elements of $C_u$ in $(\bv', v_2)$.
    We will show that, conditioned on $\bY \geq L$, with probability at least $1-\delta/\log k$, both $\bR_1,\bR_2$ are at most $(1/2+1/\log k)\bm$.

    Let $r$ be the largest entry of $C_u$ with rank less than $(1/2-1/\log k)\bm$.
    Then, by definition, each of the $\bY \geq L$ sampled entries in $(v_1,v_2)$ is less than or equal to $r$ with probability at most $1/2 - 1/\log k$.
    Since $\bv'$ is the sample median, it follows that $\bR_2\geq (1/2+1/\log k)\bm$ if and only if at least half the sampled elements are less than or equal to $r$.
    So, by Hoeffding's inequality,
    \[
        \Pr[\bR_2 > (1/2+1/\log k)\bm \mid \bY \geq L] \leq e^{-2L/\log^2 k} \leq \frac{\delta}{2\log k}.
    \]
    An analogous argument shows that $\Pr[\bR_1 > (1/2+1/\log k)\bm \mid \bY \geq L]\leq \delta/2\log k$.
    Since $\bY \geq L$ with probability at least $1-\delta/\log k$, a union bound gives
    \[
        \Pr\bigl[\bR_1 \leq (1/2+1/\log k)\bm \text{ and } \bR_2 \leq (1/2+1/\log k)\bm\bigr] \geq 1-2\delta/\log k
    \]

    At the end of iteration $k$, the algorithm updates either $v_1$ or $v_2$ to $\bv'$, so the number of points remaining is either $\bR_1$ or $\bR_2$.
    With probability $1-2\delta/\log k$, both $\bR_1$ and $\bR_2$ are less than $(1/2+1/\log k)\bm$, and by the inductive hypothesis, $\bm\leq k^2(1/2+1/\log k)^{i-1}$ with probability at least $1-2(i-1)\delta/\log k$.
    So, by the union bound, the number of elements remaining in the interval at the end of iteration $k$ is at most $k^2(1/2+1/\log k)^{i}$ with probability at least $1-2(i-1)\delta/\log k - 2\delta/\log k = 1-2i\delta/\log k$.\qedhere

\end{proof}

Lemma \ref{lem:loop1-sizes} then immediately gives the following corollary.
\begin{corollary}
    \label{corr:loop1-final-size}
    At the end of the first \texttt{for} loop of Algorithm \ref{alg:4-approx}, with probability at least $1-\delta$, there are at most $3k^{3/2}$ elements of $C_u$ in the range $(v_1, v_2)$.
\end{corollary}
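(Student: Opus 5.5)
The plan is to instantiate Lemma \ref{lem:loop1-sizes} at the final iteration $i = \log k/2$ and bound the resulting expression by $3k^{3/2}$. That lemma holds with probability at least $1 - 2i\delta/\log k$. At $i=\log k/2$ this equals $1-\delta$, which is exactly the required probability.

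On that event, the number of entries of $C_u$ in $(v_1,v_2)$ is at most
\[
k^2\left(\tfrac12+\tfrac{1}{\log k}\right)^{\log k/2} = k^2\cdot 2^{-\log k/2}\left(1+\tfrac{2}{\log k}\right)^{\log k/2}.
\]
Logarithms are base $2$, consistent with the iteration count, so $2^{-\log k/2}=k^{-1/2}$. The remaining factor is bounded using $1+x\le e^x$:
\[
\left(1+\tfrac{2}{\log k}\right)^{\log k/2}\le e^{(2/\log k)(\log k/2)} = e < 3.
\]
Multiplying gives at most $e\,k^{3/2}\le 3k^{3/2}$.

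There is essentially no obstacle here. The one bookkeeping point is that $\log k/2$ may not be an integer. If the loop runs $\lfloor \log k/2\rfloor$ times, I would use the monotone shrinking of $(v_1,v_2)$ together with the bound at the last completed iteration. That iteration is off by at most one step, so the count picks up an extra factor of at most $(1/2+1/\log k)^{-1}\le 2$, giving roughly $2e\,k^{3/2}$. To avoid this constant blow-up, I would simply adopt the convention, implicit in the algorithm, that $\log k/2$ is treated as an integer, as the stated constant $3$ presupposes. The failure probability $2i\delta/\log k\le\delta$ holds regardless of this rounding.
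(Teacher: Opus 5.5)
Your proposal is correct and matches the paper's proof: you instantiate Lemma \ref{lem:loop1-sizes} at $i=\log k/2$ (probability $1-\delta$), then factor $(1/2+1/\log k)^{\log k/2}=k^{-1/2}(1+2/\log k)^{\log k/2}$ and bound the second factor by a constant. Using $1+x\le e^x$ bounds it by $e<3$ for every $k$, so you do not need the paper's ``sufficiently large $k$'' qualifier, and your point that the constant $3$ implicitly assumes $\log k/2$ is an integer is a fair one that the paper glosses over.
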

\begin{proof}
    The first \texttt{for} loop runs for $\log k/2$ iterations.
    Applying Lemma \ref{lem:loop1-sizes} at $i = \log k/2$, we have that with probability at least $1-\delta$, the number of elements of $C_u$ in $(v_1, v_2)$ is at most
    \[
        k^2 \cdot \left(\frac{1}{2}+\frac{1}{\log k}\right)^{\log k/2} = k^2\cdot \left(\frac{1}{2}\right)^{\log k /2} \cdot \left(1+\frac{1}{\log k/2}\right)^{\log k/2} \leq 3k^{3/2}
    \]
    for sufficiently large $k$.
\end{proof}

\begin{lemma}
    \label{lem:cals-size}
    Let $\calS$ be the set constructed in Line \ref{line:final-interval} of Algorithm \ref{alg:4-approx}.
    Then, with probability at least $1-2\delta$, $|\calS|=O(k)$.
\end{lemma}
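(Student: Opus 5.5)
We would prove Lemma \ref{lem:cals-size} by running the argument of Lemma \ref{lem:loop1-sizes} a second time, now on the second \texttt{for} loop, with the sampling restricted to the current interval via Lemma \ref{lem:staircase-sample}. The plan is to condition on the conclusion of Corollary \ref{corr:loop1-final-size}, which holds with probability at least $1-\delta$. Under this event, the interval $I_1^{*}=(v_1,v_2)$ obtained after the first loop contains at most $3k^{3/2}$ entries of $C_u$. During the second loop the interval $(v_1,v_2)$ only shrinks, and it stays inside $I_1^{*}$. So the entries still in $(v_1,v_2)$ at any iteration are a subset of those in $I_1^{*}$. Let $m_0\le 3k^{3/2}$ denote the number of entries in $I_1^{*}$.

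The inductive claim is as follows. For $i\in[0,\log k/2]$, with probability at least $1-2i\delta/\log k$ (conditioned on the first event), the number of entries in $(v_1,v_2)$ after iteration $i$ is at most $m_0(1/2+1/\log k)^{i}$. The inductive step follows Lemma \ref{lem:loop1-sizes} almost verbatim. Suppose the current count $\bm$ exceeds $m_0(1/2+1/\log k)^i$; otherwise there is nothing to prove.
\begin{itemize}
\item Each sample drawn uniformly from $I_1^{*}$ falls in the current interval with probability $\bm/m_0$.
\item With $T_i=2^{i+4}L$ samples, where $L=\log^2 k\log(\log k/\delta)$, the expected number $\bY$ of hits satisfies $\E[\bY]\ge 16\cdot 2^{i}(1/2+1/\log k)^i L\ge 16L$.
\item A Chernoff bound gives $\bY\ge L$ except with probability $\delta/\log k$.
\item A Hoeffding bound on the sample median, applied to both tails, shows that both sides of the split have at most $(1/2+1/\log k)\bm$ entries, except with probability $\delta/\log k$.
\end{itemize}
A union bound over the two failure events closes the induction.

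At $i=\log k/2$, the same computation as in Corollary \ref{corr:loop1-final-size} bounds the count by
\[
m_0\,(1/2)^{\log k/2}\,(1+2/\log k)^{\log k/2}\le 3k^{3/2}\cdot e\cdot k^{-1/2}=O(k).
\]
A union bound with the first event then gives overall probability at least $1-2\delta$. Finally, $\calS$ is exactly the set of entries of $C_u$ in $(v_1,v_2)$, which Lemma \ref{lem:staircase} enumerates correctly. The extra $v_2$ from Line \ref{line:add-v2} adds only one element, so $|\calS|=O(k)$.

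The main obstacle is being careful about what is actually random. The sample median $v'$ is computed only from samples that fall inside the \emph{current} $(v_1,v_2)$, while the sampling distribution is uniform over the fixed set $I_1^{*}$. We therefore need the conditional distribution of the hits, given their number, to be uniform over the current interval. This holds because the samples are i.i.d. uniform over $I_1^{*}$, and it is exactly what the Hoeffding step requires. We also need that the rank threshold used in the Hoeffding argument is defined relative to $\bm$, not $m_0$.
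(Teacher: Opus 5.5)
Your proposal is correct and follows the paper's own proof. The paper likewise conditions on the event of Corollary~\ref{corr:loop1-final-size}, reruns the halving argument of Lemma~\ref{lem:loop1-sizes} over the at most $3k^{3/2}$ entries of $I_1^{*}$ to get $O(k)$ with probability $1-\delta$, and finishes with a union bound to reach $1-2\delta$. Your version is more explicit than the paper's about what is random (the exact count $m_0$, and that hits are conditionally uniform on the current interval), but the route is the same.
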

\begin{proof}
    By Corollary \ref{corr:loop1-final-size}, after the end of the first \texttt{for} loop, with probability at least $1-\delta$, there are at most $3k^{3/2}$ elements of $C_u$ which lie in $(v_1,v_2)$.
    The algorithm of the second \texttt{for} loop then samples only from these elements.
    So, an analogous argument to the proofs of Lemma \ref{lem:loop1-sizes} and Corollary \ref{corr:loop1-final-size} but replacing the $k^2$ total elements of $C_u$ with the $3k^{3/2}$ in $(v_1,v_2)$ gives that there are at most $3k^{3/2} \cdot 3k^{-1/2} = O(k)$ elements of $\calS$ with probability at least $1-\delta$.
    A union bound then gives the result.
\end{proof}

\begin{lemma}
    \label{lem:4-approx-runtime}
    Algorithm \ref{alg:4-approx} runs in time
    \[
        O\left(k\log k + \sqrt{k}\cdot \log^{3} k \cdot \log(\log k / \delta)\right)
    \]
    with probability at least $1-2\delta$.
\end{lemma}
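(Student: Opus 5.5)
The runtime bound comes from charging each phase of Algorithm \ref{alg:4-approx} separately and then union-bounding the two probabilistic events. First, fix the cost of the primitives. Each entry $C_u(i,j)$ costs $O(\log k)$ by the observation above, since $u$ is sorted. Each call to Algorithm \ref{alg:faster-exact} costs $O(s'\log(k/s')) = O(k)$; I will simply use the trivial bound $O(k)$, because $s' \le k$ and the greedy scan touches each index at most once through the doubling searches.

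\textbf{First loop.} Iteration $i$ draws $2^{i+2}\log^2 k\log(\log k/\delta)$ uniform samples. Each costs $O(\log k)$ to evaluate, the median takes linear time in the sample count, and there is one check call of cost $O(k)$. Summing the geometric series over $i\le \log k/2$, the sampling cost is dominated by the last term, $2^{\log k/2} = \sqrt{k}$. This gives $O(\sqrt{k}\log^3 k\log(\log k/\delta))$ for the samples. The $\log k/2$ check calls add $O(k\log k)$.

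\textbf{Second loop.} Lemma \ref{lem:staircase} costs $O(k)$ probes, each probe costing $O(\log k)$, so $O(k\log k)$ in total; its preprocessing is another $O(k)$. Each sample from Lemma \ref{lem:staircase-sample} takes $O(\log k)$ time plus one probe of cost $O(\log k)$. The sample counts $2^{i+4}\log^2k\log(\log k/\delta)$ again sum to $O(\sqrt{k}\log^2 k\log(\log k/\delta))$, which gives $O(\sqrt{k}\log^3k\log(\log k/\delta))$ overall. The check calls contribute another $O(k\log k)$. The only subtlety in this phase is that the median must be taken among samples lying in the current $(v_1,v_2)$. I would filter them in linear time, which does not change the bound.

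\textbf{Final phase.} The second application of Lemma \ref{lem:staircase} costs $O(k\log k)$. Explicitly enumerating $\calS$ costs $O(|\calS|\log k)$. Sorting $\calS$ costs $O(|\calS|\log|\calS|)$. The binary search makes $O(\log|\calS|)$ calls to Algorithm \ref{alg:faster-exact}, each $O(k)$. On the event of Lemma \ref{lem:cals-size}, which holds with probability at least $1-2\delta$, we have $|\calS| = O(k)$, so this whole phase is $O(k\log k)$.

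Summing the three phases gives the claimed bound on the event of Lemma \ref{lem:cals-size}. That event already includes the first-loop event of Corollary \ref{corr:loop1-final-size} through its union bound, so the overall probability is at least $1-2\delta$. The main obstacle is the enumeration step on Line \ref{line:final-interval}. Without the high-probability bound $|\calS| = O(k)$, enumeration could cost $\Theta(k^2\log k)$, so the argument must make sure every deterministic step in the final phase is charged only on that good event. Everything else is geometric-sum bookkeeping.
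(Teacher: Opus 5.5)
Your proposal is correct and follows essentially the same route as the paper. It charges $O(\log k)$ per entry of $C_u$ and $O(k\log k)$ for the two staircase computations, and uses the high-probability bound $|\calS| = O(k)$ from Lemma~\ref{lem:cals-size} to control enumeration, sorting, and the final binary search, giving success probability $1-2\delta$.

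Your accounting is also slightly more careful than the paper's write-up in two places. You sum the per-iteration sample counts as a geometric series, whereas the paper says each of the $\log k$ iterations uses $T$ samples but then charges only $T$ in total. You also explicitly charge $O(k\log k)$ for the $\log k$ calls to Algorithm~\ref{alg:faster-exact} inside the loops, which the paper omits.
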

\begin{proof}
    Throughout the two \texttt{for} loops, there are $\log k$ iterations, each of which requires\\ $T=O(2^{\log k/2}\cdot \log^2k \cdot \log(\log k/\delta))=O(\sqrt{k}\cdot \log^2 k\cdot \log(\log k/\delta))$ samples, so the algorithm queries $T$ elements of $C$.
    Each element of $C$ can be computed in time $O(\log k)$, so the total time spent computing elements of $C$ is $O(T\log k)$.
    There are also two calls to the algorithm of Lemma \ref{lem:staircase}, each of which involves $O(k)$ probes to $C$ and $O(k)$ additional time, so the total time spent on these calls is $O(k\log k)$.

    By Lemma \ref{lem:cals-size}, with probability at least $1-2\delta$, $|\calS|=O(k)$.
    So, sorting and computing the values of $\calS$ takes time $O(k\log k)$.
    Finally, the binary search of Line \ref{line:bin-search-s} takes time $O(k\log k)$, as Algorithm \ref{alg:faster-exact} has runtime $O(s\log (k/s)) = O(k)$.

    Combining all the steps, the total time is $O(k\log k + T \log k) = O(k\log k + \sqrt{k}\cdot \log^{3}k\cdot\log(\log k/\delta))$.
\end{proof}

\begin{lemma}
    \label{lem:4-approx-quality}
    Algorithm \ref{alg:4-approx} returns a value $v$ such that $\objo(u,s)\leq v \leq 4\objo(u,s)$.
\end{lemma}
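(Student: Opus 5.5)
The plan is to show that the returned $v^{*}$ satisfies $v^{*} \le \objo(u,s')$ for the restricted optimum $s' := \min_{Q\subseteq u,\,|Q|\le s}\objo(u,Q)$, and that $v^{*}\ge \objo(u,s)$. Combined with Lemma \ref{lem:approx-by-input}, which gives $s' \le 4\objo(u,s)$, this yields the claim. Concretely, let $Q'\subseteq u$ with $|Q'|\le s$ attain $s'$. By Lemma \ref{lem:value-to-cij}, $s' = C_u(i,j)$ for some entry of $C_u$, so $s'$ is one of the finitely many entries of the matrix.

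The core of the argument is an invariant maintained through both \texttt{for} loops: $\objos(u,v_2)\le s$ and $\objos(u,v_1) > s$ (or $v_1=0$). Initially $v_2=C_u(1,k)$, which is achievable with the two endpoints $\{u_1,u_k\}$, since $s\ge 2$. Every update sets $v_2=v'$ only when Algorithm \ref{alg:faster-exact} returns $s'\le s$, and $v_1=v'$ only when it returns more than $s$. By Lemma \ref{lem:improved-check}, that output size equals $\objos(u,v')$ exactly, so the invariant holds. Moreover $\objos(u,\cdot)$ is nonincreasing in $v$.

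From the invariant, $s'\in (v_1,v_2]$. Since $Q'$ witnesses $\objos(u,s')\le s$, we get $s'\ge$ any $v$ with $\objos(u,v)>s$, so $s' > v_1$ whenever $v_1>0$. If $v_1=0$ and $s'=0$, the trivial case can be handled separately: for example, add $0$ to $\calS$, or observe that then $s\ge$ the number of distinct values. Also, if $s'>v_2$, then $v_2$ itself is feasible and smaller. Hence the minimum feasible value in $\calS$, which contains all entries in $(v_1,v_2)$ together with $v_2$ (Line \ref{line:add-v2}), is at most $s'$.

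The final binary search on Line \ref{line:bin-search-s} then finds the smallest $v^{*}\in\calS$ with $\objos(u,v^{*})\le s$. Monotonicity of $\objos$ makes the binary search valid, and it gives $v^{*}\le s'\le 4\objo(u,s)$. Conversely, $\objos(u,v^{*})\le s$ means some set of size at most $s$ achieves maximum variance at most $v^{*}$, so $\objo(u,s)\le v^{*}$.

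The main subtlety I expect is the boundary handling: the open-interval conventions $(v_1,v_2)$, the case $s'=v_2$ (covered by Line \ref{line:add-v2}), and the degenerate $v_1=0$ case. The guarantee is deterministic, since randomness affects only the runtime.
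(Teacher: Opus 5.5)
Your proof is correct and is essentially the paper's argument. The paper tracks $v^{*}$, the smallest entry of $C_u$ with $\objos(u,v^{*})\le s$, and shows it stays in $(v_1,v_2]$ (using that $v_2$ is always itself a feasible entry of $C_u$), so that $v^{*}$ is exactly the returned value. You instead track the best value attainable with $Q\subseteq u$ and show the returned value is at most it. This makes explicit the use of Lemma~\ref{lem:value-to-cij} that the paper leaves implicit in its final appeal to Lemma~\ref{lem:approx-by-input}.

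Your sentence asserting $s'\in(v_1,v_2]$ is not true as stated, since that value can exceed $v_2$, but your subsequent case $s'>v_2$ repairs it. The degenerate $v_1=0$ case you flag is likewise glossed over in the paper, whose footnote ``as $s<k$'' tacitly assumes $u$ has distinct entries.
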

\begin{proof}

    Let $v^{*}$ be the smallest element of $C_u$ for which $\objos(u,v^{*})\leq s$.
    We claim that at all times throughout the execution of the algorithm, $v^{*}\in (v_1,v_2]$.

    Initially, $v_1 = 0 < v^{*}$\footnote{As $s < k$} and $v_2 = C_u(1,k) \geq v^{*}$ (since $C_u(1,k)$ is the maximum variance when including only the two endpoints $u_1, u_k$).
    At every step of the algorithm, the algorithm updates $v_1$ to a value $v'$ only if $\objos(u,v') > s$, and thus it must be that $v^{*} > v'$ as $\objos(u,v^{*})\leq s$ by construction.
    Similarly, the algorithm only updates $v_2$ to a value $v'$ if $\objos(u,v')\leq s$.
    Since the algorithm only ever considers updates to $v_2$ of the form $v'=C_u(i,j)$ for some $i,j\in [k]$, and $v^{*}$ is the minimum such value for which $\objo(u,v')\leq s$, it follows that $v^{*}\leq v_2$ as well.

    The algorithm constructs in Lines \ref{line:final-interval} and \ref{line:add-v2} the set $\calS$ to be all values of $C_u$ in $(v_1,v_2]$ (with the values $v_1,v_2$ as set in the prior \texttt{for} loops).
    Since $v^{*}\in (v_1,v_2]$ and there exists $i,j$ such that $v^{*} = C_u(i,j)$ by construction, it follows that $v^{*}\in\calS$.
    Moreover, by definition, $v^{*}$ must be the smallest element of $\calS$ for which $\objos(u,v^{*})\leq s$, and thus the algorithm returns $v^{*}$.
    Since $\objos(u,v^{*})\leq s$, $v^{*}\geq \objo(u,s)$.
    Finally, by Lemma \ref{lem:approx-by-input}, $v^{*}\leq 4\objo(u,s)$, completing the proof.
\end{proof}

\subsection{\texorpdfstring{Proving Theorem \ref{thm:obj1-full-algo}}{}}

The algorithm of Theorem \ref{thm:obj1-full-algo} now follows from combining a fast $k$-center algorithm, the 4-approximation of Algorithm \ref{alg:4-approx}, and a small modification to the $(1+\varepsilon)$-approximation for $\objos$ from Algorithm \ref{alg:ptas-objos}.
\newcommand{\vapx}{v_{\textsf{apx}}}
\newcommand{\esub}{E_{\textsf{sub}}}
\newcommand{\qapx}{Q_{\textsf{apx}}}
\begin{algorithm}[ht]
    \caption{4-Approximation Algorithm for $\objo(w,s)$}
    \label{alg:full-alg-objo}
    \KwIn{$w\in \mathbb{R}^{d}$, size parameter $2\leq s < d$, approximation parameter $\varepsilon$, failure probability $\delta$}
    \KwOut{Value $v$ such that $\objo(w,s)\leq v \leq (1+\varepsilon)\objo(w,s)$}
    Let $C$ be a 2-approximate $s$-center solution over $w$, computed using $\texttt{s-Center-Clustering}$ of \cite{FG88} \\
    Let $I_c$ be the points of $w$ closest to $c\in C$.  Compute $E=\bigcup_{c\in C} \{\min I_c, \max I_c\}$ \label{line:compute-E} \\
    Sort $E$ and run Algorithm \ref{alg:4-approx} on $E$ with size parameter $s$ and error parameter $\delta$ to get an estimate $\vapx$ \\
    Run Algorithm \ref{alg:faster-exact} on $E$ with $\vapx$ to get a set $\qapx$, and compute $v' = \objo(w,\qapx)$ \label{line:compute-qapx} \\
    \For{all $c \in C$}{
        Compute $a_c = \min(I_c)$, $b_c = \max(I_c)$, and $D_c = b_c - a_c$ \\
        Compute subintervals $I_c^{j}=[a_c + D_c (j-1)\sqrt{\varepsilon}/4, a_c + D_c j\sqrt{\varepsilon}/4]$ for each $j\leq \left\lceil 4/\sqrt{\varepsilon} \right\rceil$ \\
    }
    Let $\esub=\bigcup_{c\in C, j\leq \left\lceil 4/\sqrt{\varepsilon} \right\rceil } \{\min I^{j}_c, \max I^{j}_c\}$ \\
    Sort $\esub$ \\
    Use binary search and Algorithm \ref{alg:faster-exact} to find the smallest $i^{*}\in \mathbb{Z}$ for which $v_i = (1+\varepsilon/2)^{i}\in [v'/9, v'(1+\varepsilon/2)]$ and $\objos(\esub,v_i) \leq s$ \label{line:bin-search-objo}\\
    Return $v_{i^{*}}\cdot(1+\varepsilon/2)$
\end{algorithm}
While the algorithm as written returns the \textit{value} of an approximate solution, constructing a set with at most this value is straighforward: instead of returning $v_{i^{*}}(1+\varepsilon/2)$, use call Algorithm \ref{alg:faster-exact} on $\esub$ with $v_{i^{*}}$ to obtain a set $Q'$ and return $Q'$.

\begin{lemma}
    \label{lem:full-alg-objo-runtime}
    Algorithm \ref{alg:full-alg-objo} runs in time
    \[
        O\left( d\log(s/\varepsilon) + \sqrt{s}\cdot\log^3(s)\cdot\log(1/\delta) \right)
    \]
    with probability at least $1-\delta$.
\end{lemma}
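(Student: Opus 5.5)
The plan is to bound the cost of each line of Algorithm \ref{alg:full-alg-objo} in order and then add them up. For the probability statement, the only randomized step is the call to Algorithm \ref{alg:4-approx}, so I will charge all of the failure probability to that call.

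Line 1 calls \texttt{s-Center-Clustering}, which takes $O(d\log s)$ time by Lemma \ref{lem: s-center-clustering-alg}. Next we assign each $w_i$ to its closest center. To do this, first sort the $|C|\le s$ centers in $O(s\log s)$ time. Then, for each $i$, binary search for the nearest center, which costs $O(d\log s)$ in total. During the same pass we keep a running min and max for each $I_c$, which gives $E$ with $|E|\le 2s$ (Line \ref{line:compute-E}). Sorting $E$ costs $O(s\log s)$. Applying Lemma \ref{lem:4-approx} with $k=|E|=O(s)$ gives a runtime of $O(s\log s+\sqrt{s}\log^3 s\cdot\log(1/\delta))$ with probability at least $1-\delta$. (Lemma \ref{lem:4-approx-runtime} states probability $1-2\delta$ with a $\log(\log k/\delta)$ factor; I would run it with $\delta/2$ and absorb the $\log\log s$ into the $\log^3 s$ term.) For Line \ref{line:compute-qapx}, Algorithm \ref{alg:faster-exact} on the sorted $E$ takes $O(s)$ time. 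Evaluating $v'=\objo(w,\qapx)$ on the unsorted $w$ takes $O(d\log s)$: sort $\qapx$, then for each $w_i$ binary search for $\wiup,\widown$ and take the maximum variance.

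For the subdivision step, computing the $a_c,b_c,D_c$ is already covered by the pass above. Each $w_i$ lies in some $I_c$, and locating its subinterval $I_c^j$ inside $I_c$ is an $O(1)$ arithmetic step because the subintervals are equally spaced. So a second pass over $w$ costing $O(d\log s)$ produces $\esub$, whose size is $O(s/\sqrt{\varepsilon})$. Sorting $\esub$ costs $O(|\esub|\log|\esub|)=O((s/\sqrt\varepsilon)\log(s/\varepsilon))$. Since $\esub\subseteq w$, we also have $|\esub|\le d$, so this is $O(d\log(s/\varepsilon))$.

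The step that needs the most care is the binary search on Line \ref{line:bin-search-objo}. The admissible exponents $i$ satisfy $(1+\varepsilon/2)^i\in[v'/9,\,v'(1+\varepsilon/2)]$, so there are $O(\log_{1+\varepsilon/2} 9)=O(1/\varepsilon)$ of them. The binary search therefore makes $O(\log(1/\varepsilon))$ probes. Each probe runs Algorithm \ref{alg:faster-exact} on the sorted $\esub$, which costs $O(|\esub|)=O(\min(d,s/\sqrt\varepsilon))$ by Lemma \ref{lem:improved-check-runtime}, since $s\log(d/s)\le d$ in the worst case. The total for this line is $O(d\log(1/\varepsilon))$. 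Summing all the lines gives $O(d\log s+d\log(s/\varepsilon)+d\log(1/\varepsilon)+\sqrt{s}\log^3 s\log(1/\delta))$, which is exactly the claimed bound. It holds with probability at least $1-\delta$, because the only randomness is in the call to Algorithm \ref{alg:4-approx}.
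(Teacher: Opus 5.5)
Your proposal is correct and uses the same line-by-line accounting as the paper. That accounting is: $O(d\log s)$ for the clustering, the construction of $E$, and the evaluation of $v'$; Lemma~\ref{lem:4-approx} on $|E|\le 2s$ points as the only randomized step; $O(d\log(s/\varepsilon))$ to build and sort $\esub$; and $O(\log(1/\varepsilon))$ probes of Algorithm~\ref{alg:faster-exact} at $O(|\esub|)=O(d)$ each. You handle the $1-2\delta$ and $\log(\log k/\delta)$ factors more carefully than the paper, with one small fix: since $\log(\log k/\delta)=\log\log k+\log(1/\delta)$, the extra $\sqrt{s}\log^3 s\cdot\log\log s$ term is absorbed by the $O(s\log s)$ term, not by the $\log^3 s$ factor.
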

\begin{proof}
    Computing $C$ and all $I_c$ takes time $O(d\log s)$, from the $\texttt{s-Center-Clustering}$ algorithm of \cite{FG88}, and thus it takes time $O(d\log s)$ to compute the set $E$.
    $|E|\leq 2s$, as $|C|= s$, and so by Lemma \ref{lem:4-approx-runtime}, sorting $E$ and running Algorithm \ref{alg:4-approx} takes time $O(s\log s + s^{0.51}\log(1/\delta))$ w.p.~$1-\delta$.
    Computing the set $\qapx$ takes time $O(s)$ using Algorithm \ref{alg:faster-exact}, and computing $v'=\objo(w,\qapx)$ takes time $O(d\log s)$ as $|\qapx|=s$.
    Thus, the total runtime before the \texttt{for} loop is $O(d\log s + s^{0.51}\log(1/\delta))$.

    It takes time $O(d\log(s/\varepsilon))$ to compute the subintervals $I^{j}_c$, and the same time to compute $\esub$.
    Note that $|\esub|= O(\min(d,s/\sqrt{\varepsilon}))$, as $\esub\subseteq w$ and each of the $O(s/\sqrt{\varepsilon})$ subintervals contributes at most 2 points to $\esub$.
    So, sorting $\esub$ takes time $O(|\esub|\log |\esub|)=O(d\log(s/\varepsilon))$.

    Finally, the binary search of Line \ref{line:bin-search-objo} considers $O(\log 1/\varepsilon)$ options $v_i$, and on each runs Algorithm \ref{alg:faster-exact}.
    Each call to Algorithm \ref{alg:faster-exact} takes time $O(s\log(|\esub|/s)) = O(d)$, and so the total runtime of Line \ref{line:bin-search-objo} is $O(d\log(1/\varepsilon))$.

    Combining all the steps, the total runtime is thus
    \[
        O\left( d\log s + s^{0.51}\log(1/\delta) + d\log(s/\varepsilon) + d\log(1/\varepsilon) \right) = O\left( d\log(s/\varepsilon) + s^{0.51}\log(1/\delta) \right).\qedhere
    \]
\end{proof}

\begin{lemma}
    \label{lem:full-alg-objo-quality}
    Algorithm \ref{alg:full-alg-objo} returns a value $v$ such that $\objo(w,s)\leq v\leq (1+\varepsilon)\objo(w,s)$.
\end{lemma}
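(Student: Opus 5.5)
Write $\opt := \objo(w,s)$. The argument has three parts: a constant-factor bracket on $\opt$ from $v'$, a coreset inequality showing $\esub$ loses at most $\varepsilon\opt/2$ in variance, and a check that the geometric binary search over $v_i$ returns a value in $[\opt, (1+\varepsilon)\opt]$.

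\textbf{Step 1: bracket $\opt$ by $v'$.} I will show $\opt \le v' \le 9\opt$. The lower bound is immediate, because $v'=\objo(w,\qapx)$ is the cost of a set of size at most $s$. For the upper bound I bound the cluster widths first.
\begin{itemize}
\item As in the proof of Lemma \ref{lem:bicriteria}, an optimal $Q^*$ of size $s$ covers $w$ by $s$ intervals of length $2\sqrt{\opt}$.
\item So the optimal $s$-center radius is at most $\sqrt{\opt}$, and the 2-approximation of Lemma \ref{lem: s-center-clustering-alg} gives clusters with $D_c \le 4\sqrt{\opt}$.
\end{itemize}
Next I relate $E$ to $w$. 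Since $E\subseteq w$, we have $\objo(E,s)\le \opt$. By Lemma \ref{lem:4-approx-quality}, $\vapx \le 4\objo(E,s)\le 4\opt$, and Algorithm \ref{alg:faster-exact} then gives $|\qapx|\le s$ with $\objo(E,\qapx)\le \vapx$. The clusters $[a_c,b_c]$ cover $w$, their endpoints are exactly $E$, and each has length at most $D\le 4\sqrt{\opt}$. Lemma \ref{lem:approx-by-endpoints} therefore gives
\[
v' \le \vapx + \tfrac14 D^2 \le 4\opt + 4\opt = 8\opt \le 9\opt .
\]
Hence $v'/9 \le \opt \le v'$.

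\textbf{Step 2: coreset guarantee.} Each subinterval $I_c^j$ has length at most $D_c\sqrt{\varepsilon}/4 \le \sqrt{\varepsilon\opt}$. Shrinking each subinterval to $[\min I_c^j,\max I_c^j]$ keeps the cover of $w$ and only shortens lengths, and its endpoints are exactly $\esub$. Lemma \ref{lem:approx-by-endpoints} then gives, for every $Q$,
\[
\objo(w,Q)\le \objo(\esub,Q)+\varepsilon\opt/4 .
\]
In the other direction, $\esub\subseteq w$ gives $\objo(\esub,Q)\le\objo(w,Q)$, so $\objo(\esub,s)\le\opt$.

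\textbf{Step 3: the search returns the right value.} Define $g(v):=\objos(\esub,v)\le s$.
\begin{itemize}
\item \emph{A valid $v_i$ exists.} Take the smallest grid point $v_i\ge \opt$. Since $\opt\in[v'/9,v']$, this $v_i$ lies in the search range and satisfies $v_i\le(1+\varepsilon/2)\opt$. Also $g(v_i)$ holds, because $\objo(\esub,s)\le\opt\le v_i$.
\item \emph{Monotonicity.} $g$ is monotone in $v$, so binary search is valid and finds $v_{i^*}\le v_i$. Thus the returned value satisfies $v_{i^*}(1+\varepsilon/2)\le(1+\varepsilon/2)^2\opt\le(1+\varepsilon)\opt$ for $\varepsilon\le 1$ (rescale $\varepsilon$ by a constant otherwise).
\item \emph{Lower bound on the output.} $g(v_{i^*})$ gives a set $Q'$ with $|Q'|\le s$ and $\objo(\esub,Q')\le v_{i^*}$. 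Step 2 then gives
\[
\opt\le\objo(w,Q')\le v_{i^*}+\varepsilon\opt/4 .
\]
\end{itemize}
I still need $v_{i^*}(1+\varepsilon/2)\ge\opt$. Rearranging gives $v_{i^*}\ge\opt(1-\varepsilon/4)$, and $(1-\varepsilon/4)(1+\varepsilon/2)\ge1$ for $\varepsilon\le 1$. This yields $\opt\le v\le(1+\varepsilon)\opt$.

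\textbf{Main obstacle.} The delicate part is the constant bookkeeping. The subdivision factor $\sqrt{\varepsilon}/4$, the search window $[v'/9,\,v'(1+\varepsilon/2)]$, and the final multiplication by $(1+\varepsilon/2)$ must all line up. In particular the coreset additive error $\varepsilon\opt/4$ has to be absorbed by the $(1+\varepsilon/2)$ factor on the lower side, and Step 1's $9$ must be at least $8$ so the window contains $\opt$. If the constants are slightly off, I would adjust by rescaling $\varepsilon$.
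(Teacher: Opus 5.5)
Your proof is essentially the paper's. You bound the cluster widths by $4\sqrt{\objo(w,s)}$ and apply Lemma \ref{lem:approx-by-endpoints} to get $\objo(w,s)\le v'\le 8\objo(w,s)$, then lose only $\varepsilon\objo(w,s)/4$ on $\esub$, then analyze the geometric search from both sides. Your deviations are, if anything, cleaner. You bracket $\objo(w,s)$ itself rather than $\nu=\objo(\esub,s)$, and you take the lower bound on $v_{i^*}$ from the certificate $Q'$. This avoids the paper's step $\nu>v'/9$, which implicitly needs $\varepsilon<4/9$. You also justify $v'\ge\objo(w,s)$ through $\objo(w,\qapx)$ directly, whereas the paper's route through $\objo(E,\qapx)$ does not follow.

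There is one genuine slip, which the paper shares: $(1+\varepsilon/2)^2=1+\varepsilon+\varepsilon^2/4>1+\varepsilon$ for every $\varepsilon>0$, not just for $\varepsilon>1$. Your upper bound therefore only yields $v\le(1+\tfrac54\varepsilon)\objo(w,s)$ for $\varepsilon\le1$. This is not merely lossy analysis. If $\esub=w$ and $\objo(w,s)$ lies just above a grid point, the output really is about $(1+\varepsilon/2)^2\objo(w,s)$. To get the stated $(1+\varepsilon)$ bound, run the algorithm with a rescaled parameter such as $\varepsilon'=4\varepsilon/5$. Then $(1+\varepsilon'/2)^2<1+\varepsilon$ for $\varepsilon\le1$, and your lower-bound step $(1-\varepsilon'/4)(1+\varepsilon'/2)\ge1$ still goes through.
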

\begin{proof}
    Let $v^{*}=\objo(w,s)$, and let $Q^{*}\subset \mathbb{R}$ be a set with $\objo(w,Q^{*})=v^{*}$.
    Since $\varasq(w_i, Q^{*})\leq v^{*}$ for all $i\in [d]$, it follows that for each $w_i$, there exists an element $q\in Q^{*}$ such that $|q-w_i|\leq \sqrt{v^{*}}$.
    Thus, the optimal $s$-center instance has cost at most $\sqrt{v^{*}}$, and the $\texttt{s-Center-Clustering}$ algorithm of \cite{FG88} returns a set $C$ such that for all $i\in [d]$, there exists a $c\in C$ with $|c - w_i|\leq 2\sqrt{v^{*}}$.
    For each $c\in C$, it then follows that the interval $I_c$ has length at most $4\sqrt{v^{*}}$.

    Let $E$ be constructed as in Line \ref{line:compute-E} of Algorithm \ref{alg:full-alg-objo}, and $\qapx$ as computed in Line \ref{line:compute-qapx}.
    $E$ consists of the endpoints of a set of intervals which cover $w$ and which each have length at most $4\sqrt{v^{*}}$. 
    So, by Lemma \ref{lem:approx-by-endpoints}, $v'=\objo(w,\qapx)\leq \objo(E,\qapx) + 4v^{*}$.
    Moreover, by the guarantees of Algorithm \ref{alg:4-approx} (Lemma \ref{lem:4-approx-quality}), $\objo(E,\qapx)\leq 4 \objo(E,s)$.
    Thus,
    \begin{align*}
        v' &\leq \objo(E,\qapx) + 4v^{*} = 4\objo(E,s) + 4v^{*} \leq 8v^{*}
    \end{align*}
    where the final inequality follows from $E\subseteq w$ and so $\objo(E,s)\leq \objo(w,s) = v^{*}$.
    By definition of $v^{*}$, $\objo(E,\qapx)\geq v^{*}$ and we have that
    \[
        v'\in [v^{*}, 8v^{*}].
    \]
    Define the intervals $I_c^{j}$ and set $\esub$ as in Algorithm \ref{alg:full-alg-objo}.
    Each interval $I_c$ has length at most $4\sqrt{v^{*}}$, and thus, by construction, each $I_c^{j}$ has length at most $4\sqrt{v^{*} \varepsilon}/{4} = \sqrt{v^{*} \varepsilon}$.
    As $\esub$ is the union of the endpoints of the $I_c^{j}$, it thus follows that
    \[
        v^{*}=\objo(w,s) \leq \objo(\esub, s) + \varepsilon v^{*}/4 \quad\text{and so}\quad \objo(\esub, s) \geq (1-\varepsilon/4)v^{*}.
    \]
    Let $\nu = \objo(\esub, s)$; we have $\nu \leq v^{*}\leq v'$ (as $\esub\subseteq w$) and $\nu \geq (1-\varepsilon/4)v^{*} > v'/9$.
    Thus, $\nu\in [v'/9, v']$.

    Now, consider the $i$ such that $v_i=(1+\varepsilon/2)^{i}$ is in $[v'/9, v']$ and $\nu \leq v_i < (1+\varepsilon/2)\nu$.
    Since $v_i\geq \nu$, it follows that $\objos(\esub, v_i) \leq s$, and moreover $v_i$ is the smallest such power of $(1+\varepsilon/2)$ by construction.
    As $\nu \in [v'/9, v']$, $v_i\in [v'/9, v'(1+\varepsilon/2)]$ and so the binary search of Line \ref{line:bin-search-objo} will return $v_i$.

    Finally, the algorithm returns $v_i(1+\varepsilon/2)$.
    We have
    \[
        v_i(1+\varepsilon/2) \geq \nu(1+\varepsilon/2) \geq v^{*}(1-\varepsilon/4)(1+\varepsilon/2) > v^{*}
    \]
    as $\nu \geq (1-\varepsilon/4)v^{*}$.
    Moreover, as $\nu\leq v^{*}$,
    \[
        v_i(1+\varepsilon/2) \leq \nu(1+\varepsilon/2)^2 \leq v^{*} (1+\varepsilon)
    \]
    giving the desired bounds.
\end{proof}

Theorem \ref{thm:obj1-full-algo} then follows from Lemma \ref{lem:full-alg-objo-quality} and Lemma \ref{lem:full-alg-objo-runtime}.

\subsection{Irrational Solutions}
While we have presented efficient algorithms that achieve high-precision estimates of the optimal quantization set, it turns out that returning an optimal quantization set that minimizes the $\objo$ objective is impossible under standard bit representation (i.e.~floating point) of any precision. In particular, here we show a simple example where the vector $w \in \Z^d$, but $\objo(w,s)$ is irrational (and thus any optimal quantization set contains irrational values as well).

\begin{lemma} \label{lem: mdv-irrational-solutions}
    Let $w=(0,2,5,8,10)$ and $s=4$. Then, $\objo(w,s)=8-2\sqrt{7}$.
\end{lemma}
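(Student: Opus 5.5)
The plan is to compute $\objo(w,4)$ exactly via the greedy characterization of $\objos$, that is, Algorithm \ref{alg:exact-algorithm} and Lemma \ref{lem:objos-exact-analysis}. Since $v\mapsto\objos(w,v)$ is non-increasing, $\objo(w,4)$ is the infimum (in fact the minimum) of those $v$ for which the greedy algorithm uses at most $4$ points. By the first step of the proof of Lemma \ref{lem:objos-exact-analysis}, we may assume $0,10\in Q$. Hence the question is exactly which $v$ admit two interior points $q_1\le q_2$ covering $2,5,8$ with every variance at most $v$.

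\textbf{Upper bound.} I will exhibit the symmetric set $Q=\{0,\,6-\sqrt7,\,4+\sqrt7,\,10\}$ and check the three variances:
\[
2(q_1-2)=8-2\sqrt7,\qquad (5-q_1)(q_2-5)=(\sqrt7-1)^2=8-2\sqrt7,\qquad 2(8-q_2)=8-2\sqrt7 .
\]
The value $q_1=6-\sqrt7$ comes from balancing $2(t-2)=(5-t)^2$, i.e.\ solving $t^2-12t+29=0$. This gives $\objo(w,4)\le 8-2\sqrt7$.

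\textbf{Lower bound.} I will run the greedy with a parameter $v$ and assume $v<8-2\sqrt7<3$.
\begin{itemize}
\item Starting from $q=0$, the first point is limited only by the constraint $2(x-2)\le v$ for the point $2$. So $x_1=2+v/2$, which is less than $5$. (The constraint from $5$ would only bind if $x>5$, and it is looser than this anyway.)
\item From $x_1$, the next point $x_2$ is the largest value with $(5-x_1)(x_2-5)\le v$, giving $x_2=5+v/(3-v/2)$. If $x_2$ would exceed $8$, then $8$ also imposes a constraint and $x_2$ is only smaller, which cannot help.
\item The point $8$ must then be handled by the final gap $[x_2,10]$, which requires $2(8-x_2)\le v$, i.e.\ $x_2\ge 8-v/2$.
\end{itemize}
Feasibility with $4$ points is therefore equivalent to $v/(3-v/2)\ge 3-v/2$, i.e.\ $v\ge(3-v/2)^2$. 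This rearranges to $v^2-16v+36\le0$, whose smaller root is $8-2\sqrt7$. So the greedy needs at least $5$ points whenever $v<8-2\sqrt7$.

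\textbf{Main obstacle.} The delicate step is the case bookkeeping in the greedy: confirming that for $v$ in the relevant range the binding constraints are exactly the ones listed. In particular I must check that the point $5$ lies strictly inside $(x_1,x_2)$, and that the case $x_2\ge 8$ (where $8$ joins the middle gap) only tightens the constraint. I will handle this by a short monotonicity remark: each greedy point is the largest feasible one, so extra constraints can only move it left.

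Irrationality of the optimal set then follows, because any optimal $Q$ must attain a variance of exactly $8-2\sqrt7$ at some $w_i$. That variance is a product of differences involving elements of $Q$, and with all $w_i$ rational this forces some element of $Q$ to be irrational.
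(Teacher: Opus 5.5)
Your proof is correct, but it reaches the lower bound by a different route than the paper. The paper works directly with $Q^{*}=\{0,x,y,10\}$. It uses $\{0,3,7,10\}$ (value $4$) to force one interior point on each side of $5$, and an exchange argument to confine $x\in[2,5]$ and $y\in[5,8]$. It then asserts that $\max\{2(x-2),(5-x)(y-5),2(8-y)\}$ is minimized at $x=6-\sqrt7$, $y=4+\sqrt7$, without spelling out why the three terms must balance. You instead invoke the optimality of the greedy algorithm for $\objos$ (Lemma \ref{lem:objos-exact-analysis}) and run it symbolically in $v$. Pushing each point as far right as its binding constraint allows turns the two-variable minimax into a one-parameter threshold, $v\ge(3-v/2)^2$. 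That computation is exactly what justifies the paper's ``solving'' step: $2(x-2)\le v$ and $2(8-y)\le v$ force $(5-x)(y-5)\ge(3-v/2)^2$. Your route makes the lower bound fully explicit, and it also gives a one-line proof that every optimal set is irrational, which the paper only mentions in passing. The price is that your lower bound rests on the greedy-optimality lemma, whereas the paper's direct parametrization becomes elementary once the minimization is written out. One small correction: your remark that the case $x_2\ge 8$ ``only tightens the constraint'' is not right as phrased, because if the middle gap could reach past $8$, four points would suffice. The case is vacuous, though: for $v<8-2\sqrt7$ your own inequality gives $x_2<8-v/2<8$.
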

\begin{proof}
    The quantization set must contain elements 0 and 10 (the min and max of $w$), so the problem of finding an optimal quantization set reduces to finding the position of the remaining two elements.
    Let $Q^{*}=\{0,x,y,10\}$ be an optimal quantization set, with $x < y$.
    Let $Q'=\{0,3,7,10\}$; so $\objo(w,Q')=4$ and thus $\objo(w,Q^{*})\leq 4$.
    It then follows that $x\leq 5$ and $y\geq 5$: any set $Q$ with no elements in $(5,10)$ or none in $(0,5)$ has $\objo(w,Q) > 6$.
    Similarly, $x\geq 2$ and $y\leq 8$: if $x < 2$, the set $Q' = \{0,2,y,10\}$ would have $\objo(w,Q') < \objo(w,Q^{*})$, and analogously for $y > 8$.

    When $x\in [2,5]$ and $y\in [5,8]$, we have by definition
    \[
        \objo(w,Q^{*}) = \max \left\{2(x-2), (5-x)(y-5), 2(8-y)\right\}
    \]
    Solving, we find the minimum occurs when $x=6-\sqrt{7}$, $y=4+\sqrt{7}$, and $\objo(w,Q^{*})=8-2\sqrt{7}$.
\end{proof}

\subsection{Practical Algorithms}
\label{sec:mdv-practical}
While the algorithm presented in this section has excellent theoretical gauntness, its complexity and use of the $\texttt{s-Center-Clustering}$ algorithm of \cite{FG88} makes it challenging to use in practice.
As such, in this section we detail how we modify our algorithm to better perform on real-world systems, while retaining theoretical guarantees, and provide performance and accuracy evaluations.

The full pseudocode of our algorithm can be found in Algorithm \ref{alg:mdv-practical}. Algorithm \ref{alg:mdv-practical} is still guaranteed to output a $(1+\varepsilon)$-approximate solution.

\begin{algorithm}[ht]
    \caption{Practical Algorithm for $\objo(w, s)$}
    \label{alg:mdv-practical}
    \KwIn{$w \in \mathbb{R}^d$, size parameter $s\in \mathbb{N}$, number of initial intervals $m$, approximation parameter $\varepsilon > 0$}
    \KwOut{Set $Q$ of size $s$}

    Let $a = \min(w)$, $b = \max(w)$ \\
    Initialize $\mathcal{I}$ as the $m-1$ equal intervals $\left[a + \frac{(b-a)(j-1)}{m-1},\; a + \frac{(b-a)j}{m-1}\right]$ for $j = 1, \ldots, m-1$ \\
    \For{each $I \in \mathcal{I}$ with $I \cap w \neq \emptyset$}{
        Add $\min\{w_i : w_i \in I\}$ and $\max\{w_i : w_i \in I\}$ to $x$
    }
    Sort $x$; set $\texttt{lo} = 0$ and $\texttt{hi} = (b - a)^2/4$ \\
    \While{$\texttt{hi} - \texttt{lo} > \varepsilon\cdot\texttt{hi}$}{
        \If{there exists some interval $I\in \calI$ with $\max\{w_i : w_i \in I\} - \min\{w_i : w_i \in I\} > \sqrt{\varepsilon \cdot \texttt{hi}}$}{
        Set $\calI$ to be the $m' = \left\lceil (b-a)/\sqrt{\varepsilon\cdot \texttt{hi}} \right\rceil$ equal-sized intervals of $[a,b]$.\\
        For each $I\in \calI$ with $I\cap w \neq \emptyset$, add $\min \{w_i : w_i \in I\}$ and $\max \{w_i : w_i \in I\}$ to $x$. \\
        Sort $x$.
        }
        Set $\texttt{mid} = (\texttt{lo} + \texttt{hi})/2$ \\
        Run Algorithm \ref{alg:faster-exact} on $x$ with variance parameter $\texttt{mid}$; let $s'$ be the size of the returned set \\
        \lIf{$s' > s$}{set $\texttt{lo} = \texttt{mid}$} \lElse{set $\texttt{hi} = \texttt{mid}$}
    }
    \Return{the set $Q$ constructed in the last call to Algorithm \ref{alg:faster-exact} which updated $\texttt{hi}$.}
\end{algorithm}

\begin{lemma}
    \label{lem:mdv-practical-qual}
    Let $Q$ be the set returned by Algorithm \ref{alg:mdv-practical} when run on vector $w\in \mathbb{R}^{d}$ and $s\in \mathbb{N}$.
    Then,
    \[
        \objo(w,Q) \leq (1+O(\varepsilon))\objo(w,s).
    \]
\end{lemma}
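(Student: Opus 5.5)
We need to show the returned $Q$ satisfies $\objo(w,Q)\le(1+O(\varepsilon))\objo(w,s)$. Write $v^{*}=\objo(w,s)$. The plan is to maintain two invariants of the binary search on the coreset $x$. \emph{Invariant A}: $\texttt{lo}\le \objo(x,s)$. \emph{Invariant B}: every set $Q$ produced by the call that last set $\texttt{hi}$ satisfies $|Q|\le s$ and $\objo(x,Q)\le \texttt{hi}$.

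First, A and B are consistent with the search. By Lemma~\ref{lem:improved-check}, each call to Algorithm~\ref{alg:faster-exact} on $x$ with parameter $\texttt{mid}$ returns a set of size exactly $\objos(x,\texttt{mid})$ whose value on $x$ is at most $\texttt{mid}$. Hence, when $s'>s$ we have $\objo(x,s)>\texttt{mid}$, and when $s'\le s$ the set is feasible with value at most $\texttt{mid}$. Since $x\subseteq w$, every coreset ever used satisfies $\objo(x,s)\le\objo(w,s)=v^{*}$. Rebuilding $x$ with finer intervals only adds points, so it only increases $\objo(x,s)$. This would break A only in the direction we do not need, and A is a lower bound on a quantity that increases, so A survives refinement.

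One subtlety is that $\texttt{hi}$ may have been certified on an older and coarser $x$. I would handle this by using the final $x$ only when evaluating quality. The returned $Q$ was certified on the coreset present at that call, and that coreset already met the refinement condition at the then-current $\texttt{hi}$, which is at least the final $\texttt{hi}$. The bound below only uses the interval length at the time of certification, so no issue arises. The initial $\texttt{hi}=(b-a)^2/4$ is achievable by $\{a,b\}$ when $s\ge 2$. A minor point is that if no call ever sets $\texttt{hi}$, we return $\{a,b\}$ padded.

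Second, transfer the bound to $w$ using Lemma~\ref{lem:approx-by-endpoints}. At the call that certified $Q$ with value $h=\texttt{mid}$, the refinement check guarantees that every interval's occupied span, from the min to the max of $w\cap I$, has length at most $\sqrt{\varepsilon \cdot \texttt{hi}_{\text{old}}}$. Here $\texttt{hi}_{\text{old}}\le 2h$, since $\texttt{mid}\ge \texttt{hi}/2$. These occupied spans cover $w$, and their endpoints are exactly the points added to $x$. Lemma~\ref{lem:approx-by-endpoints} then gives
\[
\objo(w,Q)\le \objo(x,Q)+\tfrac14\varepsilon\cdot 2h\le (1+\varepsilon/2)h .
\]

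Third, compare with $v^{*}$. At termination $\texttt{hi}-\texttt{lo}\le\varepsilon\,\texttt{hi}$, so $\texttt{hi}\le \texttt{lo}/(1-\varepsilon)\le \objo(x,s)/(1-\varepsilon)\le v^{*}/(1-\varepsilon)$, using A and $x\subseteq w$. The returned $Q$ has $\objo(x,Q)\le$ final $\texttt{hi}$, by B. Combining,
\[
\objo(w,Q)\le(1+\varepsilon/2)\,v^{*}/(1-\varepsilon)=(1+O(\varepsilon))v^{*}.
\]
The degenerate case $v^{*}=0$ needs a separate check. If there are at most $s$ distinct values, the loop would not terminate because $\texttt{lo}=0$ forces $\texttt{hi}\to 0$. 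The argument should either assume $v^{*}>0$ or add a preliminary distinct-count check.

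I expect the main obstacle to be the bookkeeping between refinements and certification. One must verify that the interval-length bound used in Lemma~\ref{lem:approx-by-endpoints} holds for the coreset present when $Q$ was produced, relative to that call's $\texttt{mid}$ rather than the final $\texttt{hi}$. Lemma~\ref{lem:approx-by-endpoints} is stated for intervals covering $w$ whose endpoints form the vector, and we use the occupied spans as those intervals. That the refinement threshold uses $\texttt{hi}$ instead of $\texttt{mid}$ costs only a factor of 2, which is absorbed into $O(\varepsilon)$.
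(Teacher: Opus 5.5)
Your proposal is correct and follows essentially the same argument as the paper. In both, \texttt{lo} stays a lower bound on $\objo(w,s)$ because every coreset $x$ is a subset of $w$, so the termination condition gives $\texttt{hi}\le \objo(w,s)/(1-\varepsilon)$, and Lemma~\ref{lem:approx-by-endpoints}, applied to the partition in force at the call that certified $Q$, turns $\objo(x,Q)\le\texttt{hi}$ into $\objo(w,Q)\le(1+O(\varepsilon))\,\texttt{hi}$. Your bookkeeping is tighter than the paper's in three places: the paper's final step bounds interval lengths by $\sqrt{\varepsilon\cdot\texttt{hi}}$ using the final \texttt{hi}, whereas the right bound at certification is your $\sqrt{2\varepsilon h}$; the paper tracks an extra invariant $\objo(w,s)\le(1+\varepsilon)\texttt{hi}$ that this bound does not need; and it tacitly assumes $\objo(w,s)>0$ and that \texttt{hi} is updated at least once. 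Your one imprecision is calling the endpoints of the current partition ``exactly'' $x$, since $x$ also keeps points from earlier, coarser partitions; this is harmless because adding points to $x$ can only increase $\objo(x,Q)$.
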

\newcommand{\lo}{\texttt{lo}}
\newcommand{\hi}{\texttt{hi}}
\newcommand{\mi}{\texttt{mid}}
\begin{proof}
    We first claim that throughout the entire iteration of the algorithm, $\lo < \objo(w,s) \leq (1+\varepsilon)\hi$.
    Initially, $\lo = 0$ and so $\objo(w,s) > 0$; in any iteration of the \texttt{while} loop, $\lo$ is increased to a value $v$ if and only if $\objo(x,s) > v$; since $x\subseteq w$, it follows that $\objo(w,s) \geq \objo(x,s) > \lo$ as well.

    For the other direction, by construction $\objo(w,s) \leq \hi$ at the beginning of the algorithm.
    At any other iteration, $\hi$ is updated to $\mi$ if and only if $\objo(x,s) \leq \mid$, where $x$ is the endpoint of the intervals $\calI$.
    By construction, each interval $I \in \calI$ has length at most $\sqrt{\varepsilon \cdot \hi} \leq \sqrt{\varepsilon \cdot 2\mi} < 2\sqrt{\varepsilon\cdot \mi}$, as $\mi = (\lo + \hi)/2 \geq \hi/2$.
    Thus, applying Lemma \ref{lem:approx-by-endpoints},
    \[
        \objo(w,s) \leq \objo(x,s) + \varepsilon\cdot \mi \leq \mi + \varepsilon \cdot \mi = (1+\varepsilon)\mi
    \]
    as we have $\objo(x,s)\leq \mi$.
    Thus, when updating $\hi$ to $\mi$, we maintain that $\objo(w,s)\leq (1+\varepsilon)\hi$.

    At the end of the algorithm, $\lo \geq (1-\varepsilon)\hi$, due to the termination condition of the \texttt{while} loop, and so $(1-\varepsilon)\hi \leq \objo(w,s) \leq (1+\varepsilon)\hi$.
    Moreover, the algorithm returns a set $Q$ such that $\objo(x,Q) \leq \hi$, where $x$ is the endpoint of a collection of intervals of length at most $\sqrt{\varepsilon \cdot \hi}$, so applying Lemma \ref{lem:approx-by-endpoints} again,
    \[
        \objo(w,Q)\leq \objo(x,Q) + \varepsilon\cdot\hi / 4 \leq (1+\varepsilon)\hi \leq (1+\varepsilon)^2\objo(w,s) < (1+2\varepsilon)\objo(w,s)
    \]
    as desired.
\end{proof}

\section{Deferred Proofs and Figures} \label{sec: missing-proofs}

\subsection{Deferred Proofs from Appendix \ref{sec: adv}}

\begin{proof}[Proof of Lemma \ref{lem: endpoints-in-quant-set}]
    First, note that Adaptive Stochastic Quantization distribution is only well-defined if both $\wdown{1}(Q)$ and $\wup{1}(Q)$ exist. We show that if $\{ \wdown{1}(Q), \wup{1}(Q)\} \neq \{w_1, w_d\}$, then $\objt_{\calX}(w, Q) > \objt_{\calX}(w, s)$. First, suppose that $\wdown{1}(Q) \neq w_1$. Then, consider quantization set $Q':= (Q \setminus \{ \wdown{1}(Q) \}) \cup \{ w_1 \}$ and observe that 
    \begin{align*}
        \objt_{\calX}(w, Q') &= \sum_{i= 1}^{d} \lambda_i (\wiup(Q') - w_i)(w_i - \widown(Q')) \\
        &= \sum_{i=2}^{d} \lambda_i (\wiup(Q') - w_i)(w_i - \widown(Q')) \tag{$\wdown{1}(Q') = w_1$} \\
        &\leq \sum_{i=2}^{d} \lambda_i (\wiup(Q) - w_i)(w_i - \widown(Q)) \\
        &\leq \objt_{\calX}(w, Q)
    \end{align*}
    Where the first inequality follows because $\wiup(Q) = \wiup(Q')$ and $\widown(Q') \geq \widown(Q)$. A symmetric argument can be used to show that $\objt_{\calX}(w, Q) > \objt_{\calX}(w, s)$ if $\wdown{1}(Q) \neq w_d$. Thus, any quantization set not containing $w_1$ and $w_d$ must be suboptimal with respect to $\objt_{\calX}$. 
\end{proof}

\begin{proof}[Proof of Lemma \ref{lem: quant-set-inside-w}]
    Consider an optimal quantization set $Q = \{q_1, \ldots, q_s\} \not\subseteq w$ such that $q_1 \leq \ldots \leq q_s$ and $\objt_{\calX}(w, Q) = \objt_{\calX}(w, s)$. The argument proceeds by iteratively transforming $Q$, while leaving its objective value unchanged. 

    Consider $q_i \in Q$ such that $q_i \notin w$. Define $\qidown := \max \{ w_j : w_j \leq q_i \}$ and analogously $\qiup := \min \{ w_j : w_j \geq q_i\}$. We first show that $q_{i-1} < \qidown$ and $q_{i+1} > \qiup$. Suppose that $q_{i-1}\geq \qidown$ and $q_{i+1} > \qiup$. Then, quantization set $Q' := (Q \setminus \{ q_i \}) \cup \{ \qiup \}$ has $\objt_{\calX}(w, Q') < \objt_{\calX}(w, Q)$, contradicting the optimality of $Q$. A symmetric argument contradicts the optimality of $Q$ when $q_{i-1} < \qidown$ and $q_{i+1} \leq \qiup$. When both $q_{i-1} \geq \qidown$ and $q_{i+1} \leq \qiup$, the quantization point $q_i$ is never rounded to by ASQ on $w$, therefore quantization set $Q' := (Q \setminus\{ q_i\}) \cup \{ w_{i'} \}$, where $w_{i'} \notin Q$, has $\objt_{\calX}(w, Q') < \objt_{\calX}(w, Q)$. Thus, for the remainder of the proof, we assume that $q_{i-1} < \qidown$ and $q_{i+1} > \qiup$. 

    We construct quantization set $Q_1 := (Q \setminus \{q_i\}) \cup \{ \qidown\}$. Then, 
    \begin{align*}
        \objt_{\calX}(w, Q_1) - \objt_{\calX}(w, Q) &= \sum_{w_j \in [q_{i-1}, \qidown]} \lambda_j (w_j - q_{i-1}) \cdot \left( \qidown - q_i \right) \\
                                                    &\quad \quad \quad \quad + \sum_{w_k \in [\qidown, q_{i+1}]} \lambda_j (q_{i+1} - w_k) \cdot \left( q_i -\qidown \right) \\
        &= (q_i - \qidown) \cdot \left[ \sum_{w_k \in [\qidown, q_{i+1}]} \lambda_j (q_{i+1} - w_k) - \sum_{w_j \in [q_{i-1}, \qidown]} \lambda_j (w_j - q_{i-1}) \right] \\
        &= (q_i - \qidown) \cdot \gamma
    \end{align*}
    Analogously, we construct $Q_2 := (Q \setminus \{q_i\}) \cup \{ \qiup\}$. Then, 
    \begin{align*}
        \objt_{\calX}(w, Q_2) - \objt_{\calX}(w, Q) &= \sum_{w_j \in [q_{i-1}, \qiup]} \lambda_j (w_j - q_{i-1}) \cdot \left( \qiup - q_i \right) \\
                                                    &\quad \quad \quad \quad + \sum_{w_k \in [\qiup, q_{i+1}]} \lambda_j (q_{i+1} - w_k) \cdot \left( q_i -\qiup \right) \\
        &= (\qiup - q_i) \cdot \left[ \sum_{w_j \in [q_{i-1}, \qiup]} \lambda_j (w_j - q_{i-1}) - \sum_{w_k \in [\qiup, q_{i+1}]} \lambda_j (q_{i+1} - w_k) \right] \\
        &= - (\qiup - q_i) \cdot \gamma
    \end{align*}
    Because it is assumed that $\objt_{\calX}(w, Q) = \objt_{\calX}(w, s)$, it must be the case that both $\objt_{\calX}(w, Q_1) - \objt_{\calX}(w, Q) \geq 0$ and $\objt_{\calX}(w, Q_2) - \objt_{\calX}(w, Q) \geq 0$. But $(q_i - \qidown) > 0$ and $(\qiup - q_i) > 0$, so it must be the case that $\gamma =0$. Therefore, $\objt_{\calX}(w, Q_1) = \objt_{\calX}(w, Q_2) = \objt_{\calX}(w, Q)$. Iteratively replacing each $q_i \notin w$ from $Q$ in this way then proves the claim. 

\end{proof}

\begin{proof}[Proof of Lemma \ref{lem: C-totally-monotone}]
    We prove that $C$ satisfies the Quadrangle Inequality (Definition \ref{def: quadrangle-inequality}) Note that for $i \in [a, b]$, we have
    \begin{align*}
        (w_c - w_i)(w_i - w_a) \leq (w_f- w_i)(w_i - w_a) \tag{1}
    \end{align*}
    For $i \in [c, f]$, we have
    \begin{align*}
        (w_f - w_i)(w_i - w_b) \leq (w_f - w_i)(w_i - w_a) \tag{2}
    \end{align*}
    For $i \in [b, c]$, we have
    \begin{align*}
        (w_c - w_i)&(w_i - w_a) + (w_f - w_i)(w_i - w_b) \\
        &= (w_c - w_i)(w_i - w_b) + (w_f - w_i)(w_i - w_a) + (w_a - w_b)(w_f - w_c) \\
        &\leq (w_c - w_i)(w_i - w_b) + (w_f - w_i)(w_i - w_a) \tag{3}
    \end{align*}

    Thus, we get that
    \begin{align*}
        C[a,c] + C[b, f] &= \sum_{i \in [a,c]} \lambda_i (w_c - w_i)(w_i - w_a) + \sum_{i \in [b,d]} \lambda_i (w_f - w_i) (w_i - w_b) \\
        &= \sum_{i \in [a,b]} \lambda_i (w_c - w_i)(w_i - w_a) + \sum_{i \in [c,d]} \lambda_i (w_f - w_i) (w_i - w_b) \\
        &\quad \quad \quad \quad + \sum_{i \in [b, c]} \lambda_i [ (w_c - w_i)(w_i - w_b) + (w_f - w_i)(w_i - w_a) ] \\
        &\leq \sum_{i \in [a,b]} \lambda_i (w_f - w_i)(w_i - w_a) + \sum_{i \in [c,d]} \lambda_i (w_f - w_i) (w_i - w_a) \\
        &\quad \quad \quad \quad + \sum_{i \in [b, c]} \lambda_i \left[ (w_c - w_i)(w_i - w_b) + (w_f - w_i)(w_i - w_a) \right] \\
        &= \sum_{i \in [a,d]} \lambda_i (w_f - w_i)(w_i - w_a) + \sum_{i \in [b, c]} \lambda_i (w_c - w_i)(w_i - w_b) = C[a,d] + C[b,c]
    \end{align*}
\end{proof}

\begin{proof}[Proof of Lemma \ref{lem: C-sorted}]
    We first show that the rows of $C$ are sorted. Consider a fixed row $i \in [d]$ and let $j, k \in [d]$ such that $i < j < k$. Then,
    \begin{align*}
        C[i, j] = \sum_{\ell=i}^{j} \lambda_{\ell} (w_j - w_{\ell}) (w_{\ell} - w_i) \leq \sum_{\ell=i}^{k} \lambda_{\ell} (w_k - w_{\ell}) (w_{\ell} - w_i) = C[i,k]
    \end{align*}
    Similarly, for a fixed column $j \in [d]$ and rows $i < k$ we have
    \begin{align*}
        C[i, j] = \sum_{\ell=i}^{j} \lambda_{\ell} (w_j - w_{\ell}) (w_{\ell} - w_i) \geq \sum_{\ell=k}^{j} \lambda_{\ell} (w_j - w_{\ell}) (w_{\ell} - w_k) = C[k, j]
    \end{align*}
    This proves that matrix $C$ has sorted rows and columns.
\end{proof}

\subsection{Deferred Table from Section \ref{sec: motivation}}

\begin{table}[H]
    \caption{Recall@100 of maximum inner product and $\ell_2$ search on vectors from GloVe 300D \cite{carlson2025newpairgloves}. The dataset consists of 900,000 randomly selected GloVe vectors, each query is sampled from 100,000 (disjoint from dataset) vectors, and each method is tested on 10,000 sampled queries. We benchmark against the FAISS implementation of Product Quantization (PQ) \cite{Matsui2018}; each method uses an average of 4-bits per coordinate.}
    \label{table: vector-search}
    \centering
    
   \begin{tabular}{llllll}
        \toprule
        \textbf{Task} & \textbf{Method} & {\textbf{Average}} & {\textbf{Worst 1\% }} & {\textbf{Worst 0.1\%}} & {\textbf{Worst}} \\
        \midrule
        Max-IP    & PQ      & 0.8535 & 0.7600 & 0.6600 & 0.4500 \\
                  & $\objt$   & 0.8751 & 0.8000 & 0.7700 & 0.7400 \\
        \addlinespace 
        $\ell_2$ & PQ      & 0.8927 & 0.7800 & 0.7200 & 0.6600 \\
                 & $\objt$ & 0.8907 & 0.7700 & 0.7200 & 0.6400 \\
        \bottomrule
    \end{tabular}

   \medskip

\end{table}

\end{document}